\def\sqrtalphadef{\sqrt{\beta_{tk}} + \sqrt{k}\IBE + \sqrt{\lambda}\Radius_t}
\def\RecursionExpansion{\overline \xi_{t} + \mathring \theta_t(\Qbar_{t+1}) + \Sigma^{-1}_{tk}\sum_{i=1}^{k-1} \phi_{ti}^\top 
	\mathring \Delta_{t}(\Qbar_{t+1})(s_{ti},a_{ti}) -\lambda\Sigma^{-1}_{tk}\mathring \theta_t(\Qbar_{t+1}) + \Sigma^{-1}_{tk}\sum_{i=1}^{k-1} \phi_{ti} \eta_{ti}(\Vbar_{t+1}).
}
\def\Radius{\mathcal R}
\newcommand{\fullref}[1]{\cref{#1}}
\def\Alg{\ensuremath{\textsc{Eleanor}}}
\def\dotzetadef{\Big[ \E_{s' \sim p_t(s_{tk},a_{tk})}\(\Vbar_{t+1,k}-V^{\pi_k}_{t+1}\) (s') - \(\Vbar_{t+1,k}-V^{\pi_k}_{t+1}\) (s_{t+1,k}) \Big]\1\( \overline F_k \)}
\def\sqrtbetadef{\sqrt{d_t\ln\( 1 + \Lphi^2 k/d_t\) + 2d_{t+1}\ln(1+4\Radius_t\Lphi\sqrt{k}) + \ln\( \frac{1}{\delta'}\)} + 1}
\def\Lphi{L_\phi}
\def\thetastar{\theta^\star}
\def\IBE{\mathcal I}
\icmltitlerunning{Learning Near Optimal Policies with Low Inherent Bellman Error}
\begin{document}

\twocolumn[
\icmltitle{Learning Near Optimal Policies with Low Inherent Bellman Error}



\icmlsetsymbol{equal}{*}

\begin{icmlauthorlist}
\icmlauthor{Andrea Zanette}{SU}
\icmlauthor{Alessandro Lazaric}{FAIR}
\icmlauthor{Mykel Kochenderfer}{SU}
\icmlauthor{Emma Brunskill}{SU}
\end{icmlauthorlist}

\icmlaffiliation{SU}{Stanford University}
\icmlaffiliation{FAIR}{Facebook Artificial Intelligence Research}

\icmlcorrespondingauthor{Andrea Zanette}{zanette@stanford.edu}

\icmlkeywords{Machine Learning, ICML}

\vskip 0.3in
]



\printAffiliationsAndNotice{}

\begin{abstract}
We study the exploration problem with approximate linear action-value functions in episodic reinforcement learning under the notion of low inherent Bellman error, a condition normally employed to show convergence of approximate value iteration. First we relate this condition to other common frameworks and show that it is strictly more general than the low rank (or linear) MDP assumption of prior work. Second we provide an algorithm with a high probability regret bound $\widetilde O(\sum_{t=1}^H d_t  \sqrt{K} + \sum_{t=1}^H \sqrt{d_t} \IBE K)$ where $H$ is the horizon, $K$ is the number of episodes, $\IBE$ is the value if the inherent Bellman error and $d_t$ is the feature dimension at timestep $t$. In addition, we show that the result is unimprovable beyond constants and logs by showing a matching lower bound. This has two important consequences: 1) it shows that exploration is possible using only \emph{batch assumptions} with an algorithm that achieves the optimal statistical rate for the setting we consider, which is more general than prior work on low-rank MDPs 2) the lack of closedness (measured by the inherent Bellman error) is only amplified by $\sqrt{d_t}$ despite working in the online setting. Finally, the algorithm reduces to the celebrated \textsc{LinUCB} when $H=1$ but with a different choice of the exploration parameter that allows handling misspecified contextual linear bandits. While computational tractability questions remain open for the MDP setting, this enriches the class of MDPs with a linear representation for the action-value function where statistically efficient reinforcement learning is possible. 
\end{abstract}

\section{Introduction}

Improving the sample efficiency of reinforcement learning (RL) algorithms through effective exploration-exploitation strategies is a major focus of the recent theoretical literature. 
Strong results are available with a generative model \cite{Azar12,Sidford18,agarwal2019optimality,zanette2019b} as well as in the \emph{online} setting when the learning performance is measured by the cumulative regret, i.e., the difference between the performance of the optimal policy and the reward accumulated by the learner. For finite horizon problems, UCBVI~\cite{Azar17} achieves worst-case optimal regret, while algorithms with domain adaptive bounds have been introduced by~\cite{zanette2019tighter} and \cite{simchowitz2019non}. Randomized~\cite{russo2019worst} and model-free~\cite{jin2018q} variants have also been proposed, together with methods with other beneficial properties~\cite{dann2019policy,efroni2019tight}. Similar results are also available in the infinite horizon setting~\cite{Jaksch10,Maillard14,Fruit18,zhang2019regret,tossou2019near}.

\textbf{Approximate dynamic programming.}
While the results for tabular settings are encouraging, function approximation is normally required to tackle problems where the state or action spaces may be intractably large. In this case, even when the Bellman operator can be applied exactly, simple dynamic programming algorithms coupled with linear architectures may diverge~\cite{baird1995residual,tsitsiklis1996feature}, thus suggesting that effective approximate RL may not be feasible in the general case. 

Convergence guarantees \cite{lagoudakis2003least} and finite-sample analyses \cite{lazaric2012finite} are available for the least-squares policy improvement (LSPI) algorithm under the assumption that the value function of \emph{all policies can be well approximated} within the chosen function class (\emph{LSPI conditions}, for short). 
For concreteness, let $\epsilon$ be the worst-case misspecification error of a $d$-dimensional linear function approximator over the policy action-value functions (i.e., for any policy $\pi$, there exists an approximation $\widehat Q^\pi$ such that $\|\widehat Q^\pi - Q^\pi\|\leq \epsilon$). Recently, 
\cite{du2019good} showed that when using highly misspecified approximators $\epsilon \gtrapprox 1/\sqrt{d}$ the worst-case sample complexity may be exponential in $d$. At the same time, when $\epsilon \lessapprox 1/\sqrt{d}$, \cite{van2019comments} and \cite{lattimore2020learning} showed algorithms with $\sqrt{d}$ loss times the misspecification level $\epsilon$. In particular, \cite{lattimore2020learning} showed that LSPI attains polynomial sample complexity using $G$-optimal design with a $\approx \sqrt{d}\epsilon$ additive error using a \emph{generative model}.

Similarly, for the least-squares value iteration algorithm (LSVI) convergence guarantees \cite{munos2005error} and finite sample analysis \cite{munos2008finite} are also available under the assumption of \textit{low inherent Bellman error (IBE)}, (\emph{LSVI conditions}, for short). Given a function class $\mathcal F$, the IBE measures the error in approximating the image of any function in $\mathcal F$ through the Bellman operator. Whenever the IBE is not small, it is easy to show that approximation errors may be amplified by a constant factor at each application of the Bellman operator, leading to divergence. Although methods exist to limit this amplification of errors~\cite{zanette19limiting,kolter2011fixed}, the question of when sample-efficient value-based RL is possible remains open even in the absence of misspecification.

In this paper we focus on the problem of exploration-exploitation using LSVI approaches in settings with low IBE. We make several contributions.

\textbf{Exploration with low inherent Bellman error.}
We first show that the notion of inherent Bellman error is distinct from the LSPI condition, and more general than the low-rank assumption on the dynamics used in a series of recent works on exploration with linear function approximation~\cite{yang2020reinforcement,jin2020provably,zanette2020frequentist}.  
For a finite horizon MDP, when the LSVI conditions are satisfied either exactly or approximately (i.e., the inherent Bellman error is either zero or small) we propose \emph{Efficient Linear Exploration of Actions by Nonlinear Optimization of the Residuals} (\Alg{}), an optimistic generalization of the popular LSVI algorithm. We analyze \Alg{} and derive the first regret bound for this setting and show it is unimprovable in terms of statistical rates, though we leave its computational tractability open.

Our analysis shows that the performance of \Alg{} degrades gracefully in the case of positive inherent Bellman error. Interestingly, we recover a similar $\sqrt{d}$ amplification of the misspecification error (the IBE in our case) as for LSPI \cite{lattimore2020learning} , despite the fact that we consider the more challenging online setting as opposed to the generative model  by~\citet{lattimore2020learning}.  

\textbf{Low-rank MDPs and contextual misspecified linear bandits.} Our result applies to low-rank MDPs and improves upon the best-known regret bound for that setting \cite{jin2020provably} by a $\sqrt{d}$ factor. When applied to contextual linear bandits, our algorithm reduces to the celebrated \textsc{LinUCB} (or \textsc{Oful}) algorithm of \cite{Abbasi11}.
In addition, however, it \emph{can handle contextual misspecified linear bandits while retaining computationally tractability}, making this the first algorithm and analysis for this setting, although we require knowledge of the misspecification level. A similar result was recently derived for a different algorithm based on $G$-experimental design  \cite{lattimore2020learning} for the more restrictive setting of non-contextual (i.e., with features not depending on the state and fixed action space) misspecified linear bandits; however, their approach is agnostic to the misspecification level.

\textbf{Core ideas.}  
LSVI-based algorithms have been successfully analyzed for low-rank MDPs \cite{jin2020provably} by adding exploration bonuses at every experienced state, thereby ensuring optimism by backward induction.
In contrast, our more general setting demands that the value function stays linear, ruling out approaches based on exploration bonuses. In fact, if the value function used for backup is not linear, low inherent Bellman error does not provide any guarantee about how errors may propagate, which can be exponential in the general case \cite{zanette19limiting}.

Our proposal extends the LSVI algorithm to return an optimistic solution at the initial state through \emph{global} optimization over the value function parameters, while still enforcing linearity of the representation. This has two advantages: 1) (\emph{handling of the bias}) it enables us to use the concept of inherent Bellman error, requiring that the Bellman operator be applied to \emph{linear} action-value functions and avoiding a $\sqrt{d}$ amplification of the value function error at every step \cite{zanette19limiting}; 2) (\emph{handling of the variance}) it keeps the complexity of the action-value functional space small (linear), enabling the use of confidence intervals that are as tight as those used in the bandit literature, yielding the optimal finite-sample statistical rate.

\section{Notation}
We consider an undiscounted finite-horizon MDP~\cite{puterman1994markov} $M = (\mathcal{S}, \mathcal{A},p, r, H)$ with state space $\mathcal{S}$, action space $\mathcal{A}$, and horizon length $H \in\mathbb{N}^+$.
For every $t \in [H] \defeq \{1, \ldots, H\}$, every state-action pair is characterized by an expected reward $r_t(s,a)$ with an associated reward random variable $R_t(s,a)$ and a transition kernel $p_t(\cdot \mid s,a)$ over next state.
We assume $\mathcal{S}$ to be a measurable, possibly infinite, space and $\mathcal{A}$ can be any (compact) time and state dependent set (we omit this dependency for brevity).
For any $t \in [H]$ and $(s,a) \in \mathcal{S} \times \mathcal{A}$, the state-action value function of a non-stationary policy $\pi = (\pi_1, \ldots, \pi_H)$ is defined as $ Q^{\pi}_{t}(s,a) = r_{t}(s,a) + \mathbb{E} \left[ \sum_{l = t+1}^{H} r_{l}(s_{l}, \pi_{l}(s_l))\mid s,a \right]$
and the value function is $V^{\pi}_t(s) = Q^{\pi}_t(s, \pi_t(s))$.
Since the horizon is finite, under some regularity conditions, e.g.,~\cite{shreve1978alternative}, there always exists an optimal policy $\pi^\star$ whose value and action-value functions are defined as $\Vstar_t(s) \defeq V^{\pi^\star}_t(s) = \sup_{\pi} V^{\pi}_t(s)$ and $\Qstar_t(s,a) \defeq Q^{\pi^\star}_t(s,a) = \sup_{\pi} Q^{\pi}_t(s,a)$.

The value iteration (or backward induction) algorithm \cite{sutton2018reinforcement} computes $\pistar$ and $\Vstar$ as follows: it starts from $V^\star_{H+1}(s)=0$ for all $s\in\mathcal{S}$ and it computes $Q^\star_t$ using the Bellman equation in each state-action pair recursively from $t=H$ down to $1$ and it returns the optimal policy $\pi^\star_t(s) = \arg\max_a Q^\star_t(s,a)$. In particular, the Bellman operator $\T_t$ applied to  $Q_{t+1}$ is defined as
\begin{align*}
\T_t(Q_{t+1})(s,a) = r_t(s,a) + \E_{s' \sim p_t(s,a)} \max_{a'}Q_{t+1}(s',a').
\end{align*}

\section{Linear Value Function Frameworks}

In this section we introduce basic notation and assumptions for linear function approximation, we define the concept of inherent Bellman error, and we investigate connections with alternative settings.

Whenever the state space $\mathcal{S}$ is too large or continuous, value functions cannot be represented by enumerating their values at each state or state-action pair. A common approach is to define a feature map $\phi_t: \StateSpace  \times\ActionSpace \rightarrow \R^{d_t}$, possibly different at any $ t \in [H]$, embedding each state-action pair $(s,a)$ into a $d_t$-dimensional vector $\phi_t(s,a)$. The action-value functions are then represented as a linear combination between the features $\phi_t$ and a vector parameter $\theta_t\in\R^{d_t}$, such that $Q_t(s,a) = \phi_t(s,a)^\top \theta_t$. This effectively reduces the complexity of the problem from $|\StateSpace \times \ActionSpace|$ down to $d_t$. 

We define the space of parameters $\theta$ inducing uniformly bounded action-value functions
\begin{align}
\label{eqn:Bt}
\mathcal B_t \defeq \{\theta_t \in \R^{d_t} \mid |\phi_t(s,a)^\top \theta_t | \leq D, \forall (s,a) \}.
\end{align}
We will later require the constant $D\in\R$ to be chosen to satisfy Asm.~\ref{ass:MainAssumption}. For instance, $D = 1$ requires the value function to be in $[-1,+1]$ and complies with the assumption.

Each parameter $\theta$ identifies an (action) value function 
$$Q_t(\theta_t)(s,a) = \phi_t(s,a)^\top \theta_t, \quad V_t(\theta_t) = \max_{a}\phi_t(s,a)^\top \theta_t$$
and the associated functional spaces
\begin{align}
\label{eqn:Qt}
\mathcal Q_t & \defeq \{Q_t(\theta_t) \mid \theta_t \in \mathcal B_t \}, \; \mathcal V_t \defeq \{ V_t(\theta_{t}) \mid    \theta_t \in \mathcal B_t \}.
\end{align}

\textbf{Inherent Bellman error.}
The value iteration algorithm can be used to compute an optimal policy \cite{sutton2018reinforcement} and it smoothly extends to linear approximators. The procedure repeatedly applies the Bellman operator $\T_t$ to an action-value function\footnote{One can reason with either the value function $V$ or the action-value function $Q$.} $Q_t\in \mathcal Q_t$ and projects the computed point $\T_t Q_t$ back to $\mathcal Q_{t+1}$ using a (e.g., least-squares) projection operator $\Pi_t$. The projection error is precisely the inherent Bellman error, which can be thought of as how \textit{close} the space $\mathcal Q_t$ is w.r.t.\ the Bellman operator $\T_t$.

\begin{definition}
\label{def:InherentBellmanError}
The inherent Bellman error\footnote{A different definition, more suitable for  generative models with stationary policies using a $p$-norm induced by the sampling distribution is provided by \cite{munos2008finite}.} of an MDP with a linear feature representation $\phi$ is denoted with $\IBE$ and is the maximum over the timesteps $t\in[H]$ of 
	\begin{align*}
	\sup_{\theta_{t+1} \in \mathcal B_{t+1}}\inf_{\theta_{t} \in \mathcal B_{t}} \sup_{(s,a)\in \StateSpace\times\ActionSpace}|\phi_t(s,a)^\top \theta_t \\
	- \(\T_t Q_{t+1}(\theta_{t+1})\)(s,a)|.
	\end{align*}
\end{definition}
Our definition of inherent Bellman error is \emph{natural} in the sense that it is defined with respect to the linear action-value function class without additional clipping if the value function exceeds a prescribed threshold and is not enlarged to incorporate exploration bonuses (see e.g., \cite{wang2019optimism}). Alternative definitions may enlarge the underlying functional space in an artificial, non linear, possibly algorithm-dependent way, and result in a much more restrictive definition of inherent Bellman error. We notice that while our definition is less restrictive, it rules out traditional forms of exploration based on \emph{adding exploration bonuses}, making it harder to design effective exploration strategies.

\textbf{Properties.} We discuss the properties of MDPs with $\IBE = 0$. An immediate consequence of~def.~\ref{def:InherentBellmanError} is that when $\IBE = 0$ the reward function is linear, and so is the transition kernel \emph{when applied to elements of} $\mathcal V_{t+1}$.

\begin{restatable*}[Linearity of Rewards and Restricted Linearity of Transitions]{prop}{linearity}
\label{prop:linearity}
Given an MDP and a linear feature representation with $\mathcal B_t = \R^{d_t}$ and inherent Bellman error $\IBE = 0$ we have that the rewards are linear in the sense that:
\begin{align*} 
\inf_{\theta^R_{t} \in \mathcal B_{t}} \sup_{(s,a)\in \StateSpace\times\ActionSpace}|r_t(s,a) - \phi_t(s,a)^\top \theta^R_t| = 0
\end{align*}
and the transition have a linear effect on members of $\mathcal V_{t+1}$
	\begin{align*}
		 \sup_{\theta_{t+1} \in \mathcal B_{t+1}} \inf_{\theta^P_{t} \in \mathcal B_{t}} \sup_{(s,a)\in \StateSpace\times\ActionSpace}| \E_{s'\sim p_t(s,a)} V_{t+1}(\theta_{t+1})(s') \\
		 - \phi_t(s,a)^\top  \theta_t^P| = 0.
\end{align*}
\end{restatable*}

If $\IBE = 0$, the application of the Bellman operator $\T_t$ to members of $\mathcal Q_{t+1}$ always produces a member of $\mathcal Q_{t}$, i.e., $\T_t \mathcal Q_{t+1} \subseteq \mathcal Q_t$. 
From here, we can immediately see that the zero inherent Bellman error assumption is more general than low-rank MDPs  \cite{yang2020reinforcement,jin2020provably,zanette2020frequentist}. Indeed, in low-rank MDPs the Bellman operator returns a function in the range of the features (i.e., in $\mathcal Q_t$) \emph{regardless of value function $Q_{t+1}$}, while problems with zero inherent Bellman error are only required to map elements of $\mathcal Q_{t+1}$ to $\mathcal Q_{t}$, and are thus more general approximators.

\begin{restatable*}[Low Rank $\subseteq$ LSVI Conditions]{prop}{LowrankVsNOIBE}
\label{prop:LowrankVsNOIBE}
Let $\mathcal B_t = \R^{d_t}$, and consider an MDP with associated linear feature representation $\phi$. If the MDP is a low rank (or linear) MDP, i.e., for a parameter $\theta^R_t\in \R^{d_t}$ and a measure function\footnote{a positive function such that $\| \Psi_t\|_{TV} = 1$} $\psi_t(\cdot)$:
\begin{align*}
	\forall (s,a,t,s'), \quad r_t(s,a) & = \phi_t(s,a)^\top\theta^R_t   \\
	p_t(s'\mid s,a) & = \phi_t(s,a)^\top\psi_t(s') \\
\numberthis{\label{eqn:LinearMDPequations}}
\end{align*}
then $\IBE = 0$. However, the converse does not hold, i.e., there exists an MDP and a linear feature extractor $\phi$ with $\IBE = 0$ which is not a linear MDP in the sense of \cref{eqn:LinearMDPequations}.
\end{restatable*}

Another assumption often made on the approximation space is that the action-value functions for \emph{all policies} do belong to $\mathcal Q_t$ (LSPI condition), a condition normally employed to show convergence of LSPI~\cite{lagoudakis2003least}. This assumption is also strictly less restrictive than low-rank (see also \cite{jin2020provably} for a claim in one direction).

\begin{restatable*}[Low Rank $\subseteq$ LSPI Conditions]{prop}{LowrankVsLP}
\label{prop:LowrankVsLP}
If a given MDP is low rank in the sense of  \cref{eqn:LinearMDPequations} then the value function of all policies admit a linear parameterization: 
\begin{align*}
	\forall \pi, \; \forall t \in [H], \; \exists \theta_t^\pi \quad \text{such that} \quad Q^{\pi}_t(s,a) = \phi_t(s,a)^\top\theta_t^\pi.
\end{align*}
However, there exists an MPD and a linear approximator with feature extractor $\phi$  which satisfies the above display but there exists no $\psi_t$ such that \cref{eqn:LinearMDPequations} holds.
\end{restatable*}

One may wonder what is the relation between MDPs with no inherent Bellman error and MDPs where all action-value function for all policies are linear, i.e., the LSVI and LSPI conditions. These are two very distinct assumptions: the former deals with policies \emph{that are optimal with respect to a parameter}, while the latter deals with arbitrary policies. Conversely, the latter deals with the $Q$ values that actually corresponds to $Q$ values of policies, while the former measures the error with respect to any function in the class.

\begin{restatable*}[LSVI Conditions $\neq$ LSPI Conditions]{prop}{NOIBEvsLP}
\label{prop:NOIBEvsLP}
	There exists an MDP and a linear representation with feature extractor $\phi$ with $\IBE = 0$
and yet the policies are not linearly parameterizable in the sense that:
\begin{align*}
	\exists \pi, \exists t \in [H], \not \exists \theta_t^\pi\in\R^{d_t} \quad \text{s.t.} \quad Q_t^{\pi} =  \phi_t(s,a)^\top \theta_t.
\end{align*}

Vice-versa, there exists an MDP and a feature representation such that all action-value functions of all policies admit a linear parameterization: 
\begin{align*}
	\forall \pi, \forall t\in[H], \; \exists \theta_t^\pi \quad \text{that satisfies} \quad Q^{\pi}_t(s,a) = \phi_t(s,a)^\top\theta_t^\pi
\end{align*}
and yet the inherent Bellman is non-zero: $
	\IBE > 0$.
\end{restatable*}

The final connection we make is with settings with \emph{low Bellman rank}, see \cite{jiang17contextual}. It is possible to show that if the LSVI conditions are satisfied, the Bellman rank is at most $d$, where $d$ is the dimensionality of the features. However, no statistically efficient algorithm exists for this setting, because \textsc{Olive} from \cite{jiang17contextual} has an explicit dependence on the size of the action space, which can be very large or infinite in the setting we consider here.

\section{Algorithm}
We consider the standard online learning protocol in finite-horizon problems, where at each episode $k$, the learner executes a policy $\pi_k$, records the samples in the trajectory, updates the policy and reiterates over the next episode. We first recall the standard LSVI. At the beginning of episode $k$, consider timestep $t$ and assume the next-step parameter is fixed and equal to $\theta_{t+1}$. The objective function of the regularized least-square is
\begin{align}
\label{eqn:RegObj}
\sum_{i=1}^{k-1} \(\phi_{ti}^\top \theta - r_{ti} - V_{t+1}(\theta_{t+1})(s_{t+1,i})\)^2 + \lambda \| \theta \|^2_2
\end{align} where $\{\phi_{ti}\}_{i=1,\dots,k-1}$ are the features observed at timestep $t$ in state $s_{ti}$ and $r_{ti}$ are the corresponding rewards. For any $\lambda > 0$ the prior display has a closed-form solution
\begin{align}
	  \widehat \theta_{t} = \Sigma^{-1}_{tk}\sum_{i=1}^{k-1} \phi_{ti} \Big[ r_{ti} + V_{t+1}(\theta_{t+1})(s_{t+1,i})\Big]
\end{align}
with 
$
	\Sigma_{tk} \defeq \sum_{i=1}^{k-1}\phi_{ti}\phi_{ti}^\top + \lambda I
$
as the empirical covariance.

We introduce an optimistic variant of LSVI, where the optimistic parameters are chosen by solving a global optimization problem across the whole horizon $H$. At each episode, \Alg{} (in Alg.~\ref{alg:AlgoLabel}) solves the following problem.

\begin{definition}[Planning Optimization Program]
	\label{def:PlanningOptimizationProgram}
	\begin{align*}
	& \max_{\substack{(\overline \xi_1,\dots,\overline \xi_H) \\ (\widehat \theta_1,\dots,\widehat \theta_H) \\
			(\overline \theta_1,\dots,\overline \theta_H)
	}} \quad \max_a \phi_1(s_{1k},a)^\top \overline \theta_{1}  \quad \textrm{subject to}  \\
	&  \widehat \theta_{t} =  \Sigma_{tk}^{-1}\sum_{i=1}^{k-1}\phi_{ti}^\top \(r_{ti} + V_{t+1}(\overline\theta_{t+1}) (s_{t+1,i})\) \\
	& \overline \theta_{t} = \widehat \theta_{t} + \overline \xi_{t}; \quad  \|\overline \xi_{t}\|_{\Sigma_{tk}} \leq \sqrt{\alpha_{tk}}; \quad  \overline \theta_{t} \in \mathcal B_t 
	\end{align*}
\end{definition}

As we will show in the technical analysis, a feasible solution $(\thetastar_1,\dots,\thetastar_H)$, corresponding to the best approximator (in \cref{eqn:main.thetastar}) always exists and so the program is well posed.

The least-square solution $\widehat \theta_t$ is used as a constraint and perturbed by 
adding a vector $\overline \xi_{t}$ as optimization variable,\footnote{We add the subscript $k$ later to indicate the actual variable chosen by the optimization procedure in episode $k$.} subject to
\begin{align}
\label{eqn:main.alpha}
	\| \overline \xi_{t} \|_{\Sigma_{tk}} \leq \sqrt{\alpha_{tk}} := \underbrace{\sqrt{\beta_{tk}}}_{\text{noise}} + \underbrace{\vphantom{\sqrt{\beta_{tk}}} \sqrt{\lambda}\Radius_t}_{\text{regularization}} + \underbrace{\vphantom{\sqrt{\beta_{tk}}}\sqrt{k}\IBE,}_{\text{misspec.}}
\end{align}
where $\alpha_{tk}$ is designed to account for the noise, misspecification, and regularization bias. The actual bound is a function of the allowable radius $\Radius \leq \sqrt{d_t}$ for the parameter (as in assumption \ref{ass:MainAssumption}) and the noise parameter $\sqrt{\beta_{tk}} = \widetilde O(\sqrt{d_t})$ stems from self-normalizing concentration inequalities as described in the technical analysis later, while $\IBE$ is the inherent Bellman error. The resulting parameter $\overline \theta_{t} = \widehat \theta_{t} + \overline \xi_{t}$ must satisfy the constraint $ \overline \theta_{t} \in \mathcal B_{t}$. This is equivalent to clipping the value function to avoid out-of-range values, with the difference that such clipping occurs directly in the parameter space as opposed to state by state, and thus preserves linearity. 
 
We emphasize that the optimization over the $\overline \xi_t$'s is \emph{global},
 in stark contrast to the tabular setting and even the setting of linear MDPs considered by \cite{yang2020reinforcement,jin2020provably}, where any perturbation (clipping, exploration bonus, etc) can be done state by state. For example, \cite{jin2020provably} define $
  	\Qbar_t(s,a) \stackrel{\text{redefined}}{=}  \min\{1, \phi_t(s,a)^\top \overline \theta_t + \textsc{Bonus}\}$
 where the bonus is the result of maximizing $\overline \xi_t$ state by state. This trick works in the low-rank setting of \cite{jin2020provably}, since any non-linear component is filtered out by the low-rank projector. \Alg{} instead pushes that maximization over the $\overline \xi_t$'s ``outside'' of local states, i.e., it performs a \emph{global maximization} to ensure linearity of the value function representation, a mandatory condition in our setting to avoid an exponential propagation of the errors. 
 
 When linear representations are enforced, however, the algorithm cannot choose a value function everywhere optimistic due to values in different states possibly being negatively correlated. \Alg{} shoots for being optimistic at the initial state, but in general the algorithm does not play optimistic actions in the encountered states at later timesteps. Fortunately, this is enough to attain a rate-optimal efficiency.
 
\begin{algorithm}[htb]
   \caption{\Alg{}}
   \label{alg:AlgoLabel}
\begin{algorithmic}[1]
\STATE Input: failure probability $\delta$,  regularization $\lambda = 1$, feature extractor $\phi$, inherent Bellman residual $\IBE$
\STATE Initialize $\Sigma_{t1} = \lambda I$, for $t = 1,2,\dots,H$. 
\FOR{$k = 1,2,\dots$}
\STATE Receive starting state $s_{1k}$
\STATE Set $ \overline{\theta}_{H+1,k} = \widehat{\theta}_{H+1,k} = \overline \xi_{H+1,k}  = 0$
\STATE Solve program of  \fullref{def:PlanningOptimizationProgram}.
\STATE Execute $ \pi_{k}:  (s,t) \mapsto \argmax_{a} \phi_{t}(s,a)^\top\overline \theta_{tk}$ and collect $ (s_{tk}, a_{tk}, r_{tk})$ for $ t\in [H]$.
\ENDFOR
\end{algorithmic}
\end{algorithm}
Although \Alg{} is proved to be near optimal, it is difficult to implement the algorithm efficiently. This should not be seen as a fundamental barrier, however. The issue of computational tractability arises even for tabular problems \cite{Bartlett09,zhang2019regret}, but of course the problem is more pronounced when function approximators are implemented \cite{krishnamurthy2016pac,jiang17contextual,sun2018model,osband2014near}, and even for low-rank MDPs the first regret result has been obtained at the expense of a practical algorithm \cite{yang2020reinforcement}. Fortunately, later work has made progress on the computational aspects for many of these settings \cite{tossou2019near,Fruit18,dann2018oracle,jin2020provably}.  
For now, we leave this to future work.

\textbf{Relaxations.} 
With an eye towards a possible relaxation, we notice that the constraint $\overline \theta_t \in \mathcal B_t$ can be expensive to evaluate because it would require checking that every product $\phi_t(\cdot,\cdot)^\top \overline \theta_t$ is bounded. However, one can use simpler, more restrictive geometries and assume $\mathcal B_t$ is a unit ball, bypassing this problem. The algorithm regret bound for this case is the same as that of  \cref{thm:MainResult}.

Finally, it is possible to avoid the regularization in the least square objective of \cref{eqn:RegObj} and relax the requirement $\|\overline \theta_t \|_2\leq \sqrt{d_t}$ as presented later in assumption \ref{ass:MainAssumption}. In fact, the constraint on $\mathcal B_t$ suffices to avoid ill-conditioned solutions, but then one would need to resort to pseudo-inverse computations \cite{auer2002using}, making the algorithm / analysis more complicated.

\section{Main Result: Regret Upper Bound}

\begin{assumption}[Main Assumption]
\label{ass:MainAssumption}
We assume:
\begin{itemize}
	\item $ |Q_t^{\pi}(s,a)| \leq 1, \quad \forall \pi, \forall (s,a,t)$
	\item $\| \phi_t(s,a)\|_2\leq \Lphi \leq 1, \quad \forall (s,a,t)$
		\item For any $Q_t\in\mathcal Q_t$ and any $(s,a,t) \in \StateSpace\times\ActionSpace\times [H]$ define the random variable\footnote{Here, $R_t(s,a)$ is the reward random variable, and $s'\sim p_t(s,a)$ is the successor state random variable under the distribution $p_t(s,a)$.} 
		$X = R_t(s,a) + \max_{a'}Q_{t+1}(s',a') $. Then the noise $\eta = X - \E X$ is $1$-subgaussian
		\item $\forall t\in [H], \forall \theta_t \in \mathcal B_t,$ it holds that $ \| \theta_t \|\leq \Radius_t \leq \sqrt{d_t}$, and $\mathcal B_t$ is compact
\end{itemize}
\end{assumption}
The first condition is a condition on the scaling of the problem and the bound on the feature norm is without loss of generality. The sub-Gaussianity is standard already for linear bandits \cite{Abbasi11,lattimore2020bandit}. In particular, if the reward are in $[0,1]$ and $D = 1$ in \cref{eqn:Bt}, which gives $\Vbar(\cdot) \in [-1,1]$, then this condition is automatically satisfied. Finally, the bound on the parameter limits the bias introduced by regularization which scales with the norm of the parameter, but a psedoinverse computation would relax this requirement. 

After rescaling, however, our assumptions are much weaker the the usual setting that requires $r_t(\cdot,\cdot) \in [0,1]$ and $V_t^\pi(\cdot) \in [0,H]$ since we allow the reward to be of the same order as the value function after rescaling and even be negative. This is a harder setting \cite{jiang2018open,zanette2019tighter}.

\begin{restatable*}[Main Result]{thm}{MainResult}
\label{thm:MainResult}
	Under assumption \ref{ass:MainAssumption} with $\lambda = 1$, 
	with probability at least $1-\delta$ jointly over all episodes it holds that the regret of \Alg{} is bounded by:
	\begin{align*}
		\textsc{Regret}(T) = \widetilde O(\underbrace{\sum_{t=1}^H d_t  \sqrt{K}}_{\textrm{variance term}} + \underbrace{ \sum_{t=1}^H \sqrt{d_t} \IBE K}_{\textrm{approximation term}}).
	\end{align*} 
\end{restatable*}

There are no additional ``lower order'' terms in the above display, although the $\widetilde O(\cdot)$ notation hides, as usual, logarithms of $d_t,H,K,1/\delta$.

Care must be taken when comparing across settings with different scaling. In particular, \emph{rescaling the problem} (i.e., the reward function) \emph{by $H$} increases the sub-Gaussian norm of the rewards and transitions, and the value of the inherent Bellman error alike, yielding \emph{an extra $H$ factor in the regret bound}. For example, in the setting that the rewards are bounded in $[0,1]$ and the value function is in $[0,H]$ with $d_1 = \dots = d_H \defeq d$ and $\IBE = 0$ for simplicity, the above regret bound reduces (with $T = KH$) to $\widetilde O(dH^{\frac{3}{2}}\sqrt{T})$. 

\textbf{Low-rank MDPs}
As explained in \cref{prop:LowrankVsNOIBE}, our result applies to low-rank MDPs; surprisingly, this shows that at least $\sqrt{d}$ improvement is possible in the main rate compared to the best-known $\widetilde O((dH)^{3/2}\sqrt{T})$ of \cite{jin2020provably} upper bound despite \Alg{} is not specifically tailored to handle low-rank MDPs. 
This is possible because \Alg{} looks for optimistic solutions directly in the $\theta$ parameter space instead of perturbing the value function by an exploration bonus as in \cite{jin2020provably}. When the value function is perturbed by a bonus, it grows in complexity as it departs from the linear space; this requires an additional union bound over a more complicated value function class and ultimately loses a $\sqrt{d}$ factor. Finally, the inherent Bellman error covers the notion of approximate low-rank MDPs \cite{jin2020provably}, and on the misspecification regret term we save a $\sqrt{d}$ factor as well thanks to a more careful projection argument in \cref{lem:ProjectionBound}.

\section{Contextual Misspecified Linear Bandits}
Our framework reduces to bandits with linear approximators when $H=1$ (we drop the time subscript $t$ in this case):  \Alg{} can handle \emph{contextual misspecified linear bandits}, where contextual refers to allowing the action set to change as the feature extractor can be a  function of the context. It follows from the definition that the inherent Bellman error is the reward function misspecification in this case.
\begin{corollary}[\textsc{LinUCB} Regret on Contextual Misspecified Linear Bandits]
\label{cor:ConMissLinBand}
Consider a misspecified contextual linear bandit problem with reward response
\begin{align*}
	r(s,a) = \phi(s,a)^\top \thetastar + \eta + f(s,a)
\end{align*}
with $|\phi(s,a)^\top \thetastar| \leq 1$, $\|\thetastar\|_2 \leq \sqrt{d}$, $\| \phi(s,a) \|_2 \leq 1$,  misspecification $|f(s,a)|\leq \IBE$ and $1$ sub-Gaussian noise $\eta$. If \Alg{} is informed that $H=1$ then the algorithm reduces to the \textsc{LinUCB} (aka \textsc{Oful}) algorithm of \cite{Abbasi11} with arm selection strategy $
		\argmax_{a \in \ActionSpace, \|\overline \xi\|_{\Sigma_{k}} \leq \sqrt{\alpha_k}} \phi(s_{k},a)^\top \( \widehat \theta_{k} + \overline \xi_k \)
	$
	but a different confidence interval: $
		\| \overline \theta_k - \widehat \theta_k \|_{\Sigma_k} = \| \overline \xi_k \|_{\Sigma_k} \leq \sqrt{\alpha_k}.
	$
	The arm selection strategy admits the closed-form solution $
		\argmax_{a \in \ActionSpace} \Big[ \phi(s_{k},a)^\top \widehat \theta_{k} + \| \phi(s_{k},a)\|_{\Sigma^{-1}_{k}}\sqrt{\alpha_{k}} \Big]
	$
	and the algorithm has a high probability regret bound
	\begin{align*}
		 \widetilde O\(d \sqrt{K} + \sqrt{d} \IBE K \).
	\end{align*}
\end{corollary}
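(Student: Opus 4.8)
The plan is to verify the two claims of the corollary separately: the algorithmic reduction to \textsc{LinUCB} together with its closed-form arm-selection rule, and the regret bound, which I expect to be an immediate specialization of \cref{thm:MainResult}.

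First I would specialize the planning program of \cref{def:PlanningOptimizationProgram} to $H=1$. Dropping the timestep subscript and using the initialization $\overline\theta_{H+1,k}=\widehat\theta_{H+1,k}=0$ (so that $V_2\equiv 0$), the defining constraint for $\widehat\theta_k$ collapses to the ordinary ridge-regression estimator $\widehat\theta_k=\Sigma_k^{-1}\sum_{i=1}^{k-1}\phi_i r_i$, which no longer depends on any optimization variable. The program therefore reduces to maximizing, jointly over the action and the perturbation,
\begin{align*}
\max_{a\in\ActionSpace}\ \max_{\|\overline\xi\|_{\Sigma_k}\le\sqrt{\alpha_k}}\ \phi(s_k,a)^\top\big(\widehat\theta_k+\overline\xi\big),
\end{align*}
i.e.\ the optimism-under-the-ellipsoid arm-selection rule, with confidence radius $\sqrt{\alpha_k}$ taken from \cref{eqn:main.alpha} (which, unlike the classical \textsc{Oful} radius, also carries the misspecification term $\sqrt{k}\,\IBE$).

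Next I would establish the closed form. For a fixed action $a$, the inner problem maximizes a linear functional of $\overline\xi$ over the $\Sigma_k$-ellipsoid; by Cauchy--Schwarz in the $\Sigma_k$-geometry its value is $\|\phi(s_k,a)\|_{\Sigma_k^{-1}}\sqrt{\alpha_k}$, attained at $\overline\xi^\star=\sqrt{\alpha_k}\,\Sigma_k^{-1}\phi(s_k,a)/\|\phi(s_k,a)\|_{\Sigma_k^{-1}}$. Substituting back turns the joint maximization into $\argmax_{a}\big[\phi(s_k,a)^\top\widehat\theta_k+\|\phi(s_k,a)\|_{\Sigma_k^{-1}}\sqrt{\alpha_k}\big]$, exactly the stated \textsc{LinUCB} rule. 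The one finicky point is the constraint $\overline\theta_k\in\mathcal B$: I would argue that it does not alter this form, either because it is inactive in the regime considered or, invoking the ``Relaxations'' remark, because $\mathcal B$ may be taken to be the ball $\|\theta\|\le\Radius\le\sqrt{d}$, recovering the standard \textsc{Oful} parameterization with a norm bound on the parameter.

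Finally, the regret bound follows by specializing \cref{thm:MainResult}. I would check that the corollary's hypotheses imply \cref{ass:MainAssumption} at $H=1$: $|\phi(s,a)^\top\thetastar|\le1$ gives the value bound, $\|\phi(s,a)\|_2\le1$ gives $\Lphi\le1$, $\|\thetastar\|_2\le\sqrt d$ gives $\Radius\le\sqrt d$, and $\eta$ is $1$-subgaussian. Crucially, at $H=1$ the Bellman operator is just the (misspecified) reward map, so by \cref{def:InherentBellmanError} the inherent Bellman error equals the reward misspecification level $\sup_{(s,a)}|f(s,a)|\le\IBE$; thus $\IBE$ plays exactly the role required by the theorem. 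Setting $d_1=d$ then yields $\widetilde O(d\sqrt{K}+\sqrt{d}\,\IBE K)$. Since the regret is inherited directly from the main theorem, the only genuine content here is the ellipsoid-maximization step and the identification of reward misspecification with the $H=1$ instance of the inherent Bellman error; the main obstacle, such as it is, is the bookkeeping around the $\overline\theta_k\in\mathcal B$ constraint rather than any new analysis.
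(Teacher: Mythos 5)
Your proposal is correct and follows essentially the same route as the paper: specialize the program to $H=1$ so that $\widehat\theta_k$ becomes the ridge estimator, maximize the linear functional of $\overline\xi$ over the $\Sigma_k$-ellipsoid (the paper's ``aligning'' step) to get the closed-form \textsc{LinUCB} rule, verify Assumption~\ref{ass:MainAssumption} with $\IBE$ equal to the reward misspecification, and invoke Theorem~\ref{thm:MainResult}. The only cosmetic difference is that the paper simply drops the $\overline\theta\in\mathcal B$ constraint for simplicity (as its footnote states), whereas you spend a little effort justifying it via the Relaxations remark.
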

\vspace{-0.4cm}
The corollary above is immediate upon substituting $H=1$ in \cref{thm:MainResult} and verifying that our assumptions match the setting described in the corollary, which is the standard linear bandit setting\footnote{We drop the constraint $\theta \in \mathcal B$ for simplicity}
 \cite{lattimore2020bandit} with the addition of misspecification (few more details in \cref{sec:Bandits}).
 
 Due to the equivalence to \textsc{LinUCB} the algorithm is computationally tractable when applied to bandits; the \emph{key} difference with vanilla \textsc{LinUCB} resides in the width of the confidence intervals, parameter $\alpha_k$. In the absence of misspecification ($\IBE = 0$), $\sqrt{\alpha_k} = \sqrt{\beta_k} + \sqrt{\lambda}\Radius = \widetilde O(\sqrt{d})$, as in the work of \cite{Abbasi11}. When misspecification is present, however, there is a correction factor $\sqrt{k}\IBE$ in the definition of $\sqrt{\alpha_k}$, see equation \cref{eqn:main.alpha}. In other words, this is the factor one should add to the exploration bonus for an \text{LinUCB}-like algorithm in case of (potentially adversarial) misspecification.

The recent result by \cite{du2019good} applies here (see also the work of \cite{van2019comments}). They show that for large misspecification $\IBE \gtrapprox 1/\sqrt{d}$ an exponential sample complexity is unavoidable to identify an arm with positive return.  This does not contradict our result, because our regret is $\widetilde O(K)$ under such large misspecification, which is vacuous as the maximum loss up to episode $K$ is exactly $K$.

Notice that the equivalence is established by informing \Alg{} of the setting (through the horizon $H=1$) unlike \cite{Zanette18a}. Finally, if the corruption $f(\cdot)$ is only a function of the context then it is possible to do much better \cite{krishnamurthy2018semiparametric}.

This surprising connection with the popular \textsc{LinUCB} makes \Alg{} (or \textsc{LinUCB} with a correction on the exploration bonus) the first algorithm capable of handling misspecified \emph{contextual} linear bandits, although we are not the first to consider misspecification in linear bandits per se: \cite{ghosh2017misspecified} propose an algorithm that switches to tabular if misspecification is detected and \cite{gopalan2016low} consider the case that the misspecification is less than roughly  the action gap; \cite{van2019comments} comment on the lower bound by \cite{du2019good} using the Eluder dimension. Finally, \cite{lattimore2020learning}
have recently obtained a result similar to ours, but for a different setting. Their algorithm can leverage having finitely many actions (where a $\sqrt{d}$ factor can be saved; otherwise their regret is the same as ours) but relies heavily on $G$-experimental design: the algorithm will not work without a stationary action set, ruling out the important case of contextual linear bandits where the action is allowed to depend on the context. However, our correction to vanilla \textsc{LinUCB} relies on having knowledge of the misspecification, while the approach of \cite{lattimore2020learning} is agnostic. Furthermore, concurrently to our work \cite{lattimore2020learning} also consider the same modification to \textsc{LinUCB} as we do here, and provide proof that the algorithm can fail if no modification is implemented. However, these definitions of misspecification are adversarial in nature, and for less pathological problems the algorithm is expected to perform well.

\section{Lower Bounds}

In terms of statistical rate, \Alg{} is unimprovable due to a lower bound directly borrowed from the bandit literature.

\begin{restatable}[Lower Bound Without Misspecification]{prop}{LowerBoundNoMisspec}
\label{prop:LowerBoundNoMisspec}
Let $\widetilde d \defeq \sum_{t=1}^H d_t$.
There exist a class of $H$-horizon MDPs that satisfy asm. \ref{ass:MainAssumption} and $H$ feature maps $\phi_t(\cdot,\cdot)\in \R^{d_t}$, with $\widetilde d \geq 2H$ such that for $K = \Omega( \widetilde d^2)$ the expected regret of any algorithm is $\Omega(\widetilde d\sqrt{K})$.
\end{restatable}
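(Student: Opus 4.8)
The plan is to embed $H$ independent hard linear bandit instances into a single MDP, one per timestep, and invoke the standard $\Omega(d\sqrt{n})$ minimax lower bound for $d$-dimensional linear bandits with unit-subgaussian noise \cite{lattimore2020bandit}. Concretely, I would take a degenerate MDP with a single state per stage and transitions that ignore the chosen action, so that a policy only affects the instantaneous rewards; since rewards are additive along the trajectory, the $K$-episode regret decomposes exactly as $\sum_{t=1}^H$ (regret incurred by the stage-$t$ bandit over its $K$ pulls). At stage $t$ I would use the hard instance on the scaled hypercube: a feature map $\phi_t(s,a)$ whose first $d_t-1$ coordinates range over $\{\pm 1/\sqrt{d_t}\}^{d_t-1}$ together with one coordinate held at a fixed value so that $\|\phi_t\|_2 = 1$, and a reward parameter $\theta_t^\star$ with entries $\pm\sqrt{d_t}\,\Delta_t$ on the bandit coordinates, $\Delta_t = \Theta(1/\sqrt{K})$, chosen adversarially (equivalently, drawn from a product prior over signs). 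Unit-variance Gaussian reward noise is $1$-subgaussian, matching \cref{ass:MainAssumption}.

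Next I would verify that the construction meets all of \cref{ass:MainAssumption} in the regime $K = \Omega(\widetilde d^2)$. The per-stage optimal value is $(d_t-1)\Delta_t = \Theta(d_t/\sqrt{K})$, so the value-to-go obeys $|Q^\pi_t| \le \sum_{l\ge t}(d_l-1)\Delta_l \le \widetilde d/\sqrt{K} = O(1)$ precisely when $K = \Omega(\widetilde d^2)$, which is also the regime where the single-bandit lower bound is active. Likewise $\|\theta_t^\star\|_2 = \Theta(d_t/\sqrt{K}) \le \sqrt{d_t}$, so the radius constraint $\Radius_t \le \sqrt{d_t}$ holds. The reserved coordinate is what guarantees $\IBE = 0$: since transitions are deterministic into a single next state, $\T_t Q_{t+1}(s,a) = \phi_t(s,a)^\top\theta_t^\star + c_{t+1}$ with $c_{t+1}$ the state-independent value-to-go, and the constant feature lets $c_{t+1}$ be represented exactly, so $\T_t \mathcal Q_{t+1}\subseteq \mathcal Q_t$. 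Spending one coordinate per stage on this constant is exactly why the statement assumes $\widetilde d \ge 2H$: the effective bandit dimension $d_t-1$ then sums to $\widetilde d - H \ge \widetilde d/2$.

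Applying the $\Omega(d\sqrt{n})$ lower bound to stage $t$ (with $d \leftarrow d_t-1$ and $n \leftarrow K$, valid since $K = \Omega(\widetilde d^2)\ge \Omega(d_t^2)$) gives expected stage-$t$ regret $\Omega((d_t-1)\sqrt{K})$, and summing over $t$ yields $\Omega((\widetilde d-H)\sqrt{K}) = \Omega(\widetilde d\sqrt{K})$. The step requiring the most care is justifying that this decomposition is legitimate against an arbitrary MDP learner: when choosing the stage-$t$ action in episode $k$, such a learner may condition on rewards seen at other stages and in previous episodes, and I must argue this side information cannot help identify $\theta_t^\star$. The clean route is to fix a product prior over $(\theta_1^\star,\dots,\theta_H^\star)$ and use a divergence/information decomposition: conditionally on $\theta_t^\star$, the observations at stages $l\neq t$ depend only on the independent parameters $\theta_l^\star$ and fresh noise, hence are independent of $\theta_t^\star$ and can be simulated without it, so the stage-$t$ component of the Bayesian regret is lower bounded by the single-bandit minimax value. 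Getting this independence argument and the scaling constants to line up simultaneously with all four items of \cref{ass:MainAssumption} is the main technical obstacle; the bandit lower bound itself is invoked as a black box.
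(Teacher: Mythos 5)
Your construction is correct but takes a genuinely different, and considerably heavier, route than the paper. The paper's proof is essentially a one-liner on top of the unit-ball linear bandit lower bound (\cref{prop:BanditLowerBound}): it places a \emph{single} hard $d$-dimensional bandit at timestep $t=1$ with $\|\thetastar\|_2^2 = d^2/(48K)$ (which already makes all values $O(1)$ once $K = \Omega(\widetilde d^2)$), and then appends a trivial chain with one state, one action, zero reward and $d_2 = \dots = d_H = 1$, so that $\widetilde d = d + H - 1$ and the condition $\widetilde d \geq 2H$ gives $d \geq \widetilde d/2$; the regret bound $\Omega(d\sqrt{K}) = \Omega(\widetilde d \sqrt K)$ then follows with no cross-stage argument at all. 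You instead stack $H$ independent hypercube instances, one per timestep, which forces you to (i) rescale the per-stage gaps to $\Delta_t = \Theta(1/\sqrt K)$ so the cumulative value stays below $1$, (ii) reserve a constant coordinate per stage to carry the value-to-go so that $\T_t\mathcal Q_{t+1}\subseteq\mathcal Q_t$, and (iii) run a product-prior/independence argument to show that observations from other stages cannot help identify $\theta_t^\star$. All three steps are sound (the hypercube lower bound is itself proved by averaging over a sign prior, so it is available in the Bayesian form you need), and your version buys something the paper's does not: hardness for an \emph{arbitrary} dimension profile $(d_1,\dots,d_H)$ rather than one concentrated at $t=1$, and a construction closer in spirit to what the paper itself must do in \cref{thm:MasterLowerBound}, where $H$ misspecified bandits are stacked along a chain. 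One small point of care: to make the inherent Bellman error literally zero you should use time-dependent caps on $\mathcal B_t$ (as the paper does with $|\phi_t^\top\theta_t|\le (H-t+1)/H$ in its lower-bound section), since the backup of a function bounded by $D$ plus a reward can exceed $D$; for the literal statement of \cref{prop:LowerBoundNoMisspec}, which only demands \cref{ass:MainAssumption}, this is a technicality rather than a gap.
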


The fact that our result matches the lower bound can appear surprising, because our work relies on a sub-Gaussian conditions and disregards the variance in the process. It does not use a ``law of total variance'' argument \cite{Azar12,Azar17}, which was necessary in the past to obtain rate-optimal algorithms for tabular settings. One may wonder whether a $\sqrt{H}$ factor can be saved by that argument for MDPs parameterized by linear action-value function. Due to the bandit lower bound, no such improvement is possible with linear function approximations, unless the structure is restricted further. The reason is that our setting is a superset of tabular RL \cite{Azar17} and contains harder instances than the lower bound for tabular RL (in particular, a linear bandit problem at a single timestep) but the law of total variance would bring no benefit to those structures.

\textbf{Approximation error}
Our positive result regarding misspecification matches the LSPI analysis of \cite{lattimore2020learning} but for the harder \emph{online} setting. Although the two respective frameworks (i.e., LSPI vs LSVI conditions) are incompatible as explained in \cref{prop:NOIBEvsLP}, we notice a similar effect: a square-root factor of the problem dimensionality multiplies the ``misspecification'' error. While the LSPI analysis of \cite{lattimore2020learning} relies on having features from $G$-optimal design to query the system, \emph{in the online setting we're not free to choose arbitrary features anywhere in the state-action space}. As a result, the agent can learn on an ill-conditioned basis, and the prediction error on features much different from those experienced can be very large. Our analysis shows that while this can indeed be the case, the situation of high prediction error cannot persist for too long and the $\sqrt{d}$ loss in prediction accuracy is, \emph{on average}, recovered.
Using the recent result by \cite{du2019good}, we can augment \cref{prop:LowerBoundNoMisspec} by including a sequence of misspecified linear bandits, obtaining the following result (see also \cref{sec:LowerBounds}): 

\begin{restatable*}[Lower Bound for Inherent Bellman Error Setting]{thm}{LowerBound}
\label{thm:MasterLowerBound}
There exist feature maps $\phi_1,\dots,\phi_H$ that define an MDP class $\mathcal M$ such that every MDP in that class satisfies assumption \ref{ass:MainAssumption} with inherent Bellman error $\IBE$ and such that the expected regret of any algorithm on at least a member of the class (for $A \geq 3, d_t \geq 3, K = \Omega((\sum_{t=1}^H d_t)^2)$) is $\Omega(\sum_{t=1}^H d_t \sqrt{K} + \sum_{t=1}^H \sqrt{d_t} \IBE K)$, that is:
\begin{align*}
& \min_{\mathscr A}\max_{M \in \mathcal M} \sum_{k=1}^K \( \Vstar_1 - V_1^{\pi_k}\) (s_{1k}) \\
& =	\Omega(\sum_{t=1}^H d_t \sqrt{K} + \sum_{t=1}^H \sqrt{d_t} \IBE K).
\end{align*}	
\end{restatable*}

\section{Proof Overview}
We now give a quick proof sketch and highlighting how working in the parameter space allows us to 1) avoid an exponential propagation of the errors by leveraging the notion of inherent Bellman error (handling of the bias) and 2) preserve confidence intervals that are as tight as in a bandit problem (handling of the variance). Our objective is to bound the regret: $\textsc{Regret}(K) \defeq \sum_{k=1}^K \(\Vstar_1 - V_1^{\pi_k}\)(s_{1k})$ for the chosen policies $\pi_k$, but first we need to discuss how the errors propagate and how to ensure optimism.

\subsection{Propagation of errors}
The inherent Bellman error condition  ensures that there exists a parameter $\mathring \theta_{t}$ and a Bellman residual function $\mathring \Delta_{t}$, both depending on $\Qbar_{t+1}$, such that $\mathring \Delta_{t}(\Qbar_{t+1})(s,a)=$ 
\begin{align}
\label{eqn:BEcond}
  = \phi_t(s,a)^\top \mathring \theta_{t}(\Qbar_{t+1}) - \( \T_t \Qbar_{t+1} \)(s,a)
 \end{align}
with $\|\mathring \Delta_{t}(\Qbar_{t+1}))\|_\infty \leq \IBE$ \emph{provided that} $\Qbar_{t+1}\in \mathcal Q_{t+1}$. 
In other words, we can successfully represent $\T_t \Qbar_{t+1}$ up to an additive error $\IBE$ \emph{if the next-step $\Qbar_{t+1}$ function is linear}. 

This representational constraint unfortunately rules out adding exploration bonuses as in
prior low-rank work \cite{yang2020reinforcement,jin2020provably} as well as in tabular MDPs; their addition can have the backup $\T_t\Qbar_{t+1}$ leave the linear space (which is equivalent to having large $\IBE$) and can lead to divergence of the repeated least-square  procedure \cite{baird1995residual,sutton2018reinforcement,zanette19limiting}.

\textbf{Error decomposition}
We aim to compute the error encountered in minimizing \cref{eqn:RegObj} with $V_{t+1} = \Vbar_{t+1}$ fixed and no regularization. Denote with $s_{ti}$ the $i$-th state encountered at timestep $t$ of episode $i$, and let $a_{ti} = \pi_{ti}(s_{ti})$. Define the $i$-th sample noise  
$\eta_{ti}(\Vbar_{t+1})\defeq r_{ti} - r_t(s_{ti},a_{ti}) + \Vbar_{t+1}(s_{t+1,i}) -  \E_{s'\sim p_t(s_{ti},a_{ti})} \Vbar_{t+1}(s')$ and the misspecification $\mathring \Delta_{ti}(\Qbar_{t+1}) \defeq \mathring \Delta_{t}(\Qbar_{t+1})(s_{ti},a_{ti}) $.
Premultiply $\widehat \theta_{tk} $ (which minimizes \cref{eqn:RegObj}) by $\phi_{t}(s,a)^\top$ and use the definitions just introduced: $ \phi_t(s,a)^\top\widehat \theta_{tk} =$
\begin{align*}
& \phi_t(s,a)^\top\Sigma_{tk}^{-1}\sum_{i=1}^{k-1}\phi_{ti} \(\T_t\Qbar_{t+1}(s_{ti},a_{ti}) + \eta_{ti}(\Vbar_{t+1}) \) \\
& = \phi_t(s,a)^\top \Big[ \mathring \theta_{t}(\Qbar_{t+1}) + \\
& + \Sigma_{tk}^{-1}\sum_{i=1}^{k-1}\phi_{ti} \Bigm(\mathring \Delta_{ti} + \eta_{ti}\Bigm) \(\Qbar_{t+1}\) \Big]\\
& \stackrel{\cref{eqn:BEcond}}{=}  \T_t(\Qbar_{t+1})(s,a) + \mathring \Delta_{t} (\Qbar_{t+1})(s,a) + \\ 
& +  \phi_t(s,a)^\top\Sigma_{tk}^{-1}\sum_{i=1}^{k-1}\phi_{ti} \Bigm(\mathring \Delta_{ti} + \eta_{ti}\Bigm) \(\Qbar_{t+1}\).
\numberthis{\label{eqn:FirstDecomposition}}
\end{align*}
We discuss the main error terms below.

\textbf{Inherent Bellman error}
Cauchy-Schwartz and a projection argument (\cref{lem:ProjectionBound}) gives:
\begin{align*}
	|\phi_t(s,a)^\top\Sigma_{tk}^{-1}\sum_{i=1}^{k-1}\phi_{ti} \mathring \Delta_{ti}(\Qbar_{t+1})| \leq \| \phi_t(s,a) \|_{\Sigma^{-1}_{tk}} \sqrt{k}\IBE.
\end{align*}
The inability to correctly represent the application of the Bellman operator could be exploited adversarially to introduce an error that grows with $\sqrt{k}$ (where $k$ is the number of episodes). 
On average, however, the $\Sigma^{-1}_{tk}$-norm of those features that are selected shrinks as $\| \phi_t(s,a)\|_{\Sigma_{tk}^{-1}} \approx \sqrt{d_t/k}$. While the agent can select a $(s,a)$ pair where the product $\| \phi_t(s,a)\|_{\Sigma_{tk}^{-1}} \sqrt{k}\IBE$ can be large, this cannot happen for too long. Intuitively, a large prediction error is made only on features that are significantly different from those seen in the past, but trying those features reveals the correct prediction, which decreases the prediction error for that direction in the future.

\textbf{Noise error and covering argument}
 Cauchy-Schwartz again gives 
 \begin{align*}
 & |\phi_t(s,a)^\top\Sigma_{tk}^{-1}\sum_{i=1}^{k-1}\phi_{ti} \eta_{ti}(\Vbar_{t+1})| \\
 & \leq \| \phi_t(s,a) \|_{\Sigma^{-1}_{tk}} \|\sum_{i=1}^{k-1}\phi_{ti} \eta_{ti}(\Vbar_{t+1}) \|_{\Sigma^{-1}_{tk}} \\
 & \stackrel{def}{\leq}  \| \phi_t(s,a) \|_{\Sigma^{-1}_{tk}} \sqrt{\beta_{tk}}
 \end{align*}
 where $\beta_{tk}$ follows from the self normalizing bound of \cite{Abbasi11} modified to cover the functional space $\mathcal V_t$. The covering argument is necessary since the noise depends on $\Vbar_{t+1}$ which is itself random. More precisely, we can write
$
\sqrt{\beta_{tk}} \lessapprox \sqrt{\ln \det(\Sigma_{tk})^{\frac{1}{2}} + \ln \mathcal N}
$,
where $\mathcal N$ is the covering number to $\epsilon$ accuracy of $\mathcal V_{t+1}$. The determinant-trace inequality (see lemma 10 of \cite{Abbasi11}) bounds the volume of the covariance matrix  $
\ln\det(\Sigma_{tk})^{\frac{1}{2}}={\widetilde O(d_t)}$; fortunately the metric entropy $\ln \mathcal N$ is of the same order. To see this, remember that to cover $\mathcal V_t$ it is sufficient to cover $\mathcal B_t$, which is a $d_t$ dimensional object ($\subset \R^{d_t}$), and hence $\ln \mathcal N = \widetilde O(d_t)$. Therefore, despite having an additional union bound compared to \cite{Abbasi11} because of the moving target $\Vbar_{t+1}$, our confidence intervals are of the same order of magnitude.

This is the place where a $\sqrt{d_t}$ can be saved compared to for example \cite{jin2020provably,wang2019optimism}, which need to do a union bound over a more complicated function class  because of the exploration bonuses.

\textbf{Final expression}
Adding $\phi_t(s,a)^\top \overline \xi_t$ to both sides of \cref{eqn:FirstDecomposition} and using the bounds just derived gives $|\( \overline Q_{t} - \T_t\Qbar_{t+1}\)(s,a) | = $
\begin{align*}
& \leq \underbrace{\IBE}_{\substack{\textrm{misspecification} }} +  \|\phi_t(s,a)\|_{\Sigma^{-1}_{tk}} \times \\
& \Big(\underbrace{\sqrt{k}\IBE}_{\substack{\textrm{misspecification} }} + \underbrace{\sqrt{\alpha_{tk}}}_{\substack{\textrm{exploration}}} + \underbrace{\sqrt{\beta_{tk}}}_{\substack{\textrm{noise}}} \Big).
\numberthis{\label{eqn:DecompositionInMain}}
\end{align*}
It remains to define $\alpha_{tk}$, which controls the size of optimization parameters, justifying \cref{eqn:main.alpha}.

\subsection{Feasibility, best approximator and optimism}

A key point of optimistic approaches for exploration is to overestimate the value of policies by assigning them a statistically plausible return, and play the policy with the highest such value. 

Since the optimal value function is an upper bound to the value of all policies, technically an optimistic learner is only required to identify a policy with value at least as high as $\Vstar_1$ while satisfying some confidence intervals. 
To show it possible to achieve this with our formulation, we will find a feasible solution to the program of \cref{def:PlanningOptimizationProgram} that is ``close'' to $\Vstar$. 
In general $\Vstar_t \not \in \mathcal V_{t}$, and so we need to define the ``best'' approximator in $\mathcal V_{t}$ for $\Vstar_{t}$. 
We denote its parameter with $\thetastar_{t} \in \mathcal B_t$, inductively defined (see def. \ref{def:BestApproximant} in appendix) as the parameter one obtains by applying the \emph{exact} Bellman operator and then by minimizing the $\infty$ norm of the Bellman residual: $\thetastar_{t} \defeq $ 
\begin{align}
\label{eqn:main.thetastar}
	\argmin_{\theta \in \mathcal B_t}  \sup_{(s,a)} \Big| \phi_t(s,a)^\top \theta - \( \T_t Q_{t+1}(\thetastar_{t+1})\)(s,a) \Big|
	\end{align}
If $\IBE = 0$ then  $\phi_t(s,a)^\top \thetastar_t = \Qstar_t(s,a)$ inductively follows. 

\textbf{Computation of $\alpha_{tk}$}
Under an inductive argument, assume the program of \cref{def:PlanningOptimizationProgram} admits a partial solution $\overline \xi_{t+1},\dots,\overline \xi_{H}$ that satisfies $\overline \theta_{t+1} = \thetastar_{t+1},\dots,\overline \theta_{H} = \thetastar_{H}$ (the parameters for timesteps less than $t+1$ have not been decided yet). 

Now setting:
\begin{align}
\label{eqn:xidef_main}
	\overline \xi_t =
  - \Sigma_{tk}^{-1}\sum_{i=1}^{k-1}\phi_{ti} \Bigm( \mathring \Delta_{ti} + \eta_{ti}\Bigm)(Q_{t+1}(\thetastar_{t+1})) 
\end{align}
and adding $\phi_t(s,a)^\top \overline \xi_t$ back to \cref{eqn:FirstDecomposition} 
evaluated with $\Qbar_{t+1} = Q_{t+1}(\thetastar_{t+1})$ can ``undo'' the effect of noise and approximation error at timestep $t$,
producing (recall $\overline \theta_t = \widehat \theta_t + \overline \xi_t$) 
\begin{align*}
& \phi_t(s,a)^\top \overline \theta_{t} \\
& = \T_t(Q_{t+1}(\thetastar_{t+1}))(s,a) + \mathring \Delta_{t}(Q_{t+1}(\thetastar_{t+1}))(s,a).
	\end{align*}
	Comparing with \cref{eqn:main.thetastar} we can claim $\overline \theta_t = \thetastar_t$, completing the induction. Thus, the best approximator defined through $\thetastar_t$ is a feasible solution to the program of \cref{def:PlanningOptimizationProgram}. The corresponding value function $V_t(\thetastar_t)$  can make an error of size $\IBE$ in representing the Bellman backup, and this accumulates linearly, and hence \Alg{} is ultimately nearly-optimistic: 
\begin{align}
\label{eqn:NearOptimismInMain}
\Vbar_1(s_{1k}) \geq  \Vstar_1 (s_{1k})  - H\IBE.
\end{align}

As we'll see in a second, this near-optimism is enough to obtain a solid regret bound. 
Finally,
\cref{eqn:xidef_main} gives:
\begin{align}
	\| \overline \xi_{t} \|_{\Sigma_{tk}} \leq \underbrace{\Big\| \sum_{i=1}^{k-1}\phi_{ti} \Delta_{ti} \Big\|_{\Sigma_{tk}^{-1}}}_{\leq \sqrt{k}\IBE}  + \underbrace{\Big\| \sum_{i=1}^{k-1}\phi_{ti} \eta_{ti} \Big\|_{\Sigma_{tk}^{-1}}}_{\leq \sqrt{\beta_{tk}}} 
\end{align}
which matches \cref{eqn:main.alpha} after adding the regularization term.

\subsection{Regret Bound}
Finally, we can present the regret bound, which now follows similarly to prior analyses for model free algorithms (e.g., \cite{jin2018q}). Consider  the usual decomposition from the starting state $s_{1k}$: 
\vspace{-0.3cm}
$$\textsc{Regret}(K) \defeq 
 \sum_{k=1}^K \(\Vstar_1 - \overline V_{1k} + \overline V_{1k} -  V_1^{\pi_k}\)(s_{1k}).$$
The first term inside the parenthesis can be bounded by  \cref{eqn:NearOptimismInMain}; we can expand the second term using \cref{eqn:DecompositionInMain}  where $\pi_k$ is the agent's policy in episode $k$ and $a_{tk}=\pi_{tk}(s_{tk})$ for short. For a generic timestep $t$ we obtain
\begin{align*}
	& \(\Vbar_{tk} - V_{t}^{\pi_k}\)(s_{tk}) \leq \Bigg[ \E_{s' \sim p_t(s_{tk},a_{tk})}\(\Vbar_{t+1,k}-V^{\pi_k}_{t+1}\) (s') \\
	& + \IBE + \| \phi_t(s_{tk},a_{tk}) \|_{\Sigma^{-1}_{tk}} \underbrace{\(\sqrt{k}\IBE + \sqrt{\alpha_{tk}} + \sqrt{\beta_{tk}} \)}_{\approx \widetilde O( \sqrt{k}\IBE +  \sqrt{d_t})} \Bigg].
	\end{align*}
Now write $\E_{s' \sim p_t(s_{tk},a_{tk})}\(\Vbar_{t+1,k}-V^{\pi_k}_{t+1}\) (s') $ as $\(\Vbar_{t+1,k}-V^{\pi_k}_{t+1}\) (s_{t+1,k}) $ plus a martingale term $\dot \zeta_{tk}$ which we ignore for brevity (details in appendix). Induction over $t\in[H]$ and summing over $k\in[K]$ gives $\sum_{k=1}^K \sum_{t=1}^H \(\Vbar_{1k} - V_{1}^{\pi_k}\)(s_{1k})$
\begin{align*}
	& \leq  \sum_{k=1}^K \sum_{t=1}^H \Bigg[ \IBE + \| \phi_t(s_{tk},a_{tk}) \|_{\Sigma^{-1}_{tk}}  \times \widetilde O( \sqrt{k}\IBE +  \sqrt{d_t}) \Bigg].
\end{align*}
Recall $ \sum_{k=1}^K  \| \phi_t(s_{tk},a_{tk}) \|_{\Sigma_{tk}^{-1}} = \widetilde O(\sqrt{d_t K})$ from \cite{Abbasi11}; substituting this concludes.

\section{Conclusion}
We have introduced an algorithm for online exploration with linear approximators under the notion of low-inherent Bellman error with an optimal regret bound with regards to statistical rates and the lack of closedness of the Bellman operator. The construction reveals that a shift to global optimization might be unavoidable with more general linear approximators than prior low-rank work, making  computational tractability harder to achieve. A core idea is that by working directly in the parameter space we enable a linear propagation of the errors (as opposed to exponential) and we limit the complexity of the value function class, which can serve as inspiration to improve the statistical efficiency for other algorithms as well. Finally, a noteworthy contribution is our analysis for misspecified  contextual linear bandit, which explains that a simple modification of a mainstream algorithm is sufficient to handle such setting.

\section*{Acknowledgments}
Andrea Zanette is partially supported by a Total Innovation Fellowship. We thank Alekh Agarwal for pointing our the connection with the low Bellman rank setting. The authors are grateful to the reviewers for their helpful comments.

\bibliography{rl}
\bibliographystyle{icml2020}

\onecolumn
\appendix
\newpage 

\section{Symbols}
\renewcommand{\arraystretch}{1.5}
\begin{longtable}{l l l}
\caption{Symbols}\\
\hline
$\Lphi$ & $ \defeq $& upper bound on $\sup_{s,a,t} \| \phi_t(s,a) \|_2$ \\
$\Radius_t$ & $\defeq$ & upper bound on $\|\overline \theta_t \|_2$ for $\overline \theta_t \in \mathcal B_t$, that is $\|\overline \theta_t \|_{2} \leq \Radius_t$ \\
$\Radius$ & $\defeq$ & $\max_{t\in[H]} \Radius_t$ \\
$\mathcal B_t$ & $\defeq$ & set for $\overline \theta_t$ \\
$a\mathcal B_t$ & $\defeq$ & $ \{ ax \mid x \in \mathcal B_t \}$ for a positive real $a$ \\ 
$F_{ij}$ & $\defeq$ & failure events, see \cref{def:FailureEvent} \\
$s_{tk}$ & $\defeq$& state encountered in timestep $t$ of episode $k$ \\
$a_{tk}$ & $\defeq$& action played in timestep $t$ of episode $k$, i.e., $a_{tk} = \pi_{tk}(s_{tk})$ \\
$r_{tk}$ & $\defeq$& reward experienced in timestep $t$ of episode $k$ after playing $a_{tk}$ in $s_{tk}$ \\
$r_{t}(s,a)$ & $\defeq$& average reward at timestep $t$ after playing $a$ in $s$ \\
$p_{t}(s,a)$ & $\defeq$& transition function at timestep $t$ after playing $a$ in $s$ \\
  $\dot \zeta_{tk}$ & $\defeq$ & $\E_{s' \sim p_t(s_{tk},a_{tk})}\(\Vbar_{t+1,k}-\Vpi_{t+1}\) (s') - \(\Vbar_{t+1,k}-\Vpi_{t+1}\) (s_{t+1,k})$ \\
  $\phi_{tk}$ & $\defeq$ & Feature encountered in timestep $t$ of episode $k$, i.e., $\phi_t(s_{tk},a_{tk})$ \\
  $\mathcal V_{t}$ & $\defeq$ & $\{ V \mid V(s) = \max_a \phi_{t}(s,a)^\top \theta, \theta \in \mathcal B_t \}$ \\
  $\mathcal Q_{t}$ & $\defeq$ & $\{ Q \mid Q(s,a) = \phi_{t}(s,a)^\top \theta, \theta \in \mathcal B_t \}$ \\
  $\eta_{ti}(V_{t+1})$ & $\defeq $ & $ r_{ti} - r_t(s_{ti},a_{ti}) + V_{t+1}(s_{t+1,i}) -  \E_{s'\sim p_t(s_{ti},a_{ti})} V_{t+1}(s')$ (this is for a generic $V_{t+1})$ \\
   $\eta_{tki} $ & $\defeq$ & $\eta_{ti}(\Vbar_{t+1,k})$ \\
  $\dot \zeta_{tk}$ & $\defeq$ & $\dotzetadef$ \\
  $\sqrt{\beta_{tk}} $ & $\defeq$ & $\sqrtbetadef$ \\
  $\sqrt{\alpha_{tk}} $ & $\defeq$ & $\sqrtalphadef$  \\
  $\delta'$ & $\defeq$ &$\frac{\delta}{2T}$ \\
  $Q_{t}(\theta)$ & $\defeq$ & function that maps $(s,a)\mapsto \phi_t(s,a)^\top\theta$ \\
  $V_{t}(\theta)$ & $\defeq$ & function that maps $ s \mapsto \max_{a'}\phi_t(s,a)^\top\theta$ \\
  $\T_{t}(Q)$ & $\defeq$ & function $Q^+$ that maps $(s,a)\mapsto r_t(s,a) + \E_{s'\sim p_t(s,a)} \max_{a'}Q(s',a')$ \\
  $\mathring \theta_t(Q) $ & $\defeq$ & $\argmin_{\theta \in \mathcal B_t} \sup_{(s,a)} |\phi_t(s,a)^\top \theta - \(\T_t Q\) (s,a)|$  (ties broken arbitrarily) \\
    $\mathring \Delta_t(Q) $ & $\defeq$ & $\min_{\theta \in \mathcal B_t} \sup_{(s,a)} |\phi_t(s,a)^\top \theta - \(\T_t Q\)(s,a) |$  \\
    $\Sigma_{tk} $ & $\defeq$ & $\sum_{i=1}^{k-1} \phi_{ti}\phi_{ti}^\top + \lambda I$ \\
    $V_t^\pi $ & $\defeq$ & value function of policy $\pi$ at timestep $t$
\label{tab:MainNotation}
\end{longtable}

\newpage
\section{On the Inherent Bellman Error}
If $\IBE = 0$ one could represent $\Qstar$ using a linear representation; in addition, having no inherent Bellman error is equivalent to having linear rewards with transitions to elements of $\mathcal V_{t+1}$ that appears to be linear. 
For simplicity, the discussion is with $\mathcal B_t = \R^{d_t}$, though this is not the only possible choice.
\linearity
\begin{proof}
Since the zero vector $0\in \mathcal Q_{t}$ (by construction, otherwise $\mathcal B_t = \emptyset$) at all timesteps, 
	for any $t \in [H]$ we certainly have (by choosing $0 = Q_{t+1} \in \mathcal Q_{t+1}$ in the outer $\sup$ of \cref{def:InherentBellmanError}):
	\begin{align}
			0 = \inf_{\theta_{t} \in \mathcal B_{t}} \sup_{(s,a)\in \StateSpace\times\ActionSpace}|\phi_t(s,a)^\top \theta_t - \(\T_t(0)\)(s,a)| = \inf_{\theta_{t} \in \mathcal B_{t}} \sup_{(s,a)\in \StateSpace\times\ActionSpace}|\phi_t(s,a)^\top \theta_t - r_t(s,a)| 
	\end{align}

Now, for the second part of the proof,
\begin{align}
	0 =  \sup_{\theta_{t+1} \in \mathcal B_{t+1}} \inf_{\theta_{t} \in \mathcal B_{t}} \sup_{(s,a)\in \StateSpace\times\ActionSpace}|\phi_t(s,a)^\top \theta_t - \(r_t(s,a) + \E_{s'\sim p_t(s,a)} V_{t+1}(\theta_{t+1})(s')\)|.
\end{align}
Using the just reward linearity just shown:
\begin{align}
	0 = \sup_{\theta_{t+1} \in \mathcal B_{t+1}} \inf_{\theta_{t} \in \mathcal B_{t}} \sup_{(s,a)\in \StateSpace\times\ActionSpace}|\phi_t(s,a)^\top \( \theta_t - \theta_t^R\) - \E_{s'\sim p_t(s,a)} V_{t+1}(\theta_{t+1})(s')|.
\end{align}
Since $\theta_t^R\in\mathcal B_t$, we certainly have $\theta^P_t\defeq\( \theta_t - \theta_t^R\) \in \mathcal B_t$. 
\end{proof}

Next we examine the relation between low rank MDPs and MDPs with no inherent Bellman error. One direction of the following proposition also appeared in \cite{yang2019sample} (proposition 2).
We recall that a measure $\psi_t$ is a positive function with $\|\psi_t(\cdot)\|_{TV} = 1$.

\LowrankVsNOIBE
\begin{proof}
$(\Rightarrow)$\\
Assume the MDP is low rank in the sense of \cref{eqn:LinearMDPequations}.
	Let $\theta_{t+1}\in\mathcal B_{t+1}$. Then 
	\begin{align}
		\T_t(Q_{t+1}(\theta_{t+1}))(s,a) & = \phi_t(s,a)^\top\theta^R_t + \int_{s'\in\StateSpace}\phi_t(s,a)\psi_t(s')V_{t+1}(\theta_{t+1})(s') ds' \\
		& = \phi_t(s,a)^\top 
\underbrace{\( \theta_t^R + \int_{s'\in\StateSpace}\psi_t(s')V_{t+1}(\theta_{t+1})(s') ds' \)}_{\defeq \theta_{t} \in \mathcal B_t}.
	\end{align}
Thus $\IBE = 0$. \\
$(\Leftarrow)$\\
Fix $N \in \mathbb N^+$ and consider the chain with a starting state in the middle ($s = 0$) with $N$ states to the left and $N$ to the right. The agent can go one unit to the right or one to the left in each timestep by choosing action $+1$ or $-1$, respectively, or stay put by choosing action $a = 0$. 
The total time available within an episode is $H=N+1$, and there is a reward in the leftmost state and a reward in the rightmost state, zero everywhere else. Formally:
\begin{itemize}
\item $\StateSpace  = \{-N-1,\dots,N+1 \}$
\item $\ActionSpace = \{-1,0,+1\}$
\item $H = N+1$
\item $p_t(s,a) = e_{s+a}$ (here $e_i$ is the canonical vector with a one in the $i$-th position and zero otherwise)
\item $r_H[H,1] = r^\star_{+1}$, $r_H[-H,-1] = r^\star_{-1}$, and $0$ otherwise, with $r^\star_{+1} \in \R$, $r^\star_{-1} \in \R$.	
\end{itemize}

Clearly the transition matrix is not low rank (in the sense of being independent of $N$), for any choice of the feature representation. For example for the policy $\pi_t(s) = 0$ we have that $P^\pi = I$, which is full rank. Now consider the feature representation:
\begin{align}
\phi_t(s,a)=
\begin{cases}
[1,0], \; \text{if} \; (s,a) = (+t,+1) \\
[0,1], \; \text{if} \; (s,a) = (-t,-1) \\
[0,0], \; \text{otherwise}.
\end{cases}
\end{align}
The feature dimensionality is $ = 2 \neq N$, so this is not a low-rank MDP according to equation \cref{eqn:LinearMDPequations}.

We claim that this gives $0$ inherent Bellman error. Indeed, it's easy to verify this by inspection, $|s_t| = t-1$ are the only two reachable states at timestep $t$ with at least an action with non-zero feature:
\begin{align}
	\forall \theta_{t+1}\quad  \exists \theta_t^+  \quad  \textrm{such that} \quad \|  Q_{t}(\theta^+_{t})  -  \T_t Q_{t+1}(\theta_{t+1}) \|_\infty = 0
\end{align}
In particular, set
$\theta^+_{t} = [\max\{0, \theta_{t+1}[1]\},\max\{0, \theta_{t+1}[2]\}]$ for $t = 1,\dots,H-1$ and $\theta^+_H  \defeq [r^\star_{+1},r^\star_{-1}]$.
\end{proof}
The next step is to show that, likewise, low-rank MDPs imply that every policy has a linearly parameterizable action-value function, but not viceversa. The first direction is established by, for example, proposition 2.3 in \cite{jin2020provably}.

\LowrankVsLP
\begin{proof}
$(\Rightarrow)$\\
Assume by induction that $Q_{t+1}^\pi\in \mathcal Q_{t+1}$,
and proceed as the first part of the proof of \cref{prop:LowrankVsNOIBE} 
(but with the Bellman operator of policy $\pi$ (as $\T_t^\pi$) in place of $\T_t$) 
to conclude $\theta_t \in \mathcal B_t$, showing the inductive step. The base case is immediate. \\
$(\Leftarrow)$\\
Now, for the viceversa not being true, consider the same MDP as in the proof of \cref{prop:LowrankVsNOIBE}; as already shown, this is not a low-rank MDP. On the other hand, the policies can be in three disjoint sets (we adopt the same feature representation as in the proof of \cref{prop:LowrankVsNOIBE}): for $|s| \leq t - 1 $ (we cannot reach states outside of this range at timestep $t$) we can write

\textbf{1) Policies that always go right}
We have $Q_t^{\pi}(s,a) = \phi_t(s,a)^\top [r^\star_{+1},0]$ (by inspection)

\textbf{2) Policies that always go left}
We have $Q_t^{\pi}(s,a) = \phi_t(s,a)^\top [0,r^\star_{-1}]$ (by inspection)

\textbf{3) All other policies}
We have $Q_t^{\pi}(s,a) = \phi_t(s,a)^\top [0,0]$ (by inspection) \\

In other words, we can represent the cumulated return of each policy.
The proof is complete, since the MDP is not low rank with this feature representation.
\end{proof}

Finally, we compare MDPs with linear architectures which have $\IBE = 0$ with those where every policy has an action-value function linearly parameterizable. As we show next, these are quite different assumptions, although an intersection is possible by combining the proofs of the prior two propositions.

\NOIBEvsLP
This suggests that, depending on the parameterization, different algorithms may be preferable for solving the MDP (i.e., finding the optimal policy). In particular, if $\IBE = 0$ then approximate value iteration converges to the global optimum; viceversa, if all policies are linearly parameterizable then approximate policy improvement should be used.
\begin{proof}
$(\Rightarrow)$\\
Consider an MDP with two groups ($A$ and $B$) of non-communicating states, i.e., with states $s^A_{1},\dots,s^A_{H}$ and $s^B_{1},\dots,s^B_{H}$. The starting state is either $s_1^A$ or $s_1^B$. There is only one action except in $s^A_{H},s^B_{H}$. From state $s^A_{i}$ the transition to $s^A_{i+1}$ is deterministic as long as $i \in [H-1]$ and likewise from $s^B_{i}$ to  $s^B_{i+1}$. In $s^A_{H}$ and $s^B_{H}$ there are two actions with identical outcome regardless of the state. In particular, both $(s^A_H,0)$ and $(s^B_H,0)$ give a return of $0$ while both $(s^A_H,1)$ and $(s^B_H,1)$  give a return of $1$; both terminate the episode.

Let the parameterization be $\phi_t(\cdot,\cdot) = 1$ for any state indexed $<H$, for the only available action. In the last timestep, $\phi_H(s^A_H,0) = \phi_H(s^B_H,0) = 0$ and $\phi_H(s^A_H,1) = \phi_H(s^B_H,1) = 1$. 

It's easy to see (by inspection) that this MDP has $\IBE = 0$: in any timestep $t\in[H-1]$ we have $\Qbar_t(s^A_{t},\cdot) = \Qbar_t(s^B_t,\cdot) = \Vbar_{t+1}(s^B_{t+1}) = \Vbar_{t+1}(s^A_{t+1})$ by using an identical parameter $ \theta_t = \theta_{t+1}$ (notice that there is only one action for $t \in [H-1]$). 
In other words, $\forall \theta_{t+1}, \exists \theta_{t} (= \theta_{t+1})$ that gives $Q_t(\cdot,\cdot) = \T_tQ_{t+1}(\cdot,\cdot)$ with $Q_t \in \mathcal Q_t$ and $Q_{t+1}\in \mathcal Q_{t+1}$ for all reachable states at timestep $t\in[H-1]$. Finally, the last timestep can be expressed as linear bandit problem. Thus $\IBE = 0 $. 

However, consider policy $\pi^{x}$ that takes two different actions in the last states, i.e., $\pi^x_{H}(s_H^A) = 1 \neq 0 = \pi^x_{H}(s_H^B) $. The return of the policies differs, indicating that for any $t \in [H-1]$, $Q_t^{\pi^x}(s^A_t,\cdot) \neq Q_t^{\pi^x}(s^B_t,,\cdot)$, but our parameterization forces $Q_t(s^A_t,\cdot) = Q_t(s^B_t,\cdot)$ if $Q_t \in \mathcal Q_t$, and therefore the policies do not have an action-value function that is linearly parameterizable.

$(\Leftarrow)$\\
 (Construction inspired by the linear bandit example in \cite{zanette19limiting})
Consider a chain mdp with states $s_{1},\dots,s_{H}$, and starting state $s_1$. \emph{Any} action deterministically leads to the next state, i.e., from $s_i$ to $s_{i+1}$, for $i\in[H-1]$, and does not yield any reward. There are two actions in each state with associated feature $\phi_t(\cdot,-1) = -1$ and $\phi_t(\cdot,+1) = +1$. In particular, notice that the approximator cannot represent the same value for different actions since  $Q_t(\theta_{t})(s_t,+1) = - Q_t(\theta_{t})(s_t,-1)$ must hold by construction.

Since there is no reward in the MDP, every policy has zero return for any state-action at any intermediate timestep, so $Q_t^\pi(s,a) = \phi_t(s,a)^\top \theta^\pi_t$ with $\theta^\pi_t = 0$ certainly holds at any $(s,a,t)$ triplet.
 Yet, for example, for $ \theta_{t+1} = 1$, the corresponding value function is (in the only possible state $s_{t+1}$) $V_{t+1}(\theta_{t+1})(s_{t+1}) = \max_{a} \phi_{t+1}(s_{t+1},a)^\top \theta_{t+1} = 1$. Quite clearly, $\( \T_t V_{t+1}(\theta_{t+1})\) (s_t,\cdot) = V_{t+1}(\theta_{t+1})(s_{t+1})$ (i.e., the value function stays constant since there are no rewards) since there is no rewards in the system and the transition is the same for both actions. However, the approximator cannot represent the same value for different actions since they use opposite (in sign) features, i.e., $Q_t(\theta_{t})(s_t,+1) = - Q_t(\theta_{t})(s_t,-1)$ must hold by construction, which means the inherent Bellman error is strictly positive. 
\end{proof}
The above construction uses an MDP with zero reward function for the sake of clarity of exposition; it is possible to augment the MDP in an obvious way to include rewards by including a ``fork'' at the beginning, similarly to \cite{zanette19limiting}.

\newpage 
\section{\Alg{}}
\subsection{First Step Analysis}

\begin{lemma}[First Step Analysis]
\label{lem:FirstStepAnalysis}
If the program of \cref{def:PlanningOptimizationProgram} admits a feasible solution then the $\overline \theta_{t}$'s must satisfy for $t \in [H]$:
\begin{align}
\overline \theta_{t} = \RecursionExpansion
\end{align}
Furthermore, outside of the failure event of \cref{def:FailureEvent} it holds that:
\begin{align}
|\( \overline Q_{t} (s,a) - \T_t\Qbar_{t+1}\)(s,a) | \leq \IBE + \|\phi_t(s,a)\|_{\Sigma^{-1}_{tk}}\( \sqrt{k}\IBE + \sqrt{\alpha_{tk}} + \sqrt{\beta_{tk}} + \sqrt{\lambda} \Radius_t\).
\end{align} 

\end{lemma}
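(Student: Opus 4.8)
The plan is to prove the two displays of \cref{lem:FirstStepAnalysis} in sequence, both resting on the error decomposition already worked out in the proof overview (\cref{eqn:FirstDecomposition}).

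\textbf{Recursion for $\overline \theta_t$.} First I would establish the recursive identity. Starting from the feasibility constraints of \cref{def:PlanningOptimizationProgram}, we have $\overline \theta_t = \widehat \theta_t + \overline \xi_t$, where $\widehat \theta_t = \Sigma_{tk}^{-1}\sum_{i=1}^{k-1}\phi_{ti}(r_{ti} + V_{t+1}(\overline \theta_{t+1})(s_{t+1,i}))$. The plan is to rewrite the bracketed target $r_{ti} + \Vbar_{t+1}(s_{t+1,i})$ as $\T_t\Qbar_{t+1}(s_{ti},a_{ti})$ plus the sample noise $\eta_{ti}(\Vbar_{t+1})$, exactly as in the first line of \cref{eqn:FirstDecomposition}. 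Then apply the inherent Bellman error identity \cref{eqn:BEcond}, which expresses $\T_t\Qbar_{t+1}(s_{ti},a_{ti}) = \phi_{ti}^\top \mathring \theta_t(\Qbar_{t+1}) - \mathring \Delta_t(\Qbar_{t+1})(s_{ti},a_{ti})$. Substituting and using $\Sigma_{tk}^{-1}\sum_{i}\phi_{ti}\phi_{ti}^\top = I - \lambda \Sigma_{tk}^{-1}$ produces the term $\mathring \theta_t(\Qbar_{t+1}) - \lambda \Sigma_{tk}^{-1}\mathring \theta_t(\Qbar_{t+1})$, while the residual and noise sums give the remaining two summation terms. Adding back $\overline \xi_t$ yields precisely the claimed \verb|\RecursionExpansion|. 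This step is purely algebraic bookkeeping.

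\textbf{The per-step error bound.} For the second display, I would premultiply the recursion by $\phi_t(s,a)^\top$ and compare against $\T_t\Qbar_{t+1}(s,a)$. Using \cref{eqn:BEcond} again to write $\phi_t(s,a)^\top\mathring \theta_t(\Qbar_{t+1}) = \T_t\Qbar_{t+1}(s,a) + \mathring \Delta_t(\Qbar_{t+1})(s,a)$, the leading $\T_t\Qbar_{t+1}(s,a)$ cancels and the $\mathring \Delta_t(s,a)$ term contributes the standalone $\IBE$ (since $\|\mathring \Delta_t\|_\infty \leq \IBE$). The four remaining terms are each bounded by Cauchy--Schwarz in the $\Sigma_{tk}^{-1}$ norm against $\|\phi_t(s,a)\|_{\Sigma_{tk}^{-1}}$: the misspecification sum $\|\sum_i \phi_{ti}\mathring \Delta_{ti}\|_{\Sigma_{tk}^{-1}} \leq \sqrt{k}\IBE$ via \cref{lem:ProjectionBound}; the noise sum $\|\sum_i \phi_{ti}\eta_{ti}\|_{\Sigma_{tk}^{-1}} \leq \sqrt{\beta_{tk}}$ from the self-normalized bound of \cite{Abbasi11} restricted to be outside the failure event of \cref{def:FailureEvent}; the regularization term $\lambda\|\Sigma_{tk}^{-1}\mathring \theta_t\|_{\Sigma_{tk}} = \sqrt{\lambda}\|\mathring \theta_t\|_{\Sigma_{tk}^{-1}} \cdot \sqrt{\lambda} \leq \sqrt{\lambda}\Radius_t$ using $\mathring \theta_t \in \mathcal B_t$ and $\Sigma_{tk}^{-1} \preceq \lambda^{-1} I$; and the $\overline \xi_t$ term bounded by $\sqrt{\alpha_{tk}}$ directly from its feasibility constraint $\|\overline \xi_t\|_{\Sigma_{tk}} \leq \sqrt{\alpha_{tk}}$ together with Cauchy--Schwarz. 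Summing these four contributions and the standalone $\IBE$ gives the stated inequality.

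\textbf{Main obstacle.} The routine algebra is the $\Sigma_{tk}^{-1}\sum\phi_{ti}\phi_{ti}^\top = I - \lambda\Sigma_{tk}^{-1}$ rearrangement and the careful tracking of which $\Qbar_{t+1}$ each residual is evaluated at. The genuinely nontrivial ingredient is the misspecification bound $\|\sum_i \phi_{ti}\mathring \Delta_{ti}\|_{\Sigma_{tk}^{-1}} \leq \sqrt{k}\IBE$, which is \emph{not} a trivial Cauchy--Schwarz estimate: a naive bound would give $\sqrt{k/\lambda}\,\IBE$ or worse, and obtaining the clean $\sqrt{k}\IBE$ with the right constant requires the projection argument of \cref{lem:ProjectionBound} (this is exactly where the paper advertises saving a $\sqrt{d}$ factor over prior work). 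I expect that lemma to carry the real weight; everything else is deterministic manipulation plus one invocation of the self-normalized martingale concentration to control the noise term outside the failure event.
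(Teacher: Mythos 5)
Your proposal is correct and follows essentially the same route as the paper's proof: expand $\widehat \theta_t$ from its constraint, absorb the empirical covariance via $\Sigma_{tk}^{-1}\sum_i \phi_{ti}\phi_{ti}^\top = I - \lambda \Sigma_{tk}^{-1}$, and then bound the resulting terms by Cauchy--Schwarz combined with \cref{lem:ProjectionBound} (misspecification), the failure-event definition (noise), the feasibility constraint on $\overline \xi_t$, and the worst-case bound $\|\cdot\|_{\Sigma_{tk}^{-1}} \leq \lambda^{-1/2}\|\cdot\|_2$ (regularization). The only ingredient you leave implicit is that feasibility forces $\overline \theta_{t+1} \in \mathcal B_{t+1}$ and hence $\Qbar_{t+1} \in \mathcal Q_{t+1}$, which is precisely what licenses invoking the inherent-Bellman-error identity with $\|\mathring \Delta_t(\Qbar_{t+1})\|_\infty \leq \IBE$ --- the paper flags this as the key step.
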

\begin{proof}
We start by recalling (see constraint of the program of  \cref{def:PlanningOptimizationProgram}):
\begin{align}
\overline \theta_{t} \defeq \overline \xi_{t} + \widehat \theta_{t}.
\end{align}
Now we use the fact that $\widehat \theta_{t}$ must satisfy its constraint written in the program of  \cref{def:PlanningOptimizationProgram}, where $\Vbar_{t+1}(s') = \max_{a'}\Qbar_{t+1}(s',a')$ and $\Qbar_{t+1}(s',a') = \phi_{t+1}(s',a')^\top \overline \theta_{t+1}$:
\begin{align*}
		& =  \overline \xi_{t} + \( \sum_{t=1}^{k-1}\phi_{ti}\phi_{ti}^\top + \lambda I \)^{-1} \sum_{i=1}^{k-1}\phi_{ti}\Big[ r_{ti} + \Vbar_{t+1}(s_{t+1,i})  \Big] \\
		& = \overline \xi_{t} + \( \sum_{t=1}^{k-1}\phi_{ti}\phi_{ti}^\top + \lambda I \)^{-1} \sum_{i=1}^{k-1}\phi_{ti}\Big[ r_t(s_{ti},a_{ti}) + \E_{s'\sim p_t(s_{ti},a_{ti})}\Vbar_{t+1,k}(s') + \eta_{ti}(\Vbar_{t+1}) \Big]
\numberthis{\label{eqn:ExpansionBegin}}
\end{align*}
where in particular, 
\begin{align}
\eta_{ti}(\Vbar_{t+1}) \defeq r_{ti} - r_t(s_{ti},a_{ti}) + \Vbar_{t+1}(s_{t+1,i}) -  \E_{s'\sim p_t(s_{tk},a_{tk})} \Vbar_{t+1}(s').
\end{align}
Recall the following definition of Bellman operator:
\begin{align}
\label{eqn:TQ1}
\( \T_t \Qbar_{t+1}\)(s_{ti},a_{ti}) \defeq r_t(s_{ti},a_{ti})  + \E_{s'\sim p_t(s_{ti},a_{ti})}\max_{a'}\Qbar_{t+1}(s',a').
\end{align}
The key step is now the following: by construction, if a solution to the program of \cref{def:PlanningOptimizationProgram} exists, then in particular  $(\overline \theta_{1},\dots,\overline \theta_{H})$ must satisfy the ball constraint $\overline \theta_{t} \in \mathcal B_t$ for all $t\in[H]$ which implies that each $\Qbar_t$ function belongs to the prescribed functional space $\mathcal Q_t$.
With this in mind, denote with $\mathring \theta_t(\Qbar_{t+1})$ the parameter $\in \mathcal B_t$ that best approximates the Bellman backup of $\Qbar_{t+1}$
and with $\mathring \Delta_t(\Qbar_{t+1})$ the ``residual'' function, see  \cref{tab:MainNotation}.
This allows us to use the value of the finite inherent Bellman error of  \cref{def:InherentBellmanError} to write:
\begin{align}
\label{eqn:eqn15}
	\( \T_t \Qbar_{t+1}\)(s,a) = \phi_t(s,a)^\top\mathring \theta_t(\Qbar_{t+1})+\mathring \Delta_t(\Qbar_{t+1})(s,a).
\end{align}
Comparing the above display (with $(s,a) = (s_{ti},a_{ti})$) against \cref{eqn:TQ1} and then plugging back into \cref{eqn:ExpansionBegin} and using the definition of $\Sigma^{-1}_{tk}$ we can write:
\begin{align*}
		& = \overline \xi_{t} + \(\sum_{i=1}^{k-1}\phi_{ti}\phi_{ti}^\top  + \lambda I \)^{-1}\(\sum_{i=1}^{k-1}\phi_{ti}^\top\overbrace{\(\phi_{ti}\mathring \theta_t(\Qbar_{t+1}) + 
	\mathring \Delta_{t}(\Qbar_{t+1})(s_{ti},a_{ti}) \)}^{=\T_t(\Qbar_{t+1})(s_{ti},a_{ti})} + \lambda \mathring \theta_t(\Qbar_{t+1}) - \lambda\mathring \theta_t(\Qbar_{t+1}) \)  + \Sigma^{-1}_{tk}\sum_{i=1}^{k-1} \phi_{ti} \eta_{ti}(\Vbar_{t+1}) \\
 		& = \RecursionExpansion	
\numberthis{\label{eqn:middledisplay}}
\end{align*}
	This proves the first part of the lemma. 
	
	To show the second part, premultiply the above display by $\phi_t(s,a)^\top$; the left hand side becomes $\Qbar_t(s,a)$ by definition and we proceed to bound each term of the rhs.
	First, \cref{eqn:eqn15} allows us to write:
		\begin{align}
		\label{eqn:17}
		\phi_t(s,a)^\top\mathring \theta_t(\Qbar_{t+1}) \defeq \(\T_t\Qbar_{t+1}\)(s,a) - \mathring \Delta_t(\Qbar_{t+1})(s,a)
	\end{align}
	with $|\mathring \Delta_t(\Qbar_{t+1})(s,a)|
\leq \IBE$.	Cauchy-Schwartz and then \cref{lem:ProjectionBound} give:
	\begin{align}
	\Big| \phi_t(s,a)^\top\Sigma^{-1}_{tk}\sum_{i=1}^{k-1} \phi_{ti}^\top 
	\mathring \Delta_{t}(\Qbar_{t+1})(s_{ti},a_{ti}) \Big| \leq \| \phi_t(s,a) \|_{\Sigma^{-1}_{tk}} \|\sum_{i=1}^{k-1} \phi_{ti}^\top 
	\mathring \Delta_{t}(\Qbar_{t+1})(s_{ti},a_{ti}) \|_{\Sigma^{-1}_{tk}}	\leq  \| \phi_t(s,a) \|_{\Sigma^{-1}_{tk}} \sqrt{k}\IBE.
	\end{align}
Again Cauchy-Schwartz as done above allows us to write (outside of the failure event):
\begin{align}
	\Big|\phi_t(s,a)^\top \Sigma^{-1}_{tk}\sum_{i=1}^{k-1} \phi_{ti} \eta_{ti}(\Vbar_{t+1}) \Big|& \leq \sqrt{\beta_{tk}}\|\phi_t(s,a)\|_{\Sigma^{-1}_{tk}}.
	\end{align}
Cauchy-Schwartz applied to the term below also gives (by definition / constraints on $\overline \xi_{t}$):
\begin{align}
	\Big|\phi_t(s,a)^\top \overline \xi_{t}\Big|& \leq \sqrt{\alpha_{tk}}\|\phi_t(s,a)\|_{\Sigma^{-1}_{tk}}.
\end{align}
Finally, Cauchy-Schwartz with \cref{lem:WorstCaseBound} gives (since $\mathring \theta_t(\Qbar_{t+1}) \in \mathcal B_t$ ):
\begin{align}
	\Big|\phi_t(s,a)^\top \lambda\Sigma^{-1}_{tk}\mathring \theta_t(\Qbar_{t+1}) \Big| & \leq \lambda \| \phi_t(s,a)\|_{\Sigma^{-1}_{tk}} \|\mathring \theta_t(\Qbar_{t+1}) \|_{\Sigma^{-1}_{tk}}  \leq \sqrt{\lambda} \Radius_t\|\phi_t(s,a)\|_{\Sigma^{-1}_{tk}}.
\end{align}
Plugging the bounds back gives the thesis.
\end{proof}

\newpage
\subsection{Failure Event and their Probabilities}
In this section we introduce the failure modes of the algorithm. Whenever a failure event occurs, we cannot guarantee the overall performance of the algorithm. 
\begin{definition}[Failure Events]
\label{def:FailureEvent}
	We define the following failure event in episode $k$:
	\begin{align}
	F_{tk} & \defeq \Bigg\{ \exists V_{t+1} \in \mathcal V_{t+1} \quad \textrm {such that} \quad \Big\|\sum_{i=1}^{k-1} \phi_{ti} \(r_{ti} - r_t(s_{ti},a_{ti}) + V_{t+1}(s_{t+1,i}) - \E_{s'\sim p_t(s_{ti},a_{ti})} V_{t+1}(s') \) \Big\|_{\Sigma^{-1}_{tk}} > \sqrt{\beta_{tk}}	\Bigg\}.
	\end{align}
We call failure event in episode $k$ the union of these events over the within-episode timestep $t\in[H]$:
\begin{align}
F_{k} \defeq \bigcup_{t \in [H]} F_{tk},
\end{align}
and failure event of the algorithm the union of the above events over all the episodes:
\begin{align}
F \defeq \bigcup_{k \in [K]} F_{k}.
\end{align}

\end{definition}

\begin{lemma}[Total Failure Probability]
\label{lem:FailureEventProbability}
Under assumption \ref{ass:MainAssumption} it holds that:
\begin{align}
\Pro\( F \) \leq \frac{\delta}{2}, \quad \forall k \in [K].
\end{align}
\end{lemma}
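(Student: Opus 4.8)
The plan is to bound the failure probability $\Pro(F)$ by a union bound over episodes and timesteps, reducing everything to controlling a single self-normalized martingale term uniformly over the function class $\mathcal V_{t+1}$. The central object is, for each fixed $V_{t+1} \in \mathcal V_{t+1}$, the sum $S_{tk}(V_{t+1}) \defeq \sum_{i=1}^{k-1} \phi_{ti}\, \eta_{ti}(V_{t+1})$, where $\eta_{ti}(V_{t+1})$ is the one-step noise. Under Asm.~\ref{ass:MainAssumption} this noise is $1$-subgaussian conditioned on the past (the reward noise plus the transition noise for a value function in $[-1,1]$), so for a \emph{fixed} $V_{t+1}$ the self-normalized concentration inequality of \citet{Abbasi11} (their Theorem~1) gives, with probability at least $1-\delta'$,
$$\|S_{tk}(V_{t+1})\|_{\Sigma_{tk}^{-1}} \leq \sqrt{2\ln\!\Big(\tfrac{\det(\Sigma_{tk})^{1/2}\det(\lambda I)^{-1/2}}{\delta'}\Big)}.$$

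The difficulty is that $F_{tk}$ quantifies over \emph{all} $V_{t+1} \in \mathcal V_{t+1}$, and $\Vbar_{t+1}$ is itself data-dependent, so I cannot apply the fixed-function bound directly. The main obstacle, and the technical heart of the lemma, is therefore the \textbf{covering argument}. First I would build an $\epsilon$-net over the parameter ball $\mathcal B_{t+1} \subset \R^{d_{t+1}}$ of cardinality $\mathcal N \leq (1 + 4\Radius_t L_\phi \sqrt{k}/\epsilon)^{d_{t+1}}$ (a standard volumetric bound for a Euclidean ball of radius $\Radius_t$), and observe that since $V_{t+1}(\theta) = \max_a \phi_{t+1}(s,a)^\top\theta$ is $1$-Lipschitz in $\theta$ (with constant $L_\phi$), covering $\mathcal B_{t+1}$ suffices to cover $\mathcal V_{t+1}$ pointwise. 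Then I would apply the fixed-function bound with failure probability $\delta'$ to each of the $\mathcal N$ net centers and union-bound, so that with probability $1-\mathcal N\delta'$ the bound holds simultaneously at every center. The remaining step is to control the approximation incurred by snapping an arbitrary $V_{t+1}$ to its nearest net center: the discretization error propagates through $\eta_{ti}$ and is controlled by $\|\sum_i \phi_{ti}\|_{\Sigma_{tk}^{-1}}$ times $\epsilon$, which is why the choice of $\epsilon \approx 1/\sqrt{k}$ (or similar) keeps this term $O(1)$ — this is precisely the ``$+1$'' appearing in the definition of $\sqrt{\beta_{tk}}$ in Table~\ref{tab:MainNotation}.

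With the single-$(t,k)$ bound in hand, I would verify that the definition of $\beta_{tk}$ exactly absorbs these pieces: the $d_t \ln(1+L_\phi^2 k/d_t)$ term comes from the determinant-trace inequality (Lemma~10 of \citet{Abbasi11}) bounding $\ln\det(\Sigma_{tk})^{1/2} = \widetilde O(d_t)$, the $2d_{t+1}\ln(1+4\Radius_t L_\phi \sqrt{k})$ term is the log-covering-number $\ln\mathcal N$, and the $\ln(1/\delta')$ term is the per-test failure budget. Finally I would assemble the overall probability: since $F = \bigcup_{k\in[K]}\bigcup_{t\in[H]} F_{tk}$, a union bound over the $T = KH$ pairs gives $\Pro(F) \leq T \delta' = T \cdot \tfrac{\delta}{2T} = \tfrac{\delta}{2}$, using the choice $\delta' = \delta/(2T)$ from Table~\ref{tab:MainNotation}. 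The only subtlety worth flagging is that the net, and hence $\mathcal N$, is allowed to depend on $k$ through the effective radius $\Radius_t L_\phi \sqrt{k}$ of the image of the feature-weighted sum; since $\beta_{tk}$ is likewise indexed by $k$, this causes no circularity.
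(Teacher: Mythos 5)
Your proposal is correct and follows essentially the same route as the paper: a union bound over the $T=KH$ pairs $(t,k)$ with per-event budget $\delta'=\delta/(2T)$, where each $F_{tk}$ is controlled by covering the parameter ball $\mathcal B_{t+1}$ (hence $\mathcal V_{t+1}$ by Lipschitzness in $\theta$), applying the self-normalized bound of Theorem~1 of \cite{Abbasi11} at each net center, and absorbing the discretization error of order $\sqrt{k}\,\epsilon'$ into the ``$+1$'' via $\epsilon'\approx 1/(2\Lphi\sqrt{k})$. The only cosmetic imprecision is describing the discretization term as $\|\sum_i\phi_{ti}\|_{\Sigma_{tk}^{-1}}$ times $\epsilon$ rather than as $\|\sum_i\phi_{ti}b_i\|_{\Sigma_{tk}^{-1}}$ with $|b_i|\leq\Lphi\epsilon'$ bounded by the projection lemma, but the scaling and conclusion are identical to the paper's.
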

\begin{proof}
By union bound:	
\begin{align}
 \Pro\(F\) & \defeq \Pro\(  \bigcup_{k \in [K]}  \bigcup_{t \in [H]} F_{tk} \) \\
 & \leq \sum_{k=1}^K \sum_{t=1}^H \Pro\( F_{tk} \) \\
 & \leq  T \delta'.
\end{align}
The last step is from \cref{lem:TransitionsHPbound};
the thesis follows by setting $\delta' = \frac{\delta}{2T}$.
\end{proof}

\begin{lemma}[Transition Noise High Probability Bound]
\label{lem:TransitionsHPbound}
If $\lambda = 1$, with probability at least $1-\delta'$ for all $V_{t+1}\in\mathcal V_{t+1}$ it holds that
\begin{align}
	\Big\|\sum_{i=1}^{k-1} \phi_{ti} \(r_{ti} - r_t(s_{ti},a_{ti}) + V_{t+1}(s_{t+1,i}) - \E_{s'\sim p_t(s_{ti},a_{ti})} V_{t+1}(s') \) \Big\|_{\Sigma^{-1}_{tk}} \leq \sqrt{\beta_{tk}}
	\end{align}
	where:
	\begin{align}
	\sqrt{\beta_{tk}} \defeq \sqrtbetadef.	
	\end{align}
\end{lemma}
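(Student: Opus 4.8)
The plan is to combine the self-normalized martingale tail bound of \cite{Abbasi11} for a \emph{single fixed} value function with a covering (union-bound) argument over the class $\mathcal V_{t+1}$. The covering is unavoidable because the value function that the program of \cref{def:PlanningOptimizationProgram} eventually plugs in is data-dependent, and so cannot be treated as predictable; the lemma therefore has to hold \emph{uniformly} over $\mathcal V_{t+1}$.

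First I would fix a deterministic $V_{t+1}\in\mathcal V_{t+1}$. With the natural filtration (everything observed up to and including $\phi_{ti}$ at the $i$-th visit of timestep $t$), the scalar $\eta_{ti}(V_{t+1})$ is a martingale difference and is $1$-subgaussian by the third item of Assumption~\ref{ass:MainAssumption}, since $V_{t+1}(s')=\max_{a'}\phi_{t+1}(s',a')^\top\theta_{t+1}$ is exactly $\max_{a'}Q_{t+1}(s',a')$. Hence $\sum_{i=1}^{k-1}\phi_{ti}\eta_{ti}(V_{t+1})$ is a vector-valued self-normalized martingale, and Theorem~1 of \cite{Abbasi11} with $\lambda=1$ gives, with probability at least $1-\delta'/N$ (the $N$ anticipating the union bound below),
\[
\Big\|\sum_{i=1}^{k-1}\phi_{ti}\eta_{ti}(V_{t+1})\Big\|^2_{\Sigma^{-1}_{tk}} \le 2\ln\frac{\det(\Sigma_{tk})^{1/2}\,N}{\delta'}.
\]
The determinant--trace inequality (Lemma~10 of \cite{Abbasi11}) bounds $\ln\det(\Sigma_{tk})^{1/2}\le\tfrac{d_t}{2}\ln(1+\Lphi^2 k/d_t)$, which produces the first term of $\beta_{tk}$.

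Next I would discretize. Since $|V_{t+1}(\theta)(s)-V_{t+1}(\theta')(s)|\le\Lphi\|\theta-\theta'\|_2$, an $\epsilon/\Lphi$-net of $\mathcal B_{t+1}$ (a ball of radius $\le\Radius_{t+1}$ in $\R^{d_{t+1}}$) induces an $\epsilon$-net $\mathcal C$ of $\mathcal V_{t+1}$ in sup-norm, with $|\mathcal C|=N\le(1+2\Radius_{t+1}\Lphi/\epsilon)^{d_{t+1}}$; a union bound of the previous display over $\mathcal C$ then contributes $2\ln N\le 2d_{t+1}\ln(1+2\Radius_{t+1}\Lphi/\epsilon)$. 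For an arbitrary $V\in\mathcal V_{t+1}$ I pick the nearest net point $\widetilde V$ and split
\[
\Big\|\sum_i\phi_{ti}\eta_{ti}(V)\Big\|_{\Sigma^{-1}_{tk}}\le\Big\|\sum_i\phi_{ti}\eta_{ti}(\widetilde V)\Big\|_{\Sigma^{-1}_{tk}}+\Big\|\sum_i\phi_{ti}\,g_i\Big\|_{\Sigma^{-1}_{tk}},
\]
where $g_i\defeq\eta_{ti}(V)-\eta_{ti}(\widetilde V)$ satisfies $|g_i|\le 2\epsilon$ because $\|V-\widetilde V\|_\infty\le\epsilon$. The crucial quantitative step is to control this discretization term sharply: using $\|\sum_i\phi_{ti}g_i\|_{\Sigma^{-1}_{tk}}=\sup_{\|x\|_{\Sigma_{tk}}\le1}\sum_i(x^\top\phi_{ti})g_i$ together with $\sum_i(x^\top\phi_{ti})^2=x^\top(\Sigma_{tk}-\lambda I)x\le1$ and Cauchy--Schwarz yields $\|\sum_i\phi_{ti}g_i\|_{\Sigma^{-1}_{tk}}\le\sqrt{\sum_i g_i^2}\le 2\epsilon\sqrt{k}$. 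Choosing $\epsilon=1/(2\sqrt{k})$ makes this at most $1$ (the additive $+1$ in $\beta_{tk}$) and turns the metric-entropy term into $2d_{t+1}\ln(1+4\Radius_{t+1}\Lphi\sqrt{k})$; collecting the three contributions under the square root and the $+1$ reproduces $\sqrtbetadef$, after which the union over $t\in[H],k\in[K]$ in \cref{lem:FailureEventProbability} closes the argument.

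The main obstacle is precisely the uniformity over the random $V_{t+1}$: the self-normalized inequality only applies to a predictable function, so the whole difficulty is engineering the net so that its log-cardinality stays $O(d_{t+1}\log k)$ while the discretization error in the data-dependent norm $\|\cdot\|_{\Sigma^{-1}_{tk}}$ stays $O(1)$. The variational bound giving $2\epsilon\sqrt{k}$ (rather than a naive $2\epsilon k\Lphi$) is what lets the net be coarse, placing only $\sqrt{k}$ inside the logarithm and thereby keeping the confidence width at the bandit-optimal $\widetilde O(\sqrt{d_t})$. (The statement writes $\Radius_t$ where the covering naturally produces $\Radius_{t+1}$; this is harmless since $\Radius_{t+1}\le\Radius$ may be used throughout.)
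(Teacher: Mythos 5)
Your proposal is correct and follows essentially the same route as the paper's proof: an $\epsilon$-net of $\mathcal B_{t+1}$ inducing a sup-norm cover of $\mathcal V_{t+1}$, a union bound of Theorem~1 of \cite{Abbasi11} over the net, a discretization error of order $\epsilon\sqrt{k}$ controlled by the same inequality as \cref{lem:ProjectionBound} (which you rederive by duality of the $\Sigma_{tk}$- and $\Sigma_{tk}^{-1}$-norms rather than citing the lemma), the choice $\epsilon\propto 1/\sqrt{k}$, and the determinant--trace inequality. Your parenthetical observation that the covering argument naturally yields $\Radius_{t+1}$ where the stated $\beta_{tk}$ writes $\Radius_t$ is accurate and, as you say, immaterial.
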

\begin{proof}
We start by constructing an $\epsilon$-cover for the set $\mathcal V_{t+1}$ using the supremum distance. To achieve this, we construct an $\epsilon$-cover for the parameter $\theta_{t+1} \in \mathcal B_{t+1}$ using \fullref{lem:CoveringNumberOfEuclideanBall}. This ensures that there exists a set $\mathcal D_{t+1} \subseteq \mathcal B_{t+1}$, containing $(1+2\Radius/\epsilon')^{d_{t+1}}$ vectors $\overset{\triangle}{\theta}_{t+1}$ that well approximates any $\theta_{t+1}\in\mathcal B_{t+1}$:
\begin{align}
\exists \mathcal D_{t+1}\subseteq \mathcal B_{t+1}  \quad \textrm{such that} \quad \forall \theta_{t+1} \in \mathcal B_{t+1}, \quad \exists \overset{\triangle}{\theta}_{t+1} \in \mathcal D_{t+1} \quad \textrm{such that} \quad \| \theta_{t+1} - \overset{\triangle}{\theta}_{t+1} \|_2 \leq \epsilon'.
\end{align}
Let $\overset{\triangle}{V}_{t+1}(s) \defeq \max_{a} \phi_{t+1}(s,a)^\top \overset{\triangle}{\theta}_{t+1}$, where $\overset{\triangle}{\theta}_{t+1} = \argmin_{\overset{\triangle}{\theta}_{t+1} \in \mathcal D_{t+1}} \| \overset{\triangle}{\theta}_{t+1} - \theta_{t+1} \|_2 $.
For any fixed $s \in \StateSpace$ we have that:
\begin{align*}
|\big(V_{t+1} - \overset{\triangle}{V}_{t+1} \big)(s) |& = |\max_{a'} \phi_{t+1}(s,a')^\top \theta_{t+1} - \max_{a''} \phi_{t+1}(s,a'') \overset{\triangle}{\theta}_{t+1} | \\
& \leq | \max_{a} \phi_t(s,a)^\top\big( \theta_{t+1} - \overset{\triangle}{\theta}_{t+1} \big) | \\
& \leq \max_a \| \phi_{t+1}(s,a)\|_2\| \theta_{t+1} - \overset{\triangle}{\theta}_{t+1} \|_2 \\
& \leq \Lphi \epsilon'.
\numberthis{\label{eqn:coveringargument}}
\end{align*}
By using the triangle inequality we can write:
\begin{align*}
	& \Big\|\sum_{i=1}^{k-1} \phi_{ti} \( r_{ti} - r_t(s_{ti},a_{ti}) + V_{t+1}(s_{t+1,k}) - \E_{s'\sim p_t(s_{ti},a_{ti})} V_{t+1}(s') \) \Big\|_{\Sigma^{-1}_{tk}} & \\
	& \leq \Big\|\sum_{i=1}^{k-1} \phi_{ti} \( r_{ti} - r_t(s_{ti},a_{ti}) + \overset{\triangle}{V}_{t+1}(s_{t+1,k}) - \E_{s'\sim p_t(s_{ti},a_{ti})} \overset{\triangle}{V}_{t+1}(s') \) \Big\|_{\Sigma^{-1}_{tk}}+  \\
	& + \Big\| \sum_{i=1}^{k-1} \phi_{ti} \(  \E_{s'\sim p_t(s_{ti},a_{ti})} \overset{\triangle}{V}_{t+1}(s') -  \E_{s'\sim p_t(s_{ti},a_{ti})} V_{t+1}(s') \) \Big\|_{\Sigma^{-1}_{tk}} \\
	& + \Big\| \sum_{i=1}^{k-1} \phi_{ti} \( V_{t+1}(s_{t+1,i}) - \overset{\triangle}{V}_{t+1}(s_{t+1,i}) \) \Big\|_{\Sigma^{-1}_{tk}}.
\numberthis{\label{eqn:Equation1}}
\end{align*}
Each of the last two terms above can be written for some $b_i$'s (different for each of the two terms) as $ \Big\| \sum_{i=1}^{k-1} \phi_{ti} b_i \Big\|_{\Sigma^{-1}_{tk}}$. The projection lemma, \cref{lem:ProjectionBound} ensures (here we are using \cref{eqn:coveringargument} to bound the $b_i$'s):
\begin{align}
	 \Big\| \sum_{i=1}^{k-1} \phi_{ti} b_i \Big\|_{\Sigma^{-1}_{tk}} \leq \Lphi \epsilon' \sqrt{k}.
\end{align}

Now we proceed to compute the probability of the event in the theorem statement.

Denote with $C$ the event reported below, which is a large deviation bound on the first term on the rhs of \fullref{eqn:Equation1}.
\begin{align}
C(\overset{\triangle}{\theta}_{t+1})  \defeq \Bigg\{ \Big\|\sum_{i=1}^{k-1} \phi_{ti} \( r_{ti} - r_t(s_{ti},a_{ti}) + \overset{\triangle}{V}_{t+1}(s_{t+1,i}) - \E_{s'\sim p_t(s_{ti},a_{ti})} \overset{\triangle}{V}_{t+1}(s') \) \Big\|^2_{\Sigma^{-1}_{tk}} > 2\times(1)^2\ln\(\frac{\det(\Sigma_{tk})^{\frac{1}{2}}\det\( \lambda I\)^{-\frac{1}{2}}}{\delta''} \) \Bigg\}.
\end{align}

We obtain that:
\begin{align}
\Pro\Bigg( \bigcup_{\overset{\triangle}{\theta}_{t+1} \in \mathcal D_{t+1}} C(\overset{\triangle}{\theta}_{t+1}) \Bigg) \leq \sum_{\overset{\triangle}{\theta}_{t+1} \in \mathcal D_{t+1}}\Pro\Bigg(  C(\overset{\triangle}{\theta}_{t+1}) \Bigg) \leq  (1+2\Radius_{t+1}/\epsilon')^{d_{t+1}}\delta'' \defeq \delta'
\end{align}
where the last inequality above follows from Theorem 1 in \cite{Abbasi11} with $R = 1$ (the reward and transitions are $1$-subgaussian by assumption \ref{ass:MainAssumption}).
In particular, we set
\begin{align}
\delta'' = \frac{\delta'}{(1+2\Radius_{t+1}/\epsilon')^{d_{t+1}}}	
\end{align}
from the prior display and so with probability $1-\delta'$ we have upper bounded \fullref{eqn:Equation1} by:
\begin{align}
 \sqrt{2\ln\(\frac{\det(\Sigma_{tk})^{\frac{1}{2}}\det\( \lambda I\)^{-\frac{1}{2}}(1+2\Radius_{t+1}/\epsilon')^{d_{t+1}}}{\delta'} \)} + 2\Lphi\epsilon'\sqrt{k}.	
\end{align}
If we now pick
\begin{align}
\epsilon' = \frac{1}{2\Lphi\sqrt{k}}
\end{align}
we get:
\begin{align}
\sqrt{2\ln\(\frac{\det(\Sigma_{tk})^{\frac{1}{2}}\lambda^{-\frac{d_t}{2}}(1+2\Radius_{t+1}/\epsilon')^{d_{t+1}}}{\delta'} \)}+1  = \sqrt{2}\sqrt{\frac{1}{2}\ln\(\det(\Sigma_{tk})\) -\frac{d_t}{2}\ln\(\lambda\) + d_{t+1}\ln(1+2\Radius_{t+1}/\epsilon') + \ln\(\frac{1}{\delta'} \)} + 1
\end{align}
Finally, by setting $\lambda = 1$ and using the Determinant-Trace Inequality (see lemma 10 of \cite{Abbasi11}) we obtain $\det(\Sigma_{tk}) \leq \( 1 +  \Lphi^2 k/d_t\)^{d_t}$
\begin{align}
	\leq \sqrtbetadef \defeq \sqrt{\beta_{tk}}.
\end{align}

\end{proof}

\begin{lemma}[Covering Number of Euclidean Ball]
\label{lem:CoveringNumberOfEuclideanBall}
For any $\epsilon > 0$, the $\epsilon$-covering number of the Euclidean ball
$\R^d$ with radius $R>0$ is upper bounded by $(1+2R/\epsilon)^d$.
\end{lemma}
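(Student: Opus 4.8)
The plan is to use the standard volumetric packing argument. First I would construct a maximal $\epsilon$-separated subset $\{x_1,\dots,x_N\}$ of the ball $B(0,R)\subseteq\R^d$, i.e.\ a collection of points that are pairwise at Euclidean distance strictly greater than $\epsilon$, chosen to be maximal in the sense that no further point of $B(0,R)$ can be adjoined while preserving this separation. Such a maximal set exists by a routine greedy argument (or by compactness of $B(0,R)$, which forces any strictly $\epsilon$-separated set to be finite).

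The key observation is that a maximal $\epsilon$-separated set is automatically an $\epsilon$-cover. Indeed, if some $y\in B(0,R)$ were at distance greater than $\epsilon$ from every $x_i$, then $y$ could be appended to the set without violating separation, contradicting maximality. Hence every point of the ball lies within $\epsilon$ of some $x_i$, so $N$ is an upper bound on the $\epsilon$-covering number, and it remains only to bound $N$.

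For that bound I would compare volumes. Since the centers are pairwise more than $\epsilon$ apart, the open balls $B(x_i,\epsilon/2)$ are pairwise disjoint (a common point $z$ would give $\|x_i-x_j\|\le\|x_i-z\|+\|z-x_j\|<\epsilon$). Moreover each $B(x_i,\epsilon/2)$ is contained in the enlarged ball $B(0,R+\epsilon/2)$, since $\|x_i\|\le R$. Writing $\mathrm{vol}(\cdot)$ for Lebesgue measure and using that a Euclidean ball of radius $r$ has volume $c_d r^d$ for a dimension-dependent constant $c_d$, disjointness together with containment yields
$$N\cdot c_d\,(\epsilon/2)^d \;\le\; c_d\,(R+\epsilon/2)^d.$$
Cancelling $c_d$ and rearranging gives $N\le\big((R+\epsilon/2)/(\epsilon/2)\big)^d=(1+2R/\epsilon)^d$, exactly the claimed bound.

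The main obstacle here is essentially bookkeeping rather than genuine difficulty: one must keep careful track of strict versus non-strict inequalities in the separation condition so that the radius-$\epsilon/2$ balls are genuinely disjoint, and one must use the enlarged containing ball of radius $R+\epsilon/2$ (rather than $R$) to obtain the clean constant $1+2R/\epsilon$ instead of a looser one. No heavy machinery is needed, and the dimension-dependent volume constant $c_d$ cancels and never has to be computed explicitly.
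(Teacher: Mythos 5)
Your proof is correct: the maximal $\epsilon$-separated set plus volume-comparison argument is the standard proof of this bound, and it is essentially the same argument as in the reference (Lemma 5.2 of Vershynin's notes) that the paper cites in lieu of giving a proof. No issues.
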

\begin{proof}
See for example Lemma 5.2 in \cite{vershynin2010introduction}.	
\end{proof}

Finally, the following martingale concentration inequality is well known and will be used later when bounding the regret.
\begin{lemma}[Azuma-Hoeffding Inequality]
\label{prop:Azuma}
Let $X_i$ be a martingale difference sequence such that $X_i \in [-A,A]$ for some $A>0$. Then with probability at least $1-\delta'$ it holds that:
\begin{align}
 \Big| \sum_{i=1}^{n} X_i \Big| \leq \sqrt{2A^2n\ln\( \frac{1}{\delta'} \)}.
\end{align}
\end{lemma}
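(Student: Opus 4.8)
The plan is to prove this classical bound via the standard exponential-moment (Chernoff) method, so I only sketch the steps rather than grind through the elementary estimates. First I would fix a parameter $s > 0$ and a threshold $u > 0$, and write $\mathcal F_i$ for the filtration generated by $X_1,\dots,X_i$, so that the martingale difference property reads $\E[X_i \mid \mathcal F_{i-1}] = 0$. The technical heart is a pointwise bound on the \emph{conditional} moment generating function of each increment: since $X_i$ has conditional mean zero and lives in $[-A,A]$, Hoeffding's lemma gives $\E[e^{sX_i} \mid \mathcal F_{i-1}] \leq \exp(s^2 (2A)^2 / 8) = \exp(s^2 A^2 / 2)$. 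This inequality follows from the convexity of $x \mapsto e^{sx}$ on $[-A,A]$ together with an elementary optimization of the logarithm of the upper envelope; I would either reproduce it in a line or cite it directly.

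Next I would assemble the MGF of the full sum by peeling off one increment at a time using the tower property. Conditioning on $\mathcal F_{n-1}$ lets me pull out the $\mathcal F_{n-1}$-measurable factor $\exp(s \sum_{i=1}^{n-1} X_i)$, and the conditional bound above yields $\E[\exp(s \sum_{i=1}^n X_i)] \leq e^{s^2 A^2 / 2}\, \E[\exp(s \sum_{i=1}^{n-1} X_i)]$; iterating $n$ times gives $\E[\exp(s \sum_{i=1}^n X_i)] \leq \exp(n s^2 A^2 / 2)$. Markov's inequality applied to the nonnegative variable $\exp(s \sum_i X_i)$ then produces $\Pro(\sum_{i=1}^n X_i \geq u) \leq \exp(-su + n s^2 A^2 / 2)$, and optimizing the free parameter at $s = u/(nA^2)$ collapses the right-hand side to the sub-Gaussian tail $\exp(-u^2/(2nA^2))$.

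Finally I would invert this tail: setting $\exp(-u^2/(2nA^2)) = \delta'$ and solving gives $u = \sqrt{2 A^2 n \ln(1/\delta')}$, which is exactly the stated width. To recover the two-sided statement with the absolute value, I would run the identical argument on $-X_i$ (also a bounded martingale difference sequence) and union bound over the two tails. Strictly this costs a factor of two inside the logarithm, so the clean $\ln(1/\delta')$ in the statement corresponds to the one-sided bound; I would either absorb this constant or present the one-sided version. I expect no genuine obstacle here, as the only nontrivial ingredient is the conditional Hoeffding MGF bound and the rest is routine; the only delicate bookkeeping is this factor of two in the symmetrization.
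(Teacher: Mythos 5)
Your proof is correct and reaches the statement by the same route as the paper: the paper simply cites the Azuma tail bound $\Pro(|\sum_i X_i| \geq t) \leq e^{-2t^2/(4A^2n)}$ from a reference and inverts it for $t$, whereas you additionally derive that tail bound from scratch via Hoeffding's lemma, the tower property, and Chernoff--Markov, which is exactly the standard proof of the cited inequality. Your remark about the factor of two is well taken: the genuinely two-sided bound is $\Pro(|\sum_i X_i| \geq t) \leq 2e^{-t^2/(2nA^2)}$, so the clean $\ln(1/\delta')$ in the statement strictly corresponds to a one-sided tail; the paper's own citation elides this same constant, and it is immaterial to the downstream regret bound since it only perturbs the logarithm.
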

\begin{proof}
Tha Azuma inequality reads:
\begin{align}
\Pro\(\Big| \sum_{i=1}^{n} X_i \Big| \geq t \) \leq e^{-\frac{2t^2}{4A^2n}},	
\end{align}
see for example \cite{wainwright2019high}.	From here setting the rhs equal to $\delta'$ gives:
\begin{align}
	t \defeq \sqrt{2A^2n\ln\( \frac{1}{\delta'} \)}.
\end{align}
\end{proof}

\newpage
\subsection{Best Approximant and its Properties}
In this section we introduce the $\thetastar$'s parameters, which is the ``best'' sequence of parameters that 1) well approximate the $\Qstar$ values while 2) they satisfy $\thetastar_t \in \mathcal B_t$, so they are going to be a feasible solution for the program of  \cref{def:PlanningOptimizationProgram}, as we show in next section. The $\thetastar$ is not the best parameter that approximates $\Qstar$ (though it's a good enough parameter); rather it's the parameter that one would obtain upon running LSVI in the limit of infinite data and using a minimization of the residual in the $\infty$-norm. 

\begin{definition}[Best Running Approximant in $\infty$-norm]
\label{def:BestApproximant}
We recursively define the best approximant parameter $\thetastar_t$ for $t \in [H]$ as:
\begin{align}
	\thetastar_{t} & \defeq \argmin_{\theta \in \mathcal B_t}  \sup_{(s,a)} \Big| \phi_t(s,a)^\top \theta - \( \T_t Q_{t+1}(\thetastar_{t+1})\)(s,a) \Big|
	\end{align}
with ties broken arbitrarily and $	\thetastar_{H+1} = 0$.
\end{definition}

Using the above definition, we first compute an absolute bound for $|\Qstar_t(s,a) - \phi_t(s,a)^\top\thetastar_t|$ and then use this result to compute the performance bound $\(\Vstar_1 - V_1^{\pi}\)(x_{1})$ from an arbitrary starting state $x_1$ using the policy that can be extracted from $\thetastar$.

\begin{lemma}[Accuracy Bound of $\thetastar$]
\label{lem:AccuracyBoundThetastar}
It holds that:
\begin{align}
\sup_{(s,a)}|\Qstar_t(s,a) - \phi_t(s,a)^\top\thetastar_t| \leq (H-t+1)\IBE.	
\end{align}
\end{lemma}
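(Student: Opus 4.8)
The plan is to prove the bound by backward induction on $t$, peeling off one application of the Bellman operator at each step and paying a single $\IBE$ for the projection, while a non-expansivity argument carries the inductive error forward unchanged. For the base case I would take $t = H$ directly: since $\thetastar_{H+1} = 0$ in \cref{def:BestApproximant}, we have $Q_{H+1}(\thetastar_{H+1}) = 0$, hence $\T_H Q_{H+1}(\thetastar_{H+1}) = \T_H(0) = \Qstar_H$, and the defining $\argmin$ together with \cref{def:InherentBellmanError} gives $\sup_{(s,a)}|\Qstar_H - \phi_H(s,a)^\top\thetastar_H| \le \IBE = (H-H+1)\IBE$. (Equivalently one can start the induction at $t=H+1$, where both $\Qstar_{H+1}$ and $\phi_{H+1}^\top\thetastar_{H+1}$ vanish.)

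For the inductive step, assume $\sup_{(s,a)}|\Qstar_{t+1}(s,a) - \phi_{t+1}(s,a)^\top\thetastar_{t+1}| \le (H-t)\IBE$. I would split the error via the triangle inequality through the intermediate quantity $\T_t Q_{t+1}(\thetastar_{t+1})$:
\begin{align*}
|\Qstar_t(s,a) - \phi_t(s,a)^\top\thetastar_t| \le &\; |\Qstar_t(s,a) - (\T_t Q_{t+1}(\thetastar_{t+1}))(s,a)| \\
& + |(\T_t Q_{t+1}(\thetastar_{t+1}))(s,a) - \phi_t(s,a)^\top\thetastar_t|.
\end{align*}
For the second (projection) term, since $\thetastar_{t+1}\in\mathcal B_{t+1}$ we have $Q_{t+1}(\thetastar_{t+1})\in\mathcal Q_{t+1}$, so the inner infimum in \cref{def:InherentBellmanError} evaluated at this particular function is at most $\IBE$; because $\thetastar_t$ is exactly the minimizer in \cref{def:BestApproximant}, this term is $\le\IBE$ uniformly in $(s,a)$. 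For the first (propagation) term I would use the Bellman optimality equation $\Qstar_t = \T_t\Qstar_{t+1}$, cancel the identical reward contributions, and bound
\begin{align*}
|(\T_t\Qstar_{t+1} - \T_t Q_{t+1}(\thetastar_{t+1}))(s,a)| \le \E_{s'\sim p_t(s,a)}\big|\max_{a'}\Qstar_{t+1}(s',a') - \max_{a'}Q_{t+1}(\thetastar_{t+1})(s',a')\big|,
\end{align*}
then apply the elementary inequality $|\max_a f - \max_a g|\le \max_a|f-g|$ and the inductive hypothesis to obtain $\le (H-t)\IBE$. Summing the two contributions gives $\IBE + (H-t)\IBE = (H-t+1)\IBE$, which closes the induction.

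The two points requiring care, rather than genuine obstacles, are: (i) ensuring $Q_{t+1}(\thetastar_{t+1})$ is an admissible argument for the $\IBE$ bound, which holds precisely because $\thetastar_{t+1}$ is constrained to lie in $\mathcal B_{t+1}$ in \cref{def:BestApproximant}; and (ii) the $1$-Lipschitzness of $\T_t$ in the sup-norm, which reduces to the $\max$-difference inequality and the fact that $p_t(s,a)$ is a probability measure (so the expectation does not inflate the bound). Everything else is routine, and crucially the projection error $\IBE$ is incurred exactly once per timestep, which is what keeps the accumulation linear in $(H-t+1)$ rather than exponential.
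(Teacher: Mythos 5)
Your proposal is correct and follows essentially the same route as the paper's proof: backward induction with the triangle inequality through the intermediate quantity $\T_t Q_{t+1}(\thetastar_{t+1})$, bounding the projection term by $\IBE$ via the admissibility of $Q_{t+1}(\thetastar_{t+1}) \in \mathcal Q_{t+1}$, and the propagation term by the inductive hypothesis using the non-expansivity of $\T_t$ in the sup-norm. The only cosmetic difference is that the paper anchors the induction at $t+1 = H+1$ (where both sides vanish) rather than at $t = H$, which you note as an equivalent alternative.
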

\begin{proof}
We proceed by induction. Assume that $\sup_{(s,a)}|\Qstar_{t+1}(s,a) - \phi_{t+1}(s,a)^\top\thetastar_{t+1}|  \leq (H-t)\IBE$ for a certain timestep $t+1$ (this is certainly true for $t+1=H+1$). Now consider timestep $t$; the triangle inequality gives us:
\begin{align*}
\sup_{(s,a)}|\Qstar_t(s,a) - \phi_t(s,a)^\top\thetastar_t| & = \sup_{(s,a)}|\( \T_t\Qstar_{t+1}\)(s,a) - \( \T_t Q_{t+1}(\thetastar_{t+1})\)(s,a)  + \( \T_t Q_{t+1}(\thetastar_{t+1})\)(s,a)  - \phi_t(s,a)^\top\thetastar_t| \\
&  \leq \sup_{(s,a)}|\( \T_t\Qstar_{t+1}\)(s,a) - \( \T_t Q_{t+1}(\thetastar_{t+1})\)(s,a)| + \sup_{(s,a)} | \( \T_t Q_{t+1}(\thetastar_{t+1})\)(s,a)  - \phi_t(s,a)^\top\thetastar_t| 
	\numberthis{\label{eqn:52}}
\end{align*}
Since $\thetastar_{t+1} \in \mathcal B_{t+1}$ by construction (see \cref{def:BestApproximant}), $Q_{t+1}(\thetastar_{t+1})\in \mathcal Q_{t+1}$ and so by definition of inherent Bellman error (and \cref{def:BestApproximant}) the second term must be $\leq \IBE$. It remains to examine the first term. By definition of Bellman operator $\T_t$ we have that for any $(s,a)$ pair:
\begin{align}
|\( \T_t\Qstar_{t+1}\)(s,a) - \( \T_t Q_{t+1}(\thetastar_{t+1})\)(s,a)| & =  |r_t(s,a) + \E_{s'\sim p_t(s,a)}\max_{a'} \Qstar_{t+1}(s',a')  - r_t(s,a) - \E_{s'\sim p_t(s,a)}\max_{a'}\phi_{t+1}(s',a')^\top\thetastar_{t+1} | \\
&  \leq  | \E_{s'\sim p_t(s,a)}\max_{a'}  \phi_{t+1}(s',a')^\top\thetastar_{t+1} - \max_{a'} \Qstar_{t+1}(s',a')  | \\
&  \leq \E_{s'\sim p_t(s,a)}|\max_{a'} \phi_{t+1}(s',a')^\top\thetastar_{t+1} - \max_{a'} \Qstar_{t+1}(s',a')  | \\
& \leq \E_{s'\sim p_t(s,a)}\max_{a'}| \phi_{t+1}(s',a')^\top\thetastar_{t+1} -  \Qstar_{t+1}(s',a')  | 
\leq  (H-t)\IBE.
\end{align}
The last inequality in the previous display comes from the inductive hypothesis, and concludes the proof.
\end{proof}

\newpage
\subsection{Optimism}

The purpose of this section is to show that if  assumption \ref{ass:MainAssumption} is satisfied, then the program of \cref{def:PlanningOptimizationProgram} 1) admits a feasible solution and 2) the solution returned is at least as good as the $\thetastar$'s defined in \cref{def:BestApproximant}, which is in some sense the best possible.
\begin{lemma}[Optimism]
\label{lem:Optimism}
Outside of the failure event $F_k$, $\(\thetastar_1,\dots,\thetastar_H\)$
is a feasible solution\footnote{The solution comprises also the $\widehat \theta$ and $\overline \xi$ variables, so this is ``part of'' a feasible solution} to the program of \fullref{def:PlanningOptimizationProgram} in episode $k$. As a consequence the value function returned by the algorithm $\overline V_1(s_{1k})$ satisfies 
\begin{align}
	\overline V_1(s_{1k}) \geq \Vstar_1(s_{1k}) - H\IBE.
\end{align}

\end{lemma}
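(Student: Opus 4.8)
The plan is to prove the two assertions in sequence: first that $(\thetastar_1,\dots,\thetastar_H)$ can be completed to a feasible point of the program in \cref{def:PlanningOptimizationProgram}, and then that feasibility together with the accuracy bound of \cref{lem:AccuracyBoundThetastar} forces the claimed near-optimism. For feasibility I would run a backward induction over $t$, constructing at each step the auxiliary variables $\widehat\theta_t$ and $\overline\xi_t$ that the program demands and verifying the three constraints $\widehat\theta_t=\Sigma_{tk}^{-1}\sum_i\phi_{ti}(r_{ti}+V_{t+1}(\overline\theta_{t+1})(s_{t+1,i}))$, $\overline\theta_t=\widehat\theta_t+\overline\xi_t\in\mathcal B_t$, and $\|\overline\xi_t\|_{\Sigma_{tk}}\le\sqrt{\alpha_{tk}}$. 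The base case $t=H+1$ is immediate, since $\thetastar_{H+1}=0$ agrees with the algorithm's initialization $\overline\theta_{H+1}=\widehat\theta_{H+1}=\overline\xi_{H+1}=0$.

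For the inductive step, assume $\overline\theta_{t+1}=\thetastar_{t+1}$, so that $\Qbar_{t+1}=Q_{t+1}(\thetastar_{t+1})$ is fixed; the first program constraint then pins down $\widehat\theta_t$ as the corresponding regularized least-squares solution. The key observation is that the best approximant of \cref{def:BestApproximant} is literally $\thetastar_t=\mathring\theta_t(Q_{t+1}(\thetastar_{t+1}))$ in the notation of \cref{tab:MainNotation}. Feeding this into the expansion derived in \cref{lem:FirstStepAnalysis} (a pure algebraic identity once $\widehat\theta_t$ is the regularized least-squares solution) gives $\overline\theta_t=\overline\xi_t+\thetastar_t-\lambda\Sigma_{tk}^{-1}\thetastar_t+\Sigma_{tk}^{-1}\sum_i\phi_{ti}(\mathring\Delta_{ti}+\eta_{ti})$, where $\mathring\Delta_{ti}$ and $\eta_{ti}$ are evaluated at $Q_{t+1}(\thetastar_{t+1})$. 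I would therefore set $\overline\xi_t=\lambda\Sigma_{tk}^{-1}\thetastar_t-\Sigma_{tk}^{-1}\sum_i\phi_{ti}(\mathring\Delta_{ti}+\eta_{ti})$, i.e. the choice of \cref{eqn:xidef_main} augmented with the regularization correction that the overview suppresses; this cancels every stray term and yields exactly $\overline\theta_t=\thetastar_t$, which lies in $\mathcal B_t$ by construction of $\thetastar_t$.

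The main work, and the step I expect to be the real obstacle, is verifying the radius constraint $\|\overline\xi_t\|_{\Sigma_{tk}}\le\sqrt{\alpha_{tk}}$. I would bound the three contributions separately by the triangle inequality: the regularization term via $\lambda\|\Sigma_{tk}^{-1}\thetastar_t\|_{\Sigma_{tk}}=\lambda\|\thetastar_t\|_{\Sigma_{tk}^{-1}}\le\sqrt{\lambda}\,\Radius_t$, using $\Sigma_{tk}\succeq\lambda I$ and $\|\thetastar_t\|_2\le\Radius_t$; the misspecification term via $\|\Sigma_{tk}^{-1}\sum_i\phi_{ti}\mathring\Delta_{ti}\|_{\Sigma_{tk}}=\|\sum_i\phi_{ti}\mathring\Delta_{ti}\|_{\Sigma_{tk}^{-1}}\le\sqrt{k}\,\IBE$, combining the projection bound of \cref{lem:ProjectionBound} with $|\mathring\Delta_{ti}|\le\IBE$; and the noise term via $\|\sum_i\phi_{ti}\eta_{ti}\|_{\Sigma_{tk}^{-1}}\le\sqrt{\beta_{tk}}$, which holds precisely outside $F_{tk}$ (hence outside $F_k$) because $V_{t+1}(\thetastar_{t+1})\in\mathcal V_{t+1}$ makes this an instance of the event in \cref{def:FailureEvent}. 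Summing the three reproduces $\sqrt{\alpha_{tk}}$ exactly as defined in \cref{eqn:main.alpha}, closing the induction and establishing feasibility.

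Finally, I would deduce near-optimism. Since $(\thetastar_1,\dots,\thetastar_H)$ is feasible and the program maximizes $\max_a\phi_1(s_{1k},a)^\top\overline\theta_1$, the returned value obeys $\overline V_1(s_{1k})\ge\max_a\phi_1(s_{1k},a)^\top\thetastar_1=V_1(\thetastar_1)(s_{1k})$. Applying \cref{lem:AccuracyBoundThetastar} at $t=1$ gives $\phi_1(s_{1k},a)^\top\thetastar_1\ge\Qstar_1(s_{1k},a)-H\IBE$ for every $a$, and taking the maximum over $a$ turns this into $V_1(\thetastar_1)(s_{1k})\ge\Vstar_1(s_{1k})-H\IBE$, which chains with the previous inequality to yield the claim.
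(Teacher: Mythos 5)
Your proposal is correct and follows essentially the same route as the paper's proof: a backward induction that constructs $\overline\xi_t$ explicitly from the expansion of \cref{lem:FirstStepAnalysis} so that $\overline\theta_t=\thetastar_t$, bounds its $\Sigma_{tk}$-norm by the same three-term triangle-inequality split (projection lemma for misspecification, failure event for noise, $\Sigma_{tk}\succeq\lambda I$ for regularization), and then chains feasibility with \cref{lem:AccuracyBoundThetastar} to get near-optimism. The only cosmetic difference is that you take the maximum over actions in the final step where the paper evaluates at $\pistar_1(s_{1k})$; both are valid.
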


\begin{proof}
First we show feasibility, and then the estimation bound.
\paragraph{Feasibility}
The proof is constructive: we show that we can find $\overline \xi_1,\dots,\overline \xi_H$ so that we can satisfy $\overline \theta_t = \thetastar_t$ for all $t\in[H]$ along with the other constraints of the program of \cref{def:PlanningOptimizationProgram}.  The base case $t=H+1$ is trivial, as $\overline \theta_{H+1} = \thetastar_{H+1} = 0$ already holds. The inductive hypothesis goes backward from $t = H$ to $t=1$ and consists of the following statement:

\emph{There exists $\overline \xi_{t},\dots,\overline \xi_{H}$ such that:}
\begin{itemize}
	\item \emph{$\overline \theta_{t} = \thetastar_t,\dots,\overline \theta_{H} = \thetastar_H$}
	\item \emph{the constraints of the program of \cref{def:PlanningOptimizationProgram} are satisfied for $t,\dots,H$}
	\item \emph{no additional constraints are set on $\overline \theta_{\tau}, \widehat \theta_{\tau},\overline \xi_{\tau}$ for $\tau = 1,\dots,t-1$.}
\end{itemize}
 
Now assume the inductive hypothesis holds at $t+1$. We have from \cref{lem:FirstStepAnalysis} the  relation below. Here we set $\overline \theta_{t+1} = \thetastar_{t+1}$ using the inductive hypothesis, and we \emph{request} $\overline \theta_{t}= \thetastar_t$ to show the inductive step:
\begin{align}
\thetastar_t =  \overline \xi_{t} +  \underbrace{\mathring \theta_t(\T_t Q_{t+1}(\thetastar_{t+1}))}_{\defeq \thetastar_t} + \Sigma^{-1}_{tk}\sum_{i=1}^{k-1} \phi_{ti}^\top 
	\mathring \Delta_{t}(Q_{t+1}(\thetastar_{t+1}))(s_{ti},a_{ti}) - \lambda\Sigma^{-1}_{tk}\mathring \theta_t(\T_t Q_{t+1}(\thetastar_{t+1}))  + \Sigma^{-1}_{tk}\sum_{i=1}^{k-1} \phi_{ti} \eta_{ti}(V_{t+1}(\thetastar_{t+1})) 
\end{align}
Notice that $\thetastar_{t} \in \mathcal B_t$ by definition of $\thetastar_t$ and simplifying the above display gets us the following condition to satisfy for $\overline \xi_t$:
\begin{align}
\label{eqn:xi_val}
\overline \xi_{t}  = - \Sigma^{-1}_{tk}\sum_{i=1}^{k-1} \phi_{ti}^\top 
	\mathring \Delta_{t}(Q_{t+1}(\thetastar_{t+1}))(s_{ti},a_{ti}) + \lambda\Sigma^{-1}_{tk}\mathring \theta_t(\T_t Q_{t+1}(\thetastar_{t+1}))  - \Sigma^{-1}_{tk}\sum_{i=1}^{k-1} \phi_{ti} \eta_{ti}(V_{t+1}(\thetastar_{t+1})). 
\end{align}
Taking $\Sigma_{tk}$-norms\footnote{In particular, note that $\Sigma_{tk}$ is spd} and using the triangle inequality we get:
\begin{align}
\label{eqn:xi_ActualBound}
 \|\overline \xi_{t}\|_{\Sigma_{tk}} & \leq \| \sum_{i=1}^{k-1} \phi_{ti}^\top 
	\mathring \Delta_{t}(Q_{t+1}(\thetastar_{t+1}))(s_{ti},a_{ti})  \|_{\Sigma^{-1}_{tk}} + \lambda\|\mathring \theta_t(\T_t Q_{t+1}(\thetastar_{t+1}))  \|_{\Sigma^{-1}_{tk}} + \| \sum_{i=1}^{k-1} \phi_{ti} \eta_{ti}(V_{t+1}(\thetastar_{t+1})) \|_{\Sigma^{-1}_{tk}}.
	\end{align}
Since $\thetastar_{t+1} \in \mathcal B_{t+1}$ by definition, we know that $V_{t+1}(\thetastar_{t+1}) \in \mathcal V_{t+1}$ and therefore outside of the failure event of \cref{def:FailureEvent} we know that:
\begin{align}
	 \| \sum_{i=1}^{k-1} \phi_{ti} \eta_{ti}(V_{t+1}(\thetastar_{t+1})) \|_{\Sigma^{-1}_{tk}} & \leq \sqrt{\beta_{tk}}.
\end{align}
It remains to bound the other two terms in the rhs of \cref{eqn:xi_ActualBound}.
An application of \cref{lem:WorstCaseBound} gives one of the two bounds:
\begin{align}
\lambda\|\mathring \theta_t(\T_t Q_{t+1}(\thetastar_{t+1}))  \|_{\Sigma^{-1}_{tk}} \leq \sqrt{\lambda}\|\mathring \theta_t(\T_t Q_{t+1}(\thetastar_{t+1})) \|_2 \leq \sqrt{\lambda}\Radius_t.
\end{align}
The last equality holds by definition of the operator $\mathring \theta_t\( \cdot \)$. Next \cref{lem:ProjectionBound} helps bound the remaining term:
\begin{align}
\| \sum_{i=1}^{k-1} \phi_{ti}^\top 
	\mathring \Delta_{t}(Q_{t+1}(\thetastar_{t+1}))(s_{ti},a_{ti})  \|_{\Sigma^{-1}_{tk}} \leq \sqrt{k}\IBE.
	\end{align}
Combining the above relations and plugging back into \cref{eqn:xi_ActualBound} gives us that to satisfy \cref{eqn:xi_val}, the $\Sigma_{tk}$-norm of $\overline \xi_{t}$ must satisfy:
\begin{align}
	& \|\overline \xi_{t} \|_{\Sigma_{tk}} \leq \sqrt{\beta_{tk}} + \sqrt{k}\IBE + \sqrt{\lambda}\Radius_t \defeq \sqrt{\alpha_{tk}}
\end{align}
This is the definition of $\alpha_{tk}$. Since $\overline \theta_t = \thetastar_t \in \mathcal B_t$ holds, we have shown we can satisfy all constraints of the program of \cref{def:PlanningOptimizationProgram} at timestep $t$ by fixing the value of $\overline \xi_t$, without adding further constraints to the optimization variables for $\tau < t$.

We have shown that the inductive hypothesis holds $\forall t\in[H]$, so in particular for $t=1$. The suboptimality gap result follows from the fact that the optimization program finds a solution with a value at least as high as $\max_a \phi_{t}(s_{1k},a)^\top\thetastar_1$ for the starting state $s_{1k}$, as explained next.

\paragraph{Estimation Bound}
Denote with $\{\overline \theta_{tk}\}_{t=1,\dots,H}$ the maximizer found in episode $k$, and with $\Vbar_{tk},\Qbar_{tk}$ the corresponding value  and action-value function, respectively.
Since $\thetastar_1$ is a feasible solution,
\begin{align}
\Vbar_{1k}(s_{1k}) & = \max_{a'}\Qbar_{1k}(s_{1k},a') \\
& = \max_{a'} \phi_1(s_{1k},a')^\top\overline \theta_{1k} \\
& \geq \max_{a'} \phi_1(s_{1k},a')^\top\thetastar_1
\end{align}
otherwise $\overline \theta_{1k}$ would not be a maximizer,
\begin{align}
& \geq  \phi_1(s_{1k},\pistar_1(s_{1k}))^\top\thetastar_1 \\
 & \geq \Qstar_1(s_{1k},\pistar_1(s_{1k})) -H\IBE \\
 & = \Vstar_1(s_{1k}) -H\IBE
\end{align}
where the last inequality is by \cref{lem:AccuracyBoundThetastar}.
\end{proof}

\newpage
\subsection{Regret Bound}
We are finally ready to present our regret bound:

\MainResult
\begin{proof}
First, decompose the regret as 
\begin{align}
\textsc{Regret}(T) \defeq  \sum_{k=1}^K \(\Vstar_1 - V_1^{\pi_k}\)(s_{1k}) = \sum_{k=1}^K \(\Vstar_1 - V_1^{\pi_k}\)(s_{1k})\1(\overline F_k) + \sum_{k=1}^K \(\Vstar_1 - V_1^{\pi_k}\)(s_{1k})\1(F_k).
\end{align}
The second sum in the rhs above is non-zero only when at least one indicator $\1(F_k)$ turns on for at least one $k$. This event can be written as $\bigcup_{k\in[K]} F_k$, and following \cref{lem:FailureEventProbability} we can bound its size:
\begin{align}
\Pro\(\exists k\in[K] \; \textrm{s.t.} \; F_k\)  = \Pro\(\bigcup_{k\in[K]} F_k\) \leq \frac{\delta}{2}.
\end{align}
Thus it's sufficient to bound the regret when $\bigcup_{k\in[K]} F_k$ does not occur and consider:
\begin{align}
\label{eqn:OptiEqn01}
\sum_{k=1}^K \(\Vstar_1 - V_1^{\pi_k}\)(s_{1k})\1(\overline F_k).
\end{align}
We indicate with $\pi_k$ the policy found by \fullref{alg:AlgoLabel} in episode $k$. Thanks to \fullref{lem:Optimism} we can ensure this is nearly-optimistic:
\begin{align}
\label{eqn:OptiEqn02}
    \(\Vstar_1 - V_1^{\pi_k}\)(s_{1k})\1(\overline F_k) = \underbrace{\(\Vstar_1 - \overline V_{1k} \)(s_{1k})\1(\overline F_k)}_{\leq H\IBE}  + \(\overline V_{1k} - V_1^{\pi_k}\)(s_{1k})\1(\overline F_k).
\end{align}
We put the expression above aside for a second to derive a recursion. First notice the equality below:
\begin{align}
\label{eqn:eqn022}
(\T_t\Qbar_{t+1,k})(s_{tk},a_{tk}) - V^{\pi_k}_{t}(s_{tk}) = \E_{s' \sim p_t(s_{tk},a_{tk})}\(\Vbar_{t+1,k}-V^{\pi_k}_{t+1}\) (s').
\end{align}
Now evaluate \fullref{lem:FirstStepAnalysis} (with $s = s_{tk}$ and $a = a_{tk} = \pi_{tk}(s_{tk})$ for short) under $\overline F_k$:
\begin{align}
\overline Q_{tk} (s_{tk},a_{tk}) \leq  \T_t\Qbar_{t+1,k}(s_{tk},a_{tk}) + \IBE + \|\phi_t(s_{tk},a_{tk})\|_{\Sigma^{-1}_{tk}}\( \sqrt{k}\IBE + \sqrt{\alpha_{tk}} + \sqrt{\beta_{tk}} + \sqrt{\lambda} \Radius_t\).
\end{align}
Recalling that $\overline Q_{tk} (s_{tk},a_{tk}) = \Vbar_{tk} (s_{tk})$ and combining the two above displays to eliminate $ \T_t\Qbar_{t+1,k}(s_{tk},a_{tk})$ gives 
\begin{align}
\label{eqn:77}
\(\Vbar_{tk} - V_{t}^{\pi_k}\)(s_{tk}) \leq & \E_{s' \sim p_t(s_{tk},a_{tk})}\(\Vbar_{t+1,k}-V^{\pi_k}_{t+1}\) (s') + \IBE + \| \phi_t(s_{tk},a_{tk}) \|_{\Sigma^{-1}_{tk}} \(\sqrt{k}\IBE + \sqrt{\alpha_{tk}} + \sqrt{\beta_{tk}} + \sqrt{\lambda} \Radius_t \).
\end{align}
We can define the martingale:
\begin{align}
    \dot \zeta_{tk} \defeq \dotzetadef.
\end{align}
Next, we plug the martingale definition into \cref{eqn:77}, use induction over $t$, and finally substitute back in \fullref{eqn:OptiEqn02}. Further summation over the episodes $k$ gives:
\begin{align}
& \sum_{k=1}^K  \(\Vstar_1 - V_1^{\pi_k}\)(s_{1k})\1\(\overline F_k\) \leq  HK\IBE + \\
& + \sum_{k=1}^K \sum_{t=1}^H \Bigg[ \dot \zeta_{tk} + \IBE + \| \phi_{tk} \|_{\Sigma^{-1}_{tk}} \(\sqrt{k}\IBE + \sqrt{\alpha_{tk}} + \sqrt{\beta_{tk}} + \sqrt{\lambda} \Radius_t\) \Bigg]\1\(\overline F_k\)
\end{align}
Further applying Cauchy-Schwartz to the term featuring $\| \phi_{tk} \|_{\Sigma^{-1}_{tk}}$ gives:
\begin{align}
& \leq 2\IBE T + \sum_{k=1}^K \sum_{t=1}^H \dot \zeta_{tk}\1\(\overline F_k\) +\sum_{t=1}^H \sqrt{K}\sqrt{ \sum_{k=1}^K  \| \phi_t(s,a) \|^2_{\Sigma^{-1}_{tk}} \(\sqrt{K}\IBE + \sqrt{\alpha_{tk}} + \sqrt{\beta_{tk}} + \sqrt{\lambda} \Radius_t\)^2 } 
\end{align}
We can right away substitute $\beta_{tk}\leq \beta_{tK} = \widetilde O(\sqrt{d_t + d_{t+1}})$ and $\alpha_{tk} \leq \alpha_{tK}$.
Since $\overline V_{t+1,k}(s) = \phi_t(s,a)^\top\theta_t$ for some action $a$ and $\|\theta_t\|_2\leq\Radius_t$ we have that $\overline V_{t+1,k}(s) \leq \Lphi\Radius_{t+1} \leq \sqrt{d_{t+1}}$ by Cauchy-Schwartz and assumption \ref{ass:MainAssumption}. Azuma-Hoeffding (\cref{prop:Azuma}) with a union bound over $\kappa\in[K]$ ensures (notice that by assumption \ref{ass:MainAssumption} we also have that $\|V_{t+1}^{\pi_k}\|_\infty \leq 1$):
\begin{align}
\label{eqn:AzumaStatement}
\Pro\( \exists \kappa \in [K]  \quad  \textrm{such that} \quad \Big| \sum_{k=1}^{\kappa} \dot \zeta_{tk} \Big| > \sqrt{2\( 2\Lphi\Radius_{t+1}\)^2\kappa\ln\( \frac{2T}{\delta} \)} \) \leq \frac{\delta}{2}.
\end{align}
Thus, with high probability the martingale gives a contribution $\widetilde O(\sum_{t=1}^H  \sqrt{d_{t+1} K}) = \widetilde O(\sum_{t=1}^H  \sqrt{d_{t} K})$ since $d_{H+1} = 0$.

Finally, lemma 11 in the appendix of \cite{Abbasi11} gives with $\lambda = 1$ and $\Lphi = 1$:
\begin{align}
	 \sum_{k=1}^K \| \phi_{tk} \|^2_{\Sigma^{-1}_{tk}} & \leq   2\(d_t\ln\(\(\underbrace{\textrm{trace}(\lambda I)}_{= d_t } + K \Lphi^2\)/d_t\) - \underbrace{\ln\det(\lambda I)}_{= 0} \) = \widetilde O(d_t).
	 \end{align}
	This concludes the regret bound, which holds with probability at least $1-\delta$ jointly over all episodes by union-bounding the failure event in \cref{lem:FailureEventProbability} with \cref{eqn:AzumaStatement}, and substituting $\Radius_t \leq \sqrt{d_t}$, $\Lphi = 1$, $\lambda = 1$. By using $\sqrt{d_t + d_{t+1}} \leq \sqrt{d_t} + \sqrt{d_{t+1}}$ and $\sqrt{d_td_{t+1}} \leq \sqrt{d^2_{t} + d^2_{t+1}} \leq d_t + d_{t+1}$ and that $d_{H+1} = 0$ we obtain:
	\begin{align}
	\textsc{Regret(K)} & \leq \widetilde O \(T\IBE + \sum_{t=1}^H \sqrt{d_t K} + \sum_{t=1}^H\sqrt{d_t}\sqrt{K} \(\sqrt{K}\IBE + \sqrt{d_t+d_{t+1}}  + \sqrt{d_t}\) \) \\
	& = \widetilde O \(T\IBE + \sum_{t=1}^H \sqrt{d_t K} + \sum_{t=1}^H\sqrt{d_t} \IBE K   + \sum_{t=1}^H\sqrt{d_t} \sqrt{K} \sqrt{d_t+d_{t+1}} \) \\
	& = \widetilde O \( \sum_{t=1}^H \sqrt{d_t}\IBE K + \sum_{t=1}^H \sqrt{K}\sqrt{d_t}(\sqrt{d_t}+\sqrt{d_{t+1}})  \) \\
	& = \widetilde O \( \sum_{t=1}^H \sqrt{d_t}\IBE K + \sum_{t=1}^H \sqrt{K}d_t + \sum_{t=1}^H \sqrt{K}\sqrt{d_t d_{t+1}})  \)\\
	& = \widetilde O \( \sum_{t=1}^H \sqrt{d_t}\IBE K + \sum_{t=1}^H \sqrt{K}d_t + \sum_{t=1}^H \sqrt{K} \sqrt{d_t^2 + d_{t+1}^2 }  \) \\
		& = \widetilde O \( \sum_{t=1}^H \sqrt{d_t}\IBE K + \sum_{t=1}^H \sqrt{K}d_t + \sum_{t=1}^H \sqrt{K} \(d_t + d_{t+1}\) \) \\
		& = \widetilde O \( \sum_{t=1}^H \sqrt{d_t}\IBE K + \sum_{t=1}^H d_t\sqrt{K}\).
\end{align} 
\end{proof}

\newpage
\subsection{Projection Bound}
The purpose of this section is to compute the maximum amplification factor of the model misspecification while using a least-square procedure. While in the generative model setting this has been analyzed before \cite{zanette19limiting,lattimore2020learning} with an amplification-factor that can be made at most as large as $\sqrt{d}$ by using the Kiefer–Wolfowitz theorem \cite{lattimore2020bandit}. Unfortunately in the online setting one cannot choose the features and the the amplification factor can grow with $\sqrt{n}$ where  $n$ is the number of samples. However, one can show that this situation cannot persist for long in the online setting. Below we analyze one technical factor in the prediction error. We use a geometric argument based on a shrinking projector.

\begin{lemma}[Projection Bound]
\label{lem:ProjectionBound}
Let $\{a_i\}_{i=1,\dots,n}$ be any sequence of vectors in $\R^d$ and $\{b_i\}_{i=1,\dots,n}$ be any sequence of scalars such that $|b_i| \leq \epsilon \in \R^+$. 
For any $\lambda \geq 0$ and $k \in \mathbb N$ we have:
\begin{align}
\Bigg\|\sum_{i=1}^n a_i b_i \Bigg\|^2_{\big[\sum_{i=1}^{n}a_i a_i^\top + \lambda I\big]^{-1}}
\leq n\epsilon^2.	
\end{align}
\end{lemma}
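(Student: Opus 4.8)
The plan is to reduce the weighted-norm bound to a single clean matrix inequality. First I would stack the data: let $A \in \R^{d \times n}$ be the matrix whose $i$-th column is $a_i$, and let $b = (b_1,\dots,b_n)^\top \in \R^n$. Then $\sum_{i=1}^n a_i b_i = Ab$ and $\sum_{i=1}^n a_i a_i^\top = A A^\top$, so the quantity to be bounded is exactly the quadratic form $b^\top A^\top (A A^\top + \lambda I)^{-1} A\, b$. The target $n\epsilon^2$ suggests the final step: since $|b_i|\le\epsilon$ we have $\|b\|_2^2 = \sum_{i=1}^n b_i^2 \le n\epsilon^2$, so it suffices to show the quadratic form is at most $b^\top b$. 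The index $k$ in the statement plays no role and can be ignored.

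The heart of the argument is the operator inequality $A^\top (A A^\top + \lambda I)^{-1} A \preceq I$ in the positive semidefinite (Loewner) order, valid for every $\lambda \ge 0$. I would prove it via the singular value decomposition $A = U \Sigma W^\top$: substituting and cancelling the orthogonal factors gives $A^\top (A A^\top + \lambda I)^{-1} A = W\, \Sigma^\top (\Sigma \Sigma^\top + \lambda I)^{-1} \Sigma\, W^\top$, a symmetric matrix whose nonzero eigenvalues are $\sigma_j^2/(\sigma_j^2 + \lambda)$ over the singular values $\sigma_j$ of $A$. Because $\lambda \ge 0$, each such ratio lies in $[0,1]$, so the entire matrix has spectrum contained in $[0,1]$ and is therefore dominated by the identity. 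Equivalently, one can invoke the push-through identity $(A A^\top + \lambda I)^{-1} A = A (A^\top A + \lambda I)^{-1}$, which rewrites the matrix as $A^\top A\,(A^\top A + \lambda I)^{-1}$ and exposes the same eigenvalues $\mu_j/(\mu_j+\lambda)\in[0,1]$ for the eigenvalues $\mu_j$ of $A^\top A$.

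Combining the two observations finishes the proof: $b^\top A^\top (A A^\top + \lambda I)^{-1} A\, b \le b^\top b = \sum_{i=1}^n b_i^2 \le n\epsilon^2$, which is precisely the claim.

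The only real subtlety, and it is mild, is the degenerate case $\lambda = 0$ with $A A^\top$ singular, where $(A A^\top + \lambda I)^{-1}$ must be read as a pseudo-inverse. I would handle this by noting that $Ab$ always lies in the range of $A A^\top$, so the quadratic form is well defined on that subspace, and the SVD computation goes through verbatim with the zero-singular-value directions contributing nothing. In the actual applications $\lambda = 1 > 0$, so $A A^\top + \lambda I$ is genuinely positive definite and this caveat does not even arise; everything else is routine linear algebra.
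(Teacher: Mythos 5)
Your proposal is correct and follows essentially the same route as the paper: both arguments reduce the weighted norm to the quadratic form $b^\top A^\top (AA^\top+\lambda I)^{-1}Ab$, use the SVD to see that the relevant matrix has eigenvalues $\sigma_j^2/(\sigma_j^2+\lambda)\in[0,1]$, and conclude via $\|b\|_2^2\le n\epsilon^2$. Your packaging of the middle step as the operator inequality $A^\top(AA^\top+\lambda I)^{-1}A\preceq I$ is a slightly cleaner presentation of the same block computation the paper carries out explicitly, and your remark on the $\lambda=0$ degenerate case is a welcome clarification the paper glosses over.
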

Notice that in this proof $\Sigma$ is the matrix of singular values defined according to standard linear algebra notation and is not the covariance matrix used elsewhere in this work.
\begin{proof}
Consider the matrix $A\in\R^{n\times d}$ such that $A[i,:] = a_i^\top$, and the vector $b\in\R^n$ with $b[i] = b_i$ and consider the full SVD $A = U\Sigma V^\top$, with $U \in \R^{n\times n}$, $\Sigma \in \R^{n\times d}$, $V \in \R^{d\times d}$. Here $U$ and $V$ are orthogonal matrices and also define $s$ to be the number of non-zero singular values, so that $s \leq \min\{n,d\}$. For an existence proof of such decomposition see for example Thm 2.4.1 in \cite{golub2012matrix}. By definition, the singular values in $\Sigma$ are decreasing in value, so we can write:
\begin{align}
	U\Sigma V^\top = 
    	\begin{bmatrix}
		U_1 & U_{2}
	\end{bmatrix}
	\begin{bmatrix}
		\Sigma_{11} & \Sigma_{12} \\
		\Sigma_{21} & \Sigma_{22}
	\end{bmatrix}
	  \begin{bmatrix}
		V_1^\top \\
		V_2^\top
	\end{bmatrix}
	= U_1 \Sigma_{11}V_1^\top
\end{align}
with $\Sigma_{11} \in \R^{s\times s}$, $0 = \Sigma_{12} \in \R^{s\times (d-s)}$, $0 = \Sigma_{21} \in \R^{(n-s)\times s}$, $0 = \Sigma_{22} \in \R^{(n-s)\times (d-s)}$.
The reader can verify that $ A^\top b = \sum_{i=1}^{n} a_i b_i $ and $A^\top A = \sum_{i=1}^{n}a_i a_i^\top$. Using this, and the definition of $\big[A^\top A + \lambda I \big]^{-1}$-norm we can write:
\begin{align}
\|\sum_{i=1}^n a_i b_i \|^2_{\big[\sum_{i=1}^{n}a_i a_i^\top + \lambda I\big]^{-1}} & = \|A^\top b \|^2_{\big[A^\top A + \lambda I \big]^{-1}} = b^\top A\big[A^\top A + \lambda I\big]^{-1}A^\top b.
\end{align}
Now it's time to use the SVD of $A$  while recalling $VV^\top = V^\top V = I$ and $U^\top U = I$, yielding:
\begin{align*}
	& b^\top \underbrace{U\Sigma V^\top}_{A}\big[\underbrace{V\Sigma^\top U^\top}_{A^\top} \underbrace{U\Sigma V^\top}_{A} + \lambda \underbrace{VV^\top}_{I} \big]^{-1} \underbrace{V\Sigma^\top U^\top}_{A^\top} b \\
	& b^\top U\Sigma V^\top\big[V\Sigma^\top \Sigma V^\top + \lambda VV^\top \big]^{-1} V\Sigma^\top U^\top b \\
	& b^\top U\Sigma V^\top V \big[\Sigma^\top \Sigma + \lambda I  \big]^{-1} V^\top V\Sigma^\top U^\top b  \\
		& b^\top U\Sigma  \big[\Sigma^\top \Sigma + \lambda I  \big]^{-1} \Sigma^\top U^\top b. 
		\numberthis{\label{eqn:eqn102}}
\end{align*}
Since we can write:
\begin{align}
\Sigma^\top U^\top b = 
\begin{bmatrix}
\Sigma^\top_{11} & \Sigma^\top_{12} \\
\Sigma^\top_{21} & \Sigma^\top_{22}
\end{bmatrix}
\begin{bmatrix}
U_1^\top b \\
U_2^\top b 
\end{bmatrix} 
=
\begin{bmatrix}
\Sigma_{11} & 0 \\
0 & 0
\end{bmatrix}
\begin{bmatrix}
U_1^\top b \\
U_2^\top b 
\end{bmatrix} = 
\begin{bmatrix}
\Sigma_{11} U_1^\top b \\
0
\end{bmatrix}
\end{align}
from \cref{eqn:eqn102} we can write:
\begin{align*}
& = \begin{bmatrix}
 b^\top U_1 \Sigma_{11}^\top & 0
\end{bmatrix}
	\begin{bmatrix}
\Sigma_{11}^\top \Sigma_{11} + \lambda I & 0 \\
0 & \lambda I
\end{bmatrix}^{-1}
\begin{bmatrix}
\Sigma_{11} U_1^\top b \\
0
\end{bmatrix} 
\\ 
& =
\begin{bmatrix}
 b^\top U_1 \Sigma_{11}^\top & 0
\end{bmatrix}
	\begin{bmatrix}
\( \Sigma_{11}^\top \Sigma_{11} + \lambda I\)^{-1} & 0 \\
0 & \( \lambda I \)^{-1}
\end{bmatrix}
\begin{bmatrix}
\Sigma_{11} U_1^\top b \\
0
\end{bmatrix} \\
& =
\begin{bmatrix}
 b^\top U_1 \Sigma_{11}^\top & 0
\end{bmatrix}
	\begin{bmatrix}
\( \Sigma_{11}^\top \Sigma_{11} + \lambda I\)^{-1} \Sigma_{11} U_1^\top b \\
0 
\end{bmatrix} \\
& =  \underbrace{b^\top U_1}_{\defeq x^\top} \Sigma_{11}^\top \( \Sigma_{11}^\top \Sigma_{11} + \lambda I\)^{-1} \Sigma_{11} \underbrace{U_1^\top b}_{\defeq x}. 
\numberthis{\label{eqn:111}}
\end{align*} 
Notice that, by construction, $\Sigma_{11}$ an $s \times s$ is a diagonal matrix filled of non-zeros. 
\begin{align}
\label{eqn:eqn008}
\Sigma_{11}^\top \( \Sigma_{11}^\top \Sigma_{11} + \lambda I\)^{-1} \Sigma_{11}	  = \Sigma_{11} \( \Sigma_{11}^2 + \lambda I\)^{-1} \Sigma_{11}.
\end{align}
Indicate with $d_i$  the $i$-th diagonal element of the matrix in \cref{eqn:eqn008} which reads:
\begin{align}
\Sigma_{11}[i,i] \( \Sigma_{11}[i,i]^2 + \lambda I\)^{-1} \Sigma_{11}[i,i] \defeq d_i \leq 1.
\end{align}
The inequality is because $\Sigma_{11}[i,i] > 0$ by construction and $\lambda > 0$. In essence, we have obtained from \cref{eqn:111} the $d$-weighted $2$-norm of $x$: 
\begin{align}
	= \sum_{i=1}^{s} d_i \(x[i]\)^2 & \leq \sum_{i=1}^{s} \(x[i]\) ^2 \\
	& = \| x \|^2_2 \\
	& = \|U_1^\top b \|_2^2 \\
	& = \Bigg\| 	
	\begin{bmatrix}
		U_1^\top b \\
		0
	\end{bmatrix}  \Bigg\|^2_2 \\
	& \leq \Bigg\| 	
	\begin{bmatrix}
		U_1^\top b \\
		U_2^\top b
	\end{bmatrix}  \Bigg\|^2_2 \\
	& = \| U^\top b \|_2^2 \\
	& = b^\top U U^\top b  \\
	& = b^\top b  \\
	& = \| b \|_2^2 \\
	& = \sum_{i=1}^{n} (b[i])^2 \leq \sum_{i=1}^{n} \epsilon^2 = n \epsilon^2.
\end{align}
\end{proof}

\subsection{Technical Lemmas}
\begin{lemma}[Worst-Case Bound]
\label{lem:WorstCaseBound}
For any vector $x \in R^d$ it holds that:
\begin{align}
    \|x\|_{\Sigma^{-1}_{tk}} \leq \frac{1}{\sqrt{\lambda}}\|x\|_2.
\end{align}
\end{lemma}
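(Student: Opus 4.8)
The plan is to exploit the positive-definite structure of $\Sigma_{tk}$ directly; the statement is an elementary spectral bound requiring no probabilistic input. Recall that $\Sigma_{tk} = \sum_{i=1}^{k-1}\phi_{ti}\phi_{ti}^\top + \lambda I$, where each outer product $\phi_{ti}\phi_{ti}^\top$ is positive semidefinite. First I would observe that the sum of these rank-one terms is positive semidefinite, so that $\Sigma_{tk} - \lambda I \succeq 0$, i.e.\ $\Sigma_{tk} \succeq \lambda I$ in the Loewner order. The additive $\lambda I$ also guarantees that $\Sigma_{tk}$ is symmetric positive definite, so its inverse is well defined and the $\Sigma^{-1}_{tk}$-norm $\|x\|_{\Sigma^{-1}_{tk}} = \sqrt{x^\top \Sigma^{-1}_{tk} x}$ makes sense.

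The key step is to invert this ordering. Since $M \mapsto M^{-1}$ is antitone on positive definite matrices, the bound $\Sigma_{tk} \succeq \lambda I$ yields $\Sigma^{-1}_{tk} \preceq \tfrac{1}{\lambda} I$. Equivalently, and perhaps more transparently, every eigenvalue of $\Sigma_{tk}$ is at least $\lambda$, hence every eigenvalue of $\Sigma^{-1}_{tk}$ is at most $1/\lambda$, so $\lambda_{\max}(\Sigma^{-1}_{tk}) \leq 1/\lambda$. From here the conclusion is immediate by a one-line Rayleigh-quotient estimate: for any $x \in \R^d$,
\[
\|x\|^2_{\Sigma^{-1}_{tk}} = x^\top \Sigma^{-1}_{tk} x \;\leq\; \lambda_{\max}(\Sigma^{-1}_{tk})\, \|x\|_2^2 \;\leq\; \frac{1}{\lambda}\|x\|_2^2,
\]
and taking square roots gives the claim.

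The main obstacle is essentially nonexistent: the only genuine content is recognizing the Loewner lower bound $\Sigma_{tk}\succeq \lambda I$ (which follows purely from the PSD-ness of the rank-one sum) together with the order-reversing property of matrix inversion. No concentration, covering, or projection machinery is needed here, which is why this lemma is relegated to the technical section and invoked freely (e.g.\ in the regularization-term bounds of \cref{lem:FirstStepAnalysis} and \cref{lem:Optimism}) to convert a $\Sigma^{-1}_{tk}$-norm of a parameter $\mathring\theta_t$ with $\|\mathring\theta_t\|_2 \leq \Radius_t$ into the clean bound $\sqrt{\lambda}\,\Radius_t$.
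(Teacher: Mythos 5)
Your proof is correct and follows essentially the same route as the paper: both reduce the claim to the Rayleigh-quotient bound $x^\top \Sigma^{-1}_{tk} x \leq \lambda_{\max}(\Sigma^{-1}_{tk})\|x\|_2^2$ and then use $\lambda_{\max}(\Sigma^{-1}_{tk}) = 1/\lambda_{\min}(\Sigma_{tk}) \leq 1/\lambda$, which is exactly the paper's Courant--Fischer argument. Your explicit derivation of $\Sigma_{tk} \succeq \lambda I$ from the PSD-ness of the rank-one sum is if anything slightly more careful than the paper, which writes $\lambda_{\min}(\Sigma_{tk}) = \lambda$ where only $\geq$ is guaranteed.
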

\begin{proof}
Unless $x = 0$, in which case the statement holds, we can write:
\begin{align}
\frac{\|x\|_{\Sigma^{-1}_{tk}}}{\| x \|_2} & = \sqrt{\frac{x^\top{\Sigma^{-1}_{tk}} x}{x^\top x}} \leq \sqrt{\lambda_{max} (\Sigma^{-1}_{tk})} = \frac{1}{\sqrt{\lambda_{min}(\Sigma_{tk})}} = \frac{1}{\sqrt{\lambda}}
\end{align}
The inequality is due to, for example, the Courant-Fischer minimax theorem (see Theorem 8.1.2 in \cite{golub2012matrix}), and $\lambda_{max},\lambda_{min}$ are the maximum and minimum eigenvalues of the matrix in parenthesis, respectively.
\end{proof}

\newpage
\section{Lower Bounds}
\label{sec:LowerBounds}
In this section we first recall the classical linear bandit ``statistical'' lower bound (in the absence of misspecification) and the recent lower bound by \cite{du2019good} regarding misspecified linear bandits. Then we embed these into an MDP to provide a reinforcement learning lower bound for our setting. At a high level the construction works at follows: the starting states is chosen from two sets of non-communicating states: in set $L$ (for linear) the agent encounters a linear bandit problem (which can be represented within our framework), that induces a $\Omega(\sum_{t=1}^H d_t \sqrt{K})$ regret; in set $M$ we use a sequence of misspecified linear bandit problems, each with misspecification $\epsilon$ (which is also the inherent Bellman error $\IBE$), and this gives an expected regret at least of order $\Omega(\sum_{t=1}^H \sqrt{d_t}\IBE K)$ for any algorithm. Since the agent is forced to go through either set of problems a lower bound $\Omega(\sum_{t=1}^H d_t \sqrt{K} + \sum_{t=1}^H \sqrt{d_t}\IBE K)$ follows.

\subsection{Statistical Lower Bound}
In this section we mention the construction that supports the lower bound of \cref{prop:LowerBoundNoMisspec}. Since our MDP framework includes bandit problems, it is sufficient to consider a linear bandit problem to achieve the result. We recall the following result (theorem 24.2 in \cite{lattimore2020bandit}) with our notation:
\begin{lemma}[Stochastic Linear Bandit Unit Ball Lower Bound]
\label{prop:BanditLowerBound}
Consider the class of linear bandit problems with reward function $\phi^\top\thetastar + \eta$ where $\eta$ is $1$ (conditionally) sub-Gaussian noise.
Assume $\frac{d^2}{48} \leq K$ where $K$ is the time elapsed and let the feature set be $\{\phi \in \R^{d} \mid \| \phi \|_2 \leq 1  \}$. Then for any algorithm there exists a parameter vector $\thetastar \in \R^{d}$ with $\|\thetastar\|^2_2=\frac{d^2}{48K} \leq 1$ such that:
\begin{align}
	K\max_{\phi}\phi^\top\thetastar - \E\Bigg[\sum_{t=1}^{K} \phi_t^\top \thetastar \Bigg] \geq d\sqrt{K}/(16\sqrt{3})
\end{align}
where $\phi_1,\dots,\phi_K$ are the features selected by the algorithm.
\end{lemma}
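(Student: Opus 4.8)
The statement is the classical minimax lower bound for stochastic linear bandits over the unit ball, so the plan is to reproduce the standard information-theoretic (change-of-measure) argument. Since our MDP framework subsumes a single-timestep linear bandit, it suffices to prove the bandit bound itself. First I would put the instantaneous regret into a squared-distance form: on the unit ball the optimal action for a parameter $\theta$ is $\theta/\|\theta\|_2$ with value $\|\theta\|_2$, and for any played $\phi$ with $\|\phi\|_2\le 1$ one has
\[
\|\theta\|_2 - \phi^\top\theta = \|\theta\|_2\Big(1 - \frac{\phi^\top\theta}{\|\theta\|_2}\Big) \ge \frac{\|\theta\|_2}{2}\,\Big\|\phi - \frac{\theta}{\|\theta\|_2}\Big\|_2^2 ,
\]
using $\|\phi - \theta/\|\theta\|_2\|_2^2 \le 2(1-\phi^\top\theta/\|\theta\|_2)$ together with $\|\phi\|_2\le 1$.

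Next I would choose the hard family $\theta_\varepsilon = \Delta\varepsilon$ indexed by sign vectors $\varepsilon\in\{-1,+1\}^d$, with $\Delta = \sqrt{d/(48K)}$, so that $\|\theta_\varepsilon\|_2^2 = \Delta^2 d = d^2/(48K)\le 1$ (this is where $K\ge d^2/48$ is used) and the optimal action is $\varepsilon/\sqrt d$, matching the normalization in the statement. Writing $A_t$ for the action in round $t$ and $R_K(\theta_\varepsilon)$ for the regret, the reduction above gives $R_K(\theta_\varepsilon) \ge \frac{\Delta\sqrt d}{2}\,\E_\varepsilon\sum_{t=1}^K\sum_{i=1}^d (A_{t,i}-\varepsilon_i/\sqrt d)^2$. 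I would then charge each coordinate term to a sign error: whenever $\mathrm{sign}(A_{t,i})\ne\varepsilon_i$ we have $(A_{t,i}-\varepsilon_i/\sqrt d)^2\ge 1/d$, so with $N_i \defeq \sum_{t=1}^K \1\{\mathrm{sign}(A_{t,i})\ne\varepsilon_i\}$ the regret is at least $\frac{\Delta}{2\sqrt d}\sum_{i=1}^d \E_\varepsilon[N_i]$.

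The core is to show that, averaged over $\varepsilon$, no algorithm can recover the signs, i.e.\ $\E_\varepsilon[N_i]$ is of order $K$. For each $i$ I pair $\varepsilon$ with the flip $\varepsilon^{(i)}$ (its $i$-th sign reversed) and apply the Bretagnolle--Huber inequality to the event ``wrong sign in coordinate $i$ at round $t$'', obtaining, with $P_\varepsilon$ the law of the trajectory under $\theta_\varepsilon$, a lower bound of $\frac12 e^{-\mathrm{KL}(P_\varepsilon,P_{\varepsilon^{(i)}})}$; summing over $t$ bounds $\E_\varepsilon[N_i]+\E_{\varepsilon^{(i)}}[N_i]$ from below by $\frac{K}{2}e^{-\mathrm{KL}(P_\varepsilon,P_{\varepsilon^{(i)}})}$. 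For unit sub-Gaussian (Gaussian) noise the divergence is
\[
\mathrm{KL}\big(P_\varepsilon,P_{\varepsilon^{(i)}}\big) = \frac12\,\E_\varepsilon\sum_{t=1}^K \big(A_t^\top(\theta_\varepsilon-\theta_{\varepsilon^{(i)}})\big)^2 = 2\Delta^2\,\E_\varepsilon\sum_{t=1}^K A_{t,i}^2 ,
\]
since $\theta_\varepsilon-\theta_{\varepsilon^{(i)}} = 2\Delta\varepsilon_i e_i$. The decisive step is to average over the coordinate index: using the per-round budget $\sum_i A_{t,i}^2 = \|A_t\|_2^2\le 1$,
\[
\frac{1}{d}\sum_{i=1}^d \frac{1}{2^d}\sum_\varepsilon \mathrm{KL}\big(P_\varepsilon,P_{\varepsilon^{(i)}}\big) \le \frac{2\Delta^2}{d}\,\E\sum_{t=1}^K \|A_t\|_2^2 \le \frac{2\Delta^2 K}{d} = \frac{1}{24}.
\]
Jensen's inequality (convexity of $x\mapsto e^{-x}$) then pulls this average inside the exponential, and reassembling gives $\frac{1}{2^d}\sum_\varepsilon R_K(\theta_\varepsilon) \ge \frac{\Delta\sqrt d\,K}{8}e^{-1/24}$, which is of order $d\sqrt K$ after substituting $\Delta=\sqrt{d/(48K)}$; since the maximum over $\varepsilon$ exceeds the average, some $\theta_\varepsilon$ attains it, and tightening the constants recovers the stated $d\sqrt K/(16\sqrt 3)$.

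The main obstacle is precisely this coordinate averaging in the change-of-measure step. A naive per-coordinate divergence bound uses only $\sum_t A_{t,i}^2\le K$, which makes each $\mathrm{KL}$ scale like $d$ and renders the exponential factor useless; it is only after averaging over $i$ and invoking the global constraint $\|A_t\|_2\le 1$ that the \emph{average} divergence becomes a universal constant, and only then does the Jensen step yield the correct $d\sqrt K$ rate rather than a weaker $\sqrt{K}$-type bound. Everything else — the regret-to-squared-distance reduction, the sign-error charging, and the final constant bookkeeping to reach $1/(16\sqrt3)$ — is routine.
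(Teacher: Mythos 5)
Your argument is correct in substance, but note first that the paper does not actually prove this lemma: it is imported verbatim as Theorem 24.2 of \cite{lattimore2020bandit}, and the text surrounding it only describes how to embed the bandit instance into an $H$-horizon MDP (that embedding belongs to the proposition that follows, not to this lemma). What you have written is essentially a reconstruction of the textbook proof of the cited result: the reduction of instantaneous regret to $\tfrac{\|\theta\|_2}{2}\bigl\|\phi-\theta/\|\theta\|_2\bigr\|_2^2$ on the unit ball, the Assouad-type family $\theta_\varepsilon=\Delta\varepsilon$ with $\Delta=\sqrt{d/(48K)}$ (which is where $K\ge d^2/48$ enters), the charge of each coordinate's squared error to a sign mistake, the Bretagnolle--Huber change of measure between $\varepsilon$ and its $i$-th flip with $\mathrm{KL}=2\Delta^2\,\E\sum_{t}A_{t,i}^2$ under Gaussian noise (a legitimate choice for a lower bound within the $1$-sub-Gaussian class), and---the step you rightly single out as decisive---averaging over the coordinate index so that the per-round budget $\|A_t\|_2^2\le 1$ makes the \emph{average} divergence a constant ($1/24$) before Jensen is applied. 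All of these steps are sound. The one point to be candid about is the constant: as assembled, your bound is $\tfrac{\Delta\sqrt{d}\,K}{8}e^{-1/24}=\tfrac{d\sqrt{K}}{32\sqrt{3}}e^{-1/24}$, which falls short of the stated $\tfrac{d\sqrt{K}}{16\sqrt{3}}$ by roughly a factor of $2$, and ``tightening the constants'' is not automatic from the argument as written (the factor $\tfrac14$ coming from averaging the two terms of each Bretagnolle--Huber pair is where the slack sits). Since the paper uses this lemma only for its $\Omega(d\sqrt{K})$ rate, the discrepancy is immaterial to everything downstream, but to literally match $16\sqrt{3}$ you should either follow the exact bookkeeping of Theorem 24.2 in \cite{lattimore2020bandit} or state the bound with your explicit, slightly worse constant.
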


The result of \cref{prop:LowerBoundNoMisspec} is a direct consequence of \cref{prop:BanditLowerBound}. In particular, consider an MDP with a linear bandit reward response with features in the unit ball at the initial state $s_{start}$ and deterministic transitions to a terminal state $s_{end}$ where only one action $a_{end} $ exists. For $t > 1$ we choose $\phi_t(s_{end},a_{end}) = 1$ (so $d_2=\dots = d_H = 1)$; no reward is present in $s_{end}$ and the transition is to $s_{end}$. This problem has dimensionality $\widetilde d = d + \sum_{t=2}^H 1 = d + H-1$, and satisfies assumption \ref{ass:MainAssumption}. The statement of the theorem follows immediately. 

\subsection{Misspecification Lower Bound}
In this section we recall the bandit lower bound recently proposed by \cite{du2019good}. We follow the presentation in the technical note by \cite{lattimore2020learning} for simplicity of presentation. We use a rescaling argument to ensure the actual rewards are in $[0,a]$ (with $a \approx \frac{1}{H}$) so that we can later stack $H$ of them while still complying with assumption \ref{ass:MainAssumption} regarding the maximum return. 

Assuming (finitely many) $A$ actions, the reward of playing action $a$ at timestep $t$ in the only possible state is synthetically summarized as the $\mu_{a}$ entry in $\mu \in \R^A$. Let the hypothesis class $\mathcal H$ be the set of all possible reward responses $\mathcal H \defeq \{\mu \in \R^A \mid \mu \in [0,a]^A \}$. We define the worst-case expected query complexity for any algorithm $\mathscr A $ to output a $\delta$-correct action (an action $i$ such that $\max_{j} \mu_{j} - \mu_{i} \leq \delta $):
\begin{align}
	c_\delta(\mathcal H) \defeq \inf_{\mathscr A} \sup_{\mu \in \mathcal H} q_\delta(\mathscr A,\mu).
\end{align}
where $q_\delta(\mathscr A,\mu)$ is the expected query complexity for $\mathscr A$ to return at least a $\delta$-suboptimal action on the problem instance identified by $\mu$.
The following can be derived by elementary probability using symmetry, where $e_i$ is the $i$-th canonical vector.
\begin{lemma}[Lemma 2.1 in \cite{lattimore2020learning}]
\label{lem:cdelta}
For any $a > 0$,
\begin{align}
	c_\delta(\{a e_1,\dots,a e_A \}) \geq \frac{A+1}{2}, \quad \forall \delta \in [0,a].
\end{align}
\end{lemma}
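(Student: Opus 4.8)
The plan is to lower bound the worst-case expected query complexity by its Bayesian version under a uniform prior over the $A$ instances, and then to count, via a symmetry argument, how many arms a correct algorithm must pull before it can certify the good one. As a preliminary observation, on instance $\mu = a e_i$ every action $j \ne i$ has suboptimality exactly $a$, so for $\delta < a$ the unique $\delta$-correct action is $i$ itself; I would therefore work in this strict regime (the boundary $\delta = a$ being degenerate, since then every action is trivially $\delta$-correct).

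First I would reduce to deterministic algorithms. For any (possibly randomized) $\mathscr A$ the elementary bound $\sup_{\mu \in \mathcal H} q_\delta(\mathscr A,\mu) \ge \frac{1}{A}\sum_{i=1}^A q_\delta(\mathscr A, a e_i)$ holds, and I would handle the randomization by conditioning on the internal randomness $R=r$, under which $\mathscr A$ becomes a deterministic algorithm $\mathscr A_r$. Then the symmetry/indistinguishability core enters: as long as $\mathscr A_r$ has not yet pulled the good arm it only ever observes the reward $0$, so its sequence of (first) pulls is a fixed ordering $\sigma^r_1,\sigma^r_2,\dots$ that does not depend on $i$. The key structural claim is that a correct algorithm must actually pull arm $i$ before returning it: if it returned an arm $b$ it had never pulled, then instance $a e_b$ and any instance $a e_c$ with $c\neq b$ also un-pulled would have produced identical observations, forcing an identical (hence, on one of them, wrong) output. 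Consequently, on instance $a e_i$ the algorithm uses at least $(\sigma^r)^{-1}(i)$ queries, so $\frac{1}{A}\sum_{i=1}^A \E\big[q_\delta(\mathscr A, a e_i)\mid R=r\big] \ge \frac{1}{A}\sum_{t=1}^A t = \frac{A+1}{2}$. Averaging over $R$ and using the displayed bound gives $\sup_{\mu}q_\delta(\mathscr A,\mu)\ge \frac{A+1}{2}$, and taking the infimum over $\mathscr A$ yields $c_\delta(\{a e_1,\dots,a e_A\})\ge \frac{A+1}{2}$.

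The step I expect to be the main obstacle is making the ``must pull the good arm'' claim airtight, namely ruling out costless identification of the single remaining arm by elimination: a pure deduction after exhausting $A-1$ arms would shave a $\frac{1}{A}$ term and reduce the bound to $\frac{A+1}{2}-\frac{1}{A}$, breaking the clean constant. This is exactly where the precise observation/query model must be invoked, so that $\delta$-correctness is certified only through an observation of the returned arm (equivalently, noise on the returned arm prevents a zero-cost deduction). I would state this modeling assumption explicitly up front; once it is in place, the indistinguishability argument of the previous paragraph applies verbatim, and the remaining ingredients (the averaging reduction and the arithmetic sum $\sum_{t=1}^A t=\frac{A(A+1)}{2}$) are routine.
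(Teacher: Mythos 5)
Your proof is correct and follows exactly the symmetry-plus-averaging argument the paper alludes to: the paper itself gives no proof of this lemma, deferring to the cited source with only the remark that it ``can be derived by elementary probability using symmetry,'' and your reduction to deterministic algorithms, the fixed pull ordering under all-zero observations, and the arithmetic average $\frac{1}{A}\sum_{t=1}^{A}t=\frac{A+1}{2}$ are precisely that argument made explicit. The elimination caveat you flag is genuine rather than pedantic --- in a noiseless query model where the returned arm need not have been queried, a uniformly random pull order achieves worst-case expected query count $\frac{A+1}{2}-\frac{1}{A}$, so the clean constant really does require the convention you state (the output must be certified by a query of that arm) --- but the $\frac{1}{A}$ slack is immaterial to the downstream $\Omega(\sqrt{d}\,\IBE K)$ regret bound, so either resolution of the model suffices for the paper's purposes.
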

Next, notice that bigger hypothesis classes can only increase the sample complexity:
\begin{lemma}
\label{lem:BiggerIsBigger}
If $U \subset V$ then $c_{\delta}(U) \leq c_\delta (V)$.
\end{lemma}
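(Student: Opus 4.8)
The plan is to exploit the monotonicity of the supremum together with the monotonicity of the infimum. The quantity $c_\delta(\mathcal H) = \inf_{\mathscr A} \sup_{\mu \in \mathcal H} q_\delta(\mathscr A, \mu)$ is assembled from two order-respecting operations, and the claim is simply that enlarging the index set of the inner supremum can only raise the value, and that this persists after taking the outer infimum.

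First I would fix an arbitrary algorithm $\mathscr A$ and note that, since $U \subset V$, every instance $\mu \in U$ is also a member of $V$. Hence the set $\{q_\delta(\mathscr A, \mu) : \mu \in U\}$ is contained in $\{q_\delta(\mathscr A, \mu) : \mu \in V\}$, and the supremum over a subset is bounded by the supremum over the superset:
$$\sup_{\mu \in U} q_\delta(\mathscr A, \mu) \le \sup_{\mu \in V} q_\delta(\mathscr A, \mu).$$
Crucially, this inequality holds for \emph{every} algorithm $\mathscr A$, with the feasible class of algorithms being identical in both cases. I would then take the infimum over $\mathscr A$ on both sides, using the elementary fact that a pointwise domination $f(\mathscr A) \le g(\mathscr A)$ implies $\inf_{\mathscr A} f \le \inf_{\mathscr A} g$. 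This yields
$$c_\delta(U) = \inf_{\mathscr A}\sup_{\mu \in U} q_\delta(\mathscr A, \mu) \le \inf_{\mathscr A}\sup_{\mu \in V} q_\delta(\mathscr A, \mu) = c_\delta(V),$$
which is the desired conclusion.

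There is no genuine obstacle in this argument: the only point requiring a moment of care is that the outer infimum ranges over the same set of admissible algorithms for both $U$ and $V$, so that the pointwise bound in $\mathscr A$ transfers to a bound on the infima without any change in the feasible set. Everything else is the standard observation that $\sup$ respects set inclusion and $\inf$ respects pointwise domination; no property of the specific query-complexity functional $q_\delta$ beyond its being well-defined on each instance is needed.
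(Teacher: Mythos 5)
Your argument is correct: the monotonicity of $\sup$ under set inclusion followed by the pointwise monotonicity of $\inf$ is exactly the reasoning this lemma rests on, and the paper itself states the lemma without proof precisely because this is the standard one-line justification. Nothing is missing.
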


We have the following consequence of the Johnson-Lindenstrauss lemma (here $\epsilon'$ is a just an intermediate quantity we define, it is not the accuracy $\epsilon$ of the predictor as in \cite{lattimore2020learning}; we define such accuracy later):
\begin{lemma}[Lemma 3.1 from \cite{lattimore2020learning}]
\label{lem:JL}
For any $\epsilon' > 0$ and $d \in [A]$ such that $d \geq \ceil{\frac{8\ln(A)}{(\epsilon')^2}}$ there exists $\Phi \in \R^{A \times d}$ with unique rows such that 
(here $\Phi[i,:]$ indicates the $i$-th row of $\Phi$) for all $i \neq j$:
\begin{align}
\label{eqn:JLproperties}
	\| \Phi[i,:] \|_2 =1  \quad \text{and} \quad |\Phi[i,:]^\top \Phi[j,:]| \leq \epsilon'.
\end{align}
\end{lemma}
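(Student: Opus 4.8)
The plan is to prove this by the probabilistic method, which is precisely the mechanism underlying the Johnson--Lindenstrauss lemma referenced in the preceding sentence. I would take the $A$ rows of $\Phi$ to be independent random vectors $x_1,\dots,x_A$, each drawn uniformly from the unit sphere $S^{d-1}\subset\R^d$. With this construction every row satisfies $\|x_i\|_2=1$ deterministically, so the first requirement in \cref{eqn:JLproperties} holds automatically and only the near-orthogonality condition $|x_i^\top x_j|\leq\epsilon'$ remains to be arranged for all $i\neq j$.

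The key ingredient is a tail bound for a single pairwise inner product. By rotational invariance of the uniform measure on the sphere, conditioned on $x_j$ the quantity $x_i^\top x_j$ has the same distribution as one coordinate of a uniform point on $S^{d-1}$, which obeys the standard spherical concentration estimate
\begin{align*}
	\Pro\( |x_i^\top x_j| \geq t \) \leq 2 e^{-d t^2/2}, \qquad t \geq 0.
\end{align*}
Setting $t=\epsilon'$ and taking a union bound over the fewer than $A^2/2$ unordered pairs $\{i,j\}$ yields
\begin{align*}
	\Pro\( \exists\, i \neq j : |x_i^\top x_j| > \epsilon' \) \leq \frac{A^2}{2}\cdot 2 e^{-d(\epsilon')^2/2} = A^2 e^{-d(\epsilon')^2/2}.
\end{align*}

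I would then check that the stated dimension condition forces this probability strictly below $1$: from $d \geq 8\ln(A)/(\epsilon')^2$ we get $d(\epsilon')^2/2 \geq 4\ln A$, hence $A^2 e^{-d(\epsilon')^2/2} \leq A^{-2} < 1$. Therefore, with positive probability the sampled vectors meet all the pairwise constraints at once, so at least one realization $\Phi\in\R^{A\times d}$ with the desired properties must exist. Uniqueness of the rows comes for free in the regime of interest: since $\epsilon'<1$, two identical rows would have inner product $1>\epsilon'$, contradicting the near-orthogonality just established, so all rows are distinct. The only genuine obstacle is fixing the exact constant in the spherical tail (equivalently, invoking JL with an explicit constant); whether one computes the spherical-cap measure directly via $\Pro(|x_i^\top x_j|\geq t)\leq 2(1-t^2)^{(d-1)/2}$ or cites an off-the-shelf sub-Gaussian bound for coordinates of the sphere, the generous factor-of-two slack between the required $2\ln A$ and the available $4\ln A$ comfortably absorbs the resulting $\pm 1$ discrepancies in the exponent.
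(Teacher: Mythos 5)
Your proof is correct. Note that the paper itself gives no argument for this lemma at all: it simply imports it as Lemma 3.1 of \citet{lattimore2020learning}, so your probabilistic-method construction is supplying a proof where the paper only has a citation. Your route — sampling the $A$ rows i.i.d.\ uniformly from $S^{d-1}$, using rotational invariance to reduce each pairwise inner product to a single coordinate of a uniform spherical point, applying the cap-measure tail bound $\Pro(|x_i^\top x_j|\geq t)\leq 2e^{-dt^2/2}$, and union-bounding over the $\binom{A}{2}$ pairs — is the standard way such near-orthogonal systems are built, and it is essentially equivalent to (and arguably more transparent than) invoking Johnson--Lindenstrauss on the canonical basis of $\R^A$, which is how the cited source frames it. The arithmetic checks out: $d\geq 8\ln(A)/(\epsilon')^2$ gives exponent $4\ln A$ against the $2\ln A$ consumed by the union bound, so the failure probability is at most $A^{-2}<1$, and you are right that this factor-of-two slack absorbs any $(d-1)$-versus-$d$ discrepancy in the exponent of whichever concentration inequality one quotes. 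Your handling of row uniqueness is also fine: for $\epsilon'<1$ it follows from near-orthogonality, and for $\epsilon'\geq 1$ the inner-product condition is vacuous by Cauchy--Schwarz while the rows are almost surely distinct anyway (the case $A=1$ being trivial since there are no pairs). The only thing your argument does not use is the hypothesis $d\in[A]$, but that constraint plays no role in the existence claim and is only there for the lemma's downstream use.
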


We define the hypothesis class defined by $\Phi$ and perturbed in the hypercube $[-\epsilon,+\epsilon]^A$:
\begin{align}
\label{eqn:H}
\mathcal H_{\Phi,a}^{\epsilon} \defeq \{ (\Phi\theta + c) \in \R^A \mid \; \theta \in \R^d, \| \theta \|_2 \leq a, \; c \in [-\epsilon,\epsilon]^A\}.
\end{align}

Combining \cref{lem:cdelta,lem:BiggerIsBigger,lem:JL} gives (here $\epsilon$ is the ``\emph{approximation error}''):
\begin{lemma}[Slight generalization of proposition 3.2 in \cite{lattimore2020learning}]
\label{prop:rescaled_lowerbound}
For any $\epsilon>0$ and $d\in[A]$
there exists $\Phi \in \R^{A \times d}$ with rows of unitary $2$-norm such that $c_{\delta}(\mathcal H^{\epsilon}_{\Phi,a}) \geq \frac{A+1}{2}$ for any $\delta \in [0,a]$ with $a = \epsilon \sqrt{\frac{d-1}{8\ln(A)}}$. 
\end{lemma}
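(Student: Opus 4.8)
The plan is to realize the $A$ scaled canonical reward vectors $a e_1,\dots,a e_A$ as members of the misspecified class $\mathcal H^\epsilon_{\Phi,a}$, and then chain \cref{lem:JL} $\to$ set inclusion $\to$ \cref{lem:BiggerIsBigger} $\to$ \cref{lem:cdelta}. Concretely, since \cref{lem:cdelta} already gives $c_\delta(\{a e_1,\dots,a e_A\}) \geq \frac{A+1}{2}$ for every $\delta\in[0,a]$, it suffices to exhibit a feature matrix $\Phi$ for which $\{a e_1,\dots,a e_A\}\subseteq \mathcal H^\epsilon_{\Phi,a}$; the monotonicity of query complexity in \cref{lem:BiggerIsBigger} then transfers the bound to the larger class.

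First I would fix the incoherence level. Setting $\epsilon' \defeq \sqrt{8\ln(A)/(d-1)}$ makes the existence hypothesis of \cref{lem:JL} hold, because $8\ln(A)/(\epsilon')^2 = d-1$ and hence $\ceil{8\ln(A)/(\epsilon')^2} = d-1 \leq d$. \cref{lem:JL} then produces $\Phi\in\R^{A\times d}$ whose rows all have unit $2$-norm and satisfy $|\Phi[i,:]^\top\Phi[j,:]|\leq \epsilon'$ for $i\neq j$, which is exactly the ``rows of unitary $2$-norm'' property demanded by the statement.

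The core verification is the inclusion. For a fixed index $i$ I would take $\theta \defeq a\,\Phi[i,:]^\top$ and $c \defeq a e_i - \Phi\theta$, then check the two constraints defining $\mathcal H^\epsilon_{\Phi,a}$ in \cref{eqn:H}. Since the rows are unit-norm, $\|\theta\|_2 = a\,\|\Phi[i,:]\|_2 = a$, so $\theta$ is admissible. For the corruption vector, the $i$-th coordinate is $c_i = a - a\|\Phi[i,:]\|_2^2 = 0$, while for $j\neq i$ the incoherence bound gives $|c_j| = a\,|\Phi[j,:]^\top\Phi[i,:]| \leq a\epsilon' = \epsilon$, using the choice $\epsilon' = \epsilon/a$ that is equivalent to $a = \epsilon\sqrt{(d-1)/(8\ln A)}$. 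Hence $c\in[-\epsilon,\epsilon]^A$ and $a e_i = \Phi\theta + c \in \mathcal H^\epsilon_{\Phi,a}$; as $i$ was arbitrary, $\{a e_1,\dots,a e_A\}\subseteq\mathcal H^\epsilon_{\Phi,a}$.

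The main obstacle is purely the simultaneous tuning of $\epsilon'$: it must be small enough that the misspecification budget $a\epsilon'\leq\epsilon$ is met coordinate-wise (this is what caps $a$), yet not so small that the Johnson--Lindenstrauss dimension requirement $d\geq \ceil{8\ln(A)/(\epsilon')^2}$ is violated. The value $\epsilon' = \sqrt{8\ln(A)/(d-1)}$ is precisely the one that saturates the budget with equality ($a\epsilon' = \epsilon$) while keeping the required embedding dimension at $d-1\leq d$, which is why the $d-1$ appears in the definition of $a$ rather than $d$. Once the inclusion is established, the remaining steps are immediate: \cref{lem:BiggerIsBigger} followed by \cref{lem:cdelta} yields $c_\delta(\mathcal H^\epsilon_{\Phi,a}) \geq c_\delta(\{a e_1,\dots,a e_A\}) \geq \frac{A+1}{2}$ for all $\delta\in[0,a]$, as claimed.
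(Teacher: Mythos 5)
Your proposal is correct and follows essentially the same route as the paper's proof: fix $\epsilon' = \sqrt{8\ln(A)/(d-1)}$, use \cref{lem:JL} to get the incoherent unit-norm rows, set $\theta = a\Phi[i,:]$ to realize each $a e_i$ inside $\mathcal H^{\epsilon}_{\Phi,a}$ up to a corruption bounded by $a\epsilon' = \epsilon$, and conclude via \cref{lem:BiggerIsBigger} and \cref{lem:cdelta}. Your explicit check that the Johnson--Lindenstrauss dimension requirement is satisfied with this $\epsilon'$ is a small but welcome addition the paper leaves implicit.
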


\begin{proof}
Fix $\epsilon' = \sqrt{\frac{8\ln A}{d-1}}$ and let $\Phi \in \R^{A \times d}$ be the matrix given in \cref{lem:JL} (as function of $\epsilon'$). Denote $\theta = a\Phi[i,:]$ for a positive $a \in \R$. Lemma \ref{lem:JL} (in particular, \cref{eqn:JLproperties}) ensures
\begin{align*}
|\Phi[i,:]^\top \theta | = a |\Phi[i,:]^\top \Phi[i,:] | & = a \\
|\Phi[j,:]^\top \theta | = a |\Phi[j,:]^\top \Phi[i,:] | & \leq a\epsilon' \quad j \neq i.	
\numberthis{\label{eqn:bothofthem}}
\end{align*}
Therefore, fix any index $i \in [A]$, which identifies a canonical vector $e_i \in \R^A$, i.e., a vector with a $1$ in position $i$ and $0$ elsewhere. We have that $\theta = a\Phi[i,:]$ satisfies $\theta \in \R^d, \|\theta \|_2 = a$. In addition there exists $c \in [-a\epsilon',a\epsilon']^A$ such that $\Phi\theta + c = a e_i$ (by leveraging \cref{eqn:bothofthem}). Therefore $a e_i \in \mathcal H_{\Phi,a}^{a \epsilon'}$. In other words, there exists a matrix $\Phi$, function of $\epsilon'$ and an appropriate $\theta$, which depends on $i$, such that $\Phi \theta$ can approximately represent the (scaled) canonical vector $ae_i$ up to an additive error of order $a\epsilon' \defeq \epsilon$.
As explained, we can set $\epsilon' = \sqrt{\frac{8\ln A}{d-1}}$ to obtain this; therefore $\epsilon = a \epsilon' =  a \sqrt{\frac{8\ln A}{d-1}}$ yields $a = \epsilon \sqrt{\frac{d-1}{8\ln(A)}}$. Since we have reasoned for an arbitrary $i$, we have that $\{e_1,\dots,e_A \} \subset  \mathcal H_{\Phi,a}^{\epsilon}$. At this point invoke \cref{lem:cdelta,lem:BiggerIsBigger} to obtain $c_{\delta}(\mathcal H^{\epsilon}_{\Phi,a}) \geq \frac{A+1}{2}$ for $\delta \in [0,a]$.
\end{proof}

\paragraph{Remark on regret}
By the symmetry of the problem, a fraction of $(1-\frac{1}{A})$ queries in expectation must be allocated to suboptimal actions with reward $ = 0$, equalling a loss of $a$ compared to the best rewarding (and only rewarding) action. 
This implies that, up $K \leq \frac{A+1}{2}$ (say $A = 2K-1$) where $K$ is the total number of bandit rounds, we must have (for $A \geq 2$):
\begin{align}
	\forall \mathscr A, \quad  \E \Big[ \textsc{Regret}(\mathscr A)  \Big] \geq  \underbrace{(1-\frac{1}{2K-1})}_{\substack{\text{expected fraction of} \\ \text{non-optimal arms pulled}}} \times \underbrace{a}_{\text{loss of any suboptimal arm}} \times \underbrace{K}_{\text{\# rounds}} \geq  \frac{1}{2} \times \( \epsilon \sqrt{\frac{d-1}{8\ln(A)}} \) \times K = \Omega(\sqrt{d}\epsilon K).
	\end{align}
	
Therefore we have the proved the following proposition (notice that this is a slight generalization of \cite{du2019good}, in that we allow $\epsilon$ to be smaller than $\frac{1}{\sqrt{d}}$ and study the best achievable regret).
\begin{restatable}[Misspecified Linear Bandit Regret Lower Bound]{prop}{MissLinBandit}
	There exists a feature map $\phi : \mathcal A \rightarrow \R^d$ that defines a misspecified linear bandit class $\mathcal M$ such that every bandit instance in that class 
	has reward response:
	\begin{align}
		\mu_a = \phi_a^\top\theta + c_a
	\end{align}
	for any action $a$ (here $c_a \in [0,\epsilon]$ is the deviation from linearity and $\mu_a \in [0,1]$) and such that the expected regret of any algorithm on at least a member of the class (for $A \geq 3$) up to round $K$ is $ \Omega(\sqrt{d} \epsilon K)$.
\end{restatable}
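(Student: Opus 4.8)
The plan is to obtain this proposition as an almost immediate corollary of \cref{prop:rescaled_lowerbound}, turning the best-arm-identification query-complexity bound there into a cumulative-regret bound. First I would fix the number of actions $A$ and the dimension $d \le A$, pick the intended misspecification level $\epsilon$, and invoke \cref{prop:rescaled_lowerbound} to produce a feature matrix $\Phi \in \R^{A \times d}$ with unit-norm rows and the guarantee $c_\delta(\mathcal H^{\epsilon}_{\Phi,a}) \ge \frac{A+1}{2}$ for every $\delta \in [0,a]$, where $a = \epsilon \sqrt{(d-1)/(8\ln A)}$. The key structural fact extracted inside the proof of \cref{prop:rescaled_lowerbound} is that the $A$ scaled canonical reward vectors $\{a e_1,\dots,a e_A\}$ all lie in $\mathcal H^{\epsilon}_{\Phi,a}$: each instance $a e_i$ is a misspecified linear reward $\phi_a^\top \theta + c_a$ with $\theta = a\,\Phi[i,:]$ and $|c_a| \le \epsilon$, so these $A$ ``one-hot'' bandit instances form the hard subclass $\mathcal M$ of the statement, with feature map $\phi : a \mapsto \Phi[a,:]$.

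Next I would convert the query-complexity bound into a regret bound by the symmetry reduction sketched in the Remark preceding the statement. Consider the uniform prior over the $A$ instances $\{a e_i\}$; by the permutation symmetry of the construction, no algorithm can concentrate its pulls on the (instance-dependent) optimal arm faster than the identification lower bound permits. Concretely, since identifying even a $\delta$-optimal arm ($\delta < a$) requires at least $\frac{A+1}{2}$ queries on some instance, choosing $A = 2K-1$ places the horizon $K$ exactly at this threshold, so within $K$ rounds the algorithm cannot have localised the rewarding arm on at least one instance. On a one-hot instance every suboptimal pull costs exactly $a$ (the optimal arm pays $a$, all others pay $0$), whence a constant fraction --- at least $(1-\frac{1}{A})$ in expectation --- of the $K$ pulls is wasted, giving $\E[\textsc{Regret}(\mathscr A)] \ge \tfrac12 \cdot a \cdot K$. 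Substituting $a = \epsilon\sqrt{(d-1)/(8\ln A)}$, which is $\widetilde\Omega(\sqrt d\,\epsilon)$ up to the logarithmic factor suppressed in the stated bound, yields $\E[\textsc{Regret}(\mathscr A)] = \Omega(\sqrt d\,\epsilon K)$ on some member of $\mathcal M$.

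The main obstacle is the rigorous passage from a pure-exploration query-complexity lower bound to a cumulative-regret lower bound, since $c_\delta$ measures identification effort whereas regret measures accumulated loss. I would make this precise with a standard regret-to-identification reduction: if an algorithm enjoyed expected regret $o(aK)$ on every instance, then a Markov/averaging argument over its $K$ played arms would exhibit a $\delta$-optimal arm using only $O(K) < \frac{A+1}{2}$ effective queries, contradicting \cref{prop:rescaled_lowerbound}; this is what forces the constant fraction of wasted pulls left heuristic in the Remark. A secondary point to verify is the boundedness constraint $\mu_a \in [0,1]$ demanded by the statement: this requires $a = \epsilon\sqrt{(d-1)/(8\ln A)} \le 1$, so I would restrict to $\epsilon$ small enough (the regime $\epsilon \lesssim 1/\sqrt d$ of interest), under which the one-hot rewards $a e_i$ satisfy $\mu_a \in [0,a] \subseteq [0,1]$ and comply with the sub-Gaussian and range requirements needed to later stack $H$ copies in \cref{thm:MasterLowerBound}.
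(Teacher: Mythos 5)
Your proposal follows essentially the same route as the paper: build the hard class from \cref{prop:rescaled_lowerbound} via the Johnson--Lindenstrauss matrix $\Phi$, observe that the one-hot instances $a e_i$ lie in $\mathcal H^{\epsilon}_{\Phi,a}$, set $A = 2K-1$, and convert the query-complexity bound to regret via the symmetry argument, yielding $\Omega(\sqrt{d}\,\epsilon K)$. Your explicit regret-to-identification reduction and the check that $\mu_a \in [0,1]$ only tighten details that the paper's ``Remark on regret'' leaves informal; the substance is identical.
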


\subsection{Lower Bound Construction}
\label{sec:LowerBoundConstruction}
In the two prior sections we recalled bandit lower bounds for estimation in noisy and misspecified linear bandits. Combining the two yields the result for our setting.

More precisely we construct a class of MDPs where each MDP comprises two parts: the noisy ``linear'' part of the MDP, denoted with $L$, that contains a one-shot bandit problem at timestep $t=1$ and no reward for later timesteps $t=2,\cdots,H$ which complies with linearity and gives the statistical lower bound; the ``misspecification'' part, denoted with $M$ which deviates from linearity by $\epsilon$ and therefore induces the misspecification lower bound. Since the starting state is arbitrary (and it can even be chosen adversarially) then alternating the starting state from the $L$ to the $M$ part of the MDP gives the result. More precisely, let there be two possible starting states $s^{M}_1$ and $s^{L}_1$, and let the starting state be $s^{M}_1$ ($s^{L}_1$, respectively) every other episode.

\subsubsection{Misspecified Chain - Rewards and Dynamics}
\label{sec:MisspecifiedChainDynamics}
   If $s^M_{1}$ is the starting state then the agents enters into the ``misspecified'' area of the MDP, made of linear bandits with a similar construction as in \cite{lattimore2020learning,du2019good}. In particular, we have the states $\{ s^M_t \}_{t=1,\dots,H}$. Any action from a generic $s^M_t$ gives a deterministic transition to the state indexed $s^M_{t+1}$, for any $t \in [H]$. There are $A$ actions in every state. The rewards upon taking action $a$ in timestep $t \in {2,\dots,H}$ is $\mu_{ta} \in [0,\frac{1}{2H}] \subset \R^A$ but is $0$ in $s^{M}_{1}$.
   
 \subsubsection{Misspecified Chain - Featurization}
\label{sec:MisspecifiedChainFeaturization} 
   The feature extractor for $t=1$ is $\phi_1(s^M_{1},\cdot) = [\overbrace{0,\cdots,0}^{\overline d},1/2]$ which has dimension $\overline d+1$; there is only one action available at the starting state. 
   For $t>2$ the feature extractor is $\phi_t(s_t^M,a) = \frac{1}{2} [\Phi[a,:],1]$, of dimension $d_t$. The construction is such that $\Phi$ is used for the reward response, and the bias is used to represent the next-state value function.
      
   Here  $\Phi$ is the matrix described in \cref{lem:JL} (i.e., with $2$-norm of the rows of value $1$). Notice that $ \forall a, \; \| \phi_t(s_t^M,a) \|_2 \leq \Lphi = 1$ satisfies our hypothesis on the feature bound.
  
\subsubsection{Linear Bandits - Rewards and Dynamics}
\label{sec:LinearBanditDynamics}
When starting in state $s^{L}_{1}$, the first step is a linear bandit problem in terms of reward response (in particular with response $\phi(s^{L}_1,a)^\top [{\thetastar}^{,L},0] + \eta$ with $1$-subGaussian noise and a unique transition to the state $s_{2}^L$. In particular, the feature $\phi(s^{L}_1,\cdot)$ has the last component equal to zero. Later states (so for $t = \{2,\cdots,H \}$) have no rewards and have deterministic transition to from $s_t^L$ to $s_{t+1}^L$. 

\subsubsection{Linear Bandits - Featurization}
\label{sec:LinearBanditFeaturization}
The features in $s_{1}^L$ have the first $\overline d$ components on a $\overline d$ dimensional hypersphere, as per the construction in Theorem 24.2 of \cite{lattimore2020bandit}  but divided by $2$, and the last component (the ``bias'') is set equal to $1/2$; the fact $\|\phi(s^{L}_1,a)\|_2 \leq \Lphi = 1$ follows. At later timesteps (i.e., $t \geq 2$) we set $\phi^L_t(s^L_{t},\cdot) = 0 \in \R^{d_t}$.

\subsubsection{Computation of Inherent Bellman Error}
\label{sec:Computationof InherentBellmanError}
Define the value function classes, for each $t \in [H]$:
\begin{align}
	\mathcal Q_t & = \Big\{ (s,a) \mapsto \phi_t(s,a)^\top \theta_t \quad \textrm{such that} \quad |\phi_t(s,a)^\top \theta_t |\leq \frac{H-t+1}{H} \Big\} \\
	\mathcal V_t & = \Big\{ (s) \mapsto \max_a \phi_t(s,a)^\top \theta_t \quad \textrm{such that} \quad |\phi_t(s,a)^\top \theta_t |\leq \frac{H-t+1}{H} \Big\} 
\end{align}
Notice that at any timestep $t \in [H]$ the only state possible is $s^M_{t}$ or $s^L_{t}$ depending on whether the starting state was $s^M_{1}$ or $s^L_{1}$, respectively. 
\paragraph{Inherent Bellman Error at Timestep $t=1$}
Notice that the model is linear at timestep $t=1$: for any $V_2 \in \mathcal V_2$ we can write:
\begin{align}
	(\T_1 V_{2})(s^L_1,a) & =  \phi_1(s^L_1,a)^\top [{\thetastar}^{,L},0]\\
	(\T_1 V_{2})(s^M_1,a) & = V_{2}(s^M_2).
\end{align}
Notice that that $\overline \theta_1 = [{\thetastar}^{,L},2V_{2}(s^L_2)]$ can precisely represent such backup:
\begin{align}
	\phi_1(s^L_1,a)^\top [{\thetastar}^{,L},2V_{2}(s^L_2)] = \phi_1(s^L_t,a)[1:\overline d]^\top {\thetastar}^{,L} + 0 * 2V_{2}(s^M_2)  & = \phi_1(s_1^L,a)^\top [{\thetastar}^{,L},0] \\
	\phi_1(s^M_1,a)^\top [{\thetastar}^{,L},2V_{2}(s^M_2)] = 0^\top {\thetastar_1}^{,L} + \frac{1}{2} * 2V_{2}(s^M_2) & = V_{2}(s^M_2).
\end{align}
Finally, notice that $\|\overline \theta_2\|^2_2 = \|[{\thetastar}^{,L} ,2V_{2}(s^L_2) ]\|^2_2 = \|{\thetastar}^{,L}\|_2^2 
+ (2V_{2}(s^L_2))^2 \leq 1+2 \leq \overline d$ since $\| {\thetastar}^{,L}\|_2 \leq \frac{\overline d^2}{48K_L} $ as the construction is the same as in \cref{prop:BanditLowerBound} (here $K_L$ is the number of episodes spent in section $L$ of the MDP). The condition $\overline d \geq 3$ will be put as assumption on \cref{thm:MasterLowerBound}.

\paragraph{Inherent Bellman Error at Timestep $t>1$}

We show that the inherent Bellman error is $\IBE = \frac{\epsilon}{2H}$ (this will be the value of the inherent Bellman error for the full MDP). For any timestep $t = 2,\dots,H$ (so excluding $t=1$) and $V_{t+1} \in \mathcal V_{t+1}$:
\begin{align}
	(\T_t V_{t+1})(s^L_t,a) & = 0 \\
	(\T_t V_{t+1})(s^M_t,a) & = \mu_{ta} + V_{t+1}(s^M_{t+1}).
\end{align}
where $\mu_{ta} \in H^{\epsilon}_{\Phi,a}$ with $a = \epsilon \sqrt{\frac{d-1}{8\ln(A)}}$. 

The feature matrix (for all the $A$ actions) is $\frac{1}{2}[\Phi,\1]$ in state $s^{M}_{t}$ and $[0,\dots,0]$ for the only action in state $s^{L}_{t}$. Using the above display, we can compute the $\overline \theta_t$ that minimizes the largest of the two following quantities (to compute a bound on $\IBE$):
\begin{align}
	\| [0,\cdots,0]^\top \overline \theta_t - \underbrace{V_{t+1}(s^L_{t+1})}_{=0} \|_\infty \\
		\| \frac{1}{2}[\Phi,\1] \overline \theta_t - \(\mu_{t} + V_{t+1}(s^M_{t+1})\1 \)\|_\infty &
\end{align}
The first is $=0$ for all choices of $\overline \theta_t$ and $\overline \theta_{t+1}$. For the second, use the triangle inequality:
\begin{align}
	\| \frac{1}{2}[\Phi,\1] \overline \theta_t - \(\mu_{t} + V_{t+1}(s^M_{t+1})\1 \)\|_\infty \leq \| \frac{1}{2}\Phi\overline \theta_t[1:d_t-1] - \mu_{t}\|_\infty + \|\frac{1}{2}\1\overline \theta_t[d_t] -  V_{t+1}(s^M_{t+1})\1 \|_\infty 
\end{align}
The second term can be made $0$ by choosing $\theta_t[d_t] = 2 V_{t+1}(s^M_{t+1}) \in [0,1]$. The first term can be made $\leq \epsilon$ (with $\epsilon$ to be defined in few steps). This implies that $\IBE = \epsilon$; here $\epsilon$ is an upper bound on the approximation error, and in particular, $\epsilon$ can be chosen to satisfy\footnote{Notice that the last component of the feature is reserved for the bias} $\frac{1}{2H} = \epsilon \sqrt{\frac{d_t-2}{8\ln(A)}}$ by choosing $a = \frac{1}{2H}$ in  \cref{prop:rescaled_lowerbound}. In other words, $\IBE \leq \frac{1}{2H} \sqrt{\frac{8\ln(A)}{d_t-2}}$.

\subsubsection{Regret Calculation}
Assume that in odd-numbered episodes the starting state is $s_1^L$ and in even-numbered episodes the starting states is $s_{1}^M$. Then \cref{prop:BanditLowerBound} ensures that the expected regret up to episode $K$ is at least $\Omega(\overline d \sqrt{K})$ (in particular we choose $\overline d = d_1 = \sum_{t=2}^H d_t$). 
At the same time, the $M$ part of the chain contains $H$ misspecified problems (which can be chosen independently) and the expected regret must be $\Omega(\frac{K}{H})$ in each of the bandit (assuming $K \leq \frac{A+1}{2}$ and $A \geq 2$) using \cref{prop:rescaled_lowerbound} with $a = \frac{1}{2H}$ and the remark on regret below such proposition. 
Since the misspecified bandits can be chosen independently, the regret up to episode $K$ on section $M$ of the MDP is $\Omega(H \times \frac{K}{H})$. Given the relation $a = \epsilon \sqrt{\frac{d_t-2}{8\ln(A)}}$ to satisfy in \cref{prop:rescaled_lowerbound} with $a = \frac{1}{2H}$, we can write the regret in section $M$ of the MDP  as $\Omega(H \times \frac{K}{H}) = \Omega(\sum_{t=2}^H \sqrt{d_t} \times \frac{K}{\sqrt{d_t}H}) = \Omega(\sum_{t=2}^H \sqrt{d_t} \times \epsilon K)$. 
However, we established $\epsilon = \IBE$, so an expected regret $ \Omega(\sum_{t=2}^H \sqrt{d_t} \times \IBE K) $ follows. Since the dimensions $d_t$'s are arbitrary, we can choose $\sum_{t=2}^H d_t = d_1 = \overline d$ for simplicity. This leads to the following theorem:
\subsubsection{Theorem Statement}
Let $M({\thetastar}^{,L},\mu_{2},\dots,\mu_{H})$ be the MDP described in  \cref{sec:LowerBoundConstruction}: this MDP is function of a certain feature representation $\phi$ as described in \cref{sec:LinearBanditFeaturization,sec:MisspecifiedChainFeaturization}, where ${\thetastar}^{,L}$ is the parameter for the linear bandit response of \cref{sec:LinearBanditDynamics} and the $\mu_{t}$'s are the reward response vectors for the misspecified subpart of the MDP as described in \cref{sec:MisspecifiedChainDynamics}. Next, consider the set $\mathcal M$ of MDPs (which depends on the horizon $H$ and misspecification $\epsilon$) defined by the MDP just explained with varying parameters:
$$
\mathcal M \defeq \{M({\thetastar}^{,L},\mu_{2},\dots,\mu_{H}) \mid \|{\thetastar}^{,L} \|_2 \leq 1, \mu_{t} \in \mathcal H^{\epsilon}_{\Phi_t,a}, t = 2,\dots,H \}
$$ 
with $a = \epsilon \sqrt{\frac{d_t-2}{8\ln(A)}}$ for any $t = 2,\dots,H$ and $\mathcal H^{\epsilon}_{\Phi_t,a}$ as described in \cref{eqn:H}.  As computed in \cref{sec:Computationof InherentBellmanError} we have that $\IBE = \epsilon$ for any MDP in the class. Notice that that every MDP in the class is defined through certain feature maps $\phi_1,\dots,\phi_H$, which are shared among all MDPs in the class. We have proved the following:
\LowerBound

\newpage
\section{Misspecified Contextual Linear Bandit}
\label{sec:Bandits}
We briefly verify \cref{cor:ConMissLinBand}. In particular, assumption \ref{ass:MainAssumption} is satisfied since the maximum return is $1$ in this setting; the features are certainly $\| \phi(\cdot,\cdot) \|_2 \leq 1$ by assumption; the rewards are $1$ sub-Gaussian by assumption and there are no transitions;
$\| \thetastar \|_2 \leq \sqrt{d}$ and finally $\mathcal B \defeq \{\theta \in \R^d \mid \|\theta \|_2 \leq \sqrt{d} \}$.
Then the optimization program that \Alg{} solves reads (after simplification and removal of the constraint $\overline \theta \in \mathcal B$):

	\begin{align*}
	& \max_{\substack{\overline \xi,\widehat \theta,\overline \theta}} \quad \max_a \phi(s_{k},a)^\top \Bigg[ \underbrace{\(\sum_{i=1}^{k-1}\phi^\top_{i}\phi_{i}^\top + \lambda I \)^{-1}\sum_{i=1}^{k-1}\phi_{i} r_{i}}_{\widehat \theta} + \overline \xi \Bigg]  \quad \textrm{subject to}  \\
	& \quad  \|\overline \xi\|_{\Sigma_{k}} \leq \sqrt{\alpha_{k}}
		\end{align*}
	It is possible to further simplify the objective, by ``aligning'' $\overline \xi$ to $\phi(s_k,a)$, obtaining:
	\begin{align*}
	& \max_{\substack{\overline \xi,\widehat \theta,\overline \theta}} \quad \max_a \Bigg[\phi(s_{k},a)^\top \(\sum_{i=1}^{k-1}\phi^\top_{i}\phi_{i}^\top + \lambda I \)^{-1}\sum_{i=1}^{k-1}\phi_{i} r_{i}  +\|\phi(s_k,a) \|_{\Sigma^{-1}_{k}}\underbrace{\|\overline \xi\|_{\Sigma_{k}}}_{\defeq \sqrt{\alpha_{k}}}\Bigg]
	\end{align*}
	which is computationally tractable (depending on the size of the action space). This coincides with the classical \textsc{LinUCB} algorithm with $\sqrt{\alpha_k} = \widetilde O(\sqrt{d})$ exploration parameter when $\IBE = 0$; otherwise, the exploration parameter becomes $\sqrt{\alpha_k} = \widetilde O(\sqrt{d} + \sqrt{k}\IBE)$. In other words, we need to add $\sqrt{k}\IBE$ to compensate for misspecification. In fact, it is possible to prove that \textsc{LinUCB} can fail in misspecified linear bandit, unless the $\sqrt{k}\IBE$ correction is made to the exploration parameter $\sqrt{\alpha_k}$. Finally, such correction partially appeared in \cite{jin2020provably,zanette2020frequentist} for a different setting, but here we use a tighter projection argument to save a $\sqrt{d}$ factor (our projection argument can be applied to their analyses as well).

\end{document}